\def\tsc#1{\csdef{#1}{\textsc{\lowercase{#1}}\xspace}}
\newtheorem{definition}{Definition}[section]
\newtheorem{theorem}{Theorem}[section]
\newtheorem{prop}{Proposition}[section]
\newtheorem{remark}{Remark}[section]
\newtheorem{corollary}{Corollary}[section]
\Crefname{algocf}{Algorithm}{Algorithms}
\Crefname{lemma}{Lemma}{Lemmata}
\crefname{definition}{Def.}{Def.s}
\crefname{example}{Ex.}{Ex.s}
\Crefname{prop}{Proposition}{Propositions}
\crefname{prop}{Prop.}{Prop.s}
\renewcommand{\arraystretch}{1.5}
\newcolumntype{d}{>{\columncolor{gray!10}}c}
\newcolumntype{m}{>{\columncolor{gray!10}}l}
\newenvironment{iiilist}{\begin{inparaenum}[\itshape(i)\upshape]}{\end{inparaenum}}
\def\BeginSmEq{\par\vspace{-0.7\baselineskip}\small}
\def\EndSmEq{\normalsize} 
\newacronym[\glslongpluralkey={Business Processes}]{bp}{BP}{Business Process}
\newacronym{wf}{WF}{workflow}
\newacronym{bpi}{BPI}{Business Process Intelligence}
\newacronym{bpm}{BPM}{Business Process Management}
\newacronym{bpms}{BPMS}{Business Process Management System}
\newacronym{bpmn}{BPMN}{Business Process Model and Notation}
\newacronym{soa}{SOA}{Service-Oriented Architecture}
\newacronym{kpi}{KPI}{Key Performance Indicator}
\newacronym{wfms}{WfMS}{Workflow Management System}
\newacronym{pn}{PN}{Petri net}
\newacronym{cpn}{CPN}{colored Petri net}
\newacronym{xes}{XES}{eXtensible Event Stream}
\newacronym{po}{PO}{Partial Order}
\newacronym{tl}{TL}{Temporal Logic}  \newacronym{ltl}{LTL}{Linear Temporal Logic}
\newacronym{fol}{FOL}{First Order Logic}
\def\ltlf {\ensuremath{\mathrm{LTL}_f}}
\newacronym{ltlf}{\ltlf}{Linear Temporal Logic on Finite Traces}
\newacronym{mso}{MSO}{Monadic Second Order Logic}
\newacronym{rex}{RE}{regular expression}
\def\Autom {\ensuremath{A}}
\newacronym[symbol=\Autom,longplural={finite state automata}]{fsa}{FSA}{finite state automaton}
\newacronym[symbol=\Autom,longplural={deterministic finite automata}]{dfa}{DFA}{deterministic finite automaton}
\newacronym[symbol=\Autom,longplural={deterministic finite state automata}]{dfs}{DFS}{deterministic finite state automaton}
\newacronym[symbol=\Autom,longplural={nondeterministic finite automata}]{nfa}{NFA}{nondeterministic finite automaton}
\def\Declare {\textsc{Declare}}
\newglossaryentry{declare}{name={\Declare},description={a declarative process modelling language and notation}}
\newglossaryentry{task}{name={task},description={the non-divisible, elementary activity}}
\def\lettera {\ensuremath{a}}
\newcommand{\taskize}[1] {\ensuremath{\scalebox{0.85}{\normalfont\textsf{#1}}}}
\def\taska {\taskize{a}}
\def\taskb {\taskize{b}}
\def\taskc {\taskize{c}}
\def\taskd {\taskize{d}}
\def\taske {\taskize{e}}
\def\taskf {\taskize{f}}
\def\tasky {\taskize{y}}
\newglossaryentry{promod}{name={process specification},description={the specification of a process}
}
\def\DeclaModel {\ensuremath{\mathcal{S}}}
\newglossaryentry{declamodel}{name={declarative \glsentrytext{promod}},description={\glsentrydesc{promod}, expressed by means of constraints},
	symbol={\DeclaModel}
}
\newglossaryentry{mindeclamodel}{name={discovered \glsentrytext{declamodel}},description={\glsentrydesc{declamodel}, discovered from an \glsentrytext{evtlog}},
	symbol={\DeclaModel}
}
\newglossaryentry{minerful}{name={MINERful},description={the declarative process discovery algorithm \glsentrytext{minerful}}}
\newacronym{mf}{Mf}{\gls{minerful}}
\newglossaryentry{minerfulVac}{name={MINERful Vacuity Checker},description={\glsentrytext{minerful}} algorithm with semantical vacuity detection}
\newacronym{mfv}{Mf-Vchk}{\gls{minerfulVac}}
\newacronym{dmm}{DMM}{Declare Maps Miner}
\newglossaryentry{decmapmin}{name={Declare Maps Miner},description={the declarative process discovery algorithm \glsentrytext{decmapmin}}}
\newacronym{dmm2}{DM2}{Declare Miner 2}
\newglossaryentry{decmapmin2}{name={Declare Miner 2},description={improvement of \glsentrytext{decmapmin} algorithm}}
\newglossaryentry{janus}{name={Janus},description={the declarative process discovery algorithm \glsentrytext{janus}}}
\def\LogAlph {\ensuremath{\Sigma}}
\newglossaryentry{logalph}{
	name={log alphabet},description={the process alphabet, as reflected in a log},symbol={\LogAlph}}
\def\Evt {\ensuremath{e}}
\newglossaryentry{evt}{
	name={event},description={a record of an instantaneous fact during the process enactment},symbol={\Evt}}
\def\EvtTrace {\ensuremath{t}}
\newglossaryentry{evttrace}{
	name={trace},description={a sequence of \glsplural{evt}},symbol={\EvtTrace}}
\def\EvtLog {\ensuremath{L}}
\newglossaryentry{evtlog}{
	name={event log},description={a collection of \glstext{evttrace}s},symbol={\EvtLog}}
\def\Subsum {\ensuremath{\sqsubseteq}}
\newglossaryentry{subsum}{name={subsumption},description={is subsumed by},symbol={\Subsum}}
\newglossaryentry{relaxop}{name={relaxation},description={relaxation operator, climbing the \glsentrytext{subsum} hierarchy}}
\newglossaryentry{acti}{name={activation},description={the activation of a constraint}}
\newglossaryentry{target}{name={target},description={target}}
\def\Cns {\ensuremath{C}}
\newglossaryentry{con}{name={constraint},description={a temporal business rule},
	symbol={\Cns}
}
\newglossaryentry{welldef}{name={well-defined},description={of \glsentrytext{con}s for which a finite non-empty trace exists that complies with them}
}
\newglossaryentry{cnspar}{name={parameter},description={a parameter of a \glsentrytext{con}},
}
\newglossaryentry{cnsarity}{name={arity},description={number of parameters of a \glsentrytext{con}},
}
\newglossaryentry{exi}{
	name={existence},
	description={constrains single activities}
}
\newglossaryentry{exicon}{
	name={\glsentrytext{exi} \glsentrytext{con}},
	description={constrains single activities}
}
\newglossaryentry{posicon}{
	name={position \glsentrytext{con}},
	description={constrains the position of activities}
}
\newglossaryentry{cardicon}{
	name={cardinality \glsentrytext{con}},
	description={limits the number of activities}
}
\newglossaryentry{rela}{
	name={relation},
	description={constraint on pairs of activities}
}
\newglossaryentry{relacon}{
	name={\glsentrytext{rela} \glsentrytext{con}},
	description={constraint on pairs of activities}
}
\newglossaryentry{unirelacon}{
	name={unidirectional \glsentrytext{relacon}},
	description={constraint on pairs of activities, out of which one is the activation, as the other is the target}
}
\newglossaryentry{unifwrelacon}{
	name={\glsentrytext{fw}-\glsentrytext{unirelacon}},
	description={constraint on pairs of activities, having the first parameter as the activation, and the second one as the target}
}
\def\FwCns {\ensuremath{\mathit{fw}}}
\newglossaryentry{fw}{
	name={forward},
	description={forward constraint},
	symbol={\FwCns}
}
\newglossaryentry{unibwrelacon}{
	name={\glsentrytext{bw}-\glsentrytext{unirelacon}},
	description={constraint on pairs of activities, having the second parameter as the activation, and the first one as the target}
}
\def\BwCns {\ensuremath{\mathit{bw}}}
\newglossaryentry{bw}{
	name={backward},
	description={backward constraint},
	symbol={\BwCns}
}
\newglossaryentry{corelacon}{
	name={coupling \glsentrytext{con}},
	description={constraint based on pairs of relation constraints}
}
\newglossaryentry{nega}{
	name={negative},
	description={of a constraint, that negates a coupling relation constraint}
}
\newglossaryentry{negacon}{
	name={\glsentrytext{nega} \glsentrytext{con}},
	description={constraint negating a coupling relation constraint}
}
\def\CnsTemp {\ensuremath{\mathcal{C}}}
\newglossaryentry{cnstemp}{name={template},description={the template of a \glsentrydesc{con}},
	symbol={\CnsTemp}}
\def\CnsTempPrime {\ensuremath{\CnsTemp'}}
\def\CnsTempSecond {\ensuremath{\CnsTemp''}}
\newcommand{\CnsTempFunc}[2] {\ensuremath{\CnsTemp(#1\ifthenelse{\equal{#2}{}}{}{,#2})}}
\newcommand{\CnsTempFuncPrime}[2] {\ensuremath{\CnsTempPrime(#1\ifthenelse{\equal{#2}{}}{}{,#2})}}
\newcommand{\CnsTempFuncSecond}[2] {\ensuremath{\CnsTempSecond(#1\ifthenelse{\equal{#2}{}}{}{,#2})}}
\newglossaryentry{cnstype}{name={type},description={the type of a \glsentrydesc{cnstemp}}}
\def\CnsTempRep {\ensuremath{\mathfrak{C}}}
\newglossaryentry{cnsrep}{name={repertoire},description={the repertoire of \glsentrytext{declare} \glsentrytext{temp}s},
	symbol={\CnsTempRep}}
\newglossaryentry{cnsuniv}{name={\glsentrytext{con}s universe},description={the set of \glsentrytext{declare} \glsentrytext{temp}s over the process alphabet reflected in the log}}
\def\CnsInstRelation {\ensuremath{\Gamma}}
\newglossaryentry{cnsinst}{name={\glsentrytext{cnstemp} instantiation relation},description={the assignment relation instantiating \glsentrytext{cnstemp}s into \glsentrytext{con}s, namely assigning \glsentrytext{task}s to \glsentrytext{cnspar}s.},
	symbol={\CnsInstRelation}}
\newcommand{\CnsInterpFun} {\ensuremath{\mathscr{I}}}
\newglossaryentry{cnsinterp}{
	name={interpretation function},description={function interpreting a \glsentrytext{declamodel}},
	symbol={\CnsInterpFun}}
\def\RelaConTemp {\ensuremath{\mathcal{R}}}
\newglossaryentry{relacontemp}{name={relation template},description={the template of a relation \glsentrydesc{con}},
	symbol={\RelaConTemp}}
\def\ExiConTemp {\ensuremath{\mathcal{E}}}
\newglossaryentry{exicontemp}{name={existence template},description={the template of an existence \glsentrydesc{con}},
		symbol={\ExiConTemp}}
\def\Supp {\ensuremath{\sigma}}
\newglossaryentry{support}{name={support},description={the support of a \glsentrydesc{con}},
	symbol={\Supp}}
\def\Conf {\ensuremath{\kappa}}
\newglossaryentry{conf}{name={confidence},description={the confidence level of a \glsentrydesc{con}},
	symbol={\Conf}}
\def\IntF {\ensuremath{\iota}}
\newglossaryentry{intf}{name={interest factor},description={the interest factor of a \glsentrydesc{con}},
	symbol={\IntF}}
\def\EvaluationFunctor {\ensuremath{\eta}}
\newglossaryentry{evaluation}{
	name={evaluation},description={evaluation of a \glsentrytext{con} or a \glsentrytext{declamodel} over a \glsentrytext{evttrace} or an \glsentrytext{evtlog}},
	symbol={\EvaluationFunctor}}
\def\RespTxt {Response}
\def\PrecTxt {Precedence}
\newcommand{\Resp}[2] {\ensuremath{\textsc{\RespTxt}(#1,#2)}}
\newcommand{\Preced}[2] {\ensuremath{{\textsc{\PrecTxt}}(#1,#2)}}
\def\MultiSetFunctor {\ensuremath{\mathbb{M}}}
\newglossaryentry{multiset}{
	name={multi-set},description={a collection possibly containing multiple units of the same element},
	symbol={\MultiSetFunctor}}
\def\PowerSetFunctor {\ensuremath{\mathbb{P}}}
\newglossaryentry{powerset}{
	name={power-set},description={the collection of sets generated by all combinations without repetition of elements in a set},
	symbol={\PowerSetFunctor}}
\def\LTLuntil {\ensuremath{\;\mathop{\mathrm{\mathbf{U}}}\;}}
\def\AutomInitState {\ensuremath{s_0}}
\newglossaryentry{fsainit}{name={initial state},description={initial state of the automaton},
	symbol=\AutomInitState}
\newacronym{rnf}{RNF}{Reactive Normal Form}
\def\RNFnext {\ensuremath{\bigcirc}}
\def\RNFprev {\ensuremath{\ominus}}
\def\RNFfut {\ensuremath{\Diamond}}
\newcommand*{\BarDiamond}{\tikz[
  inner sep=0pt,
  shorten >=.5ex,
  shorten <=.5ex,
  line cap=round,
  baseline=(c.base),
  ]\draw
  (0,0) node (c) {$\Diamond$}
  ($(c.west)+(0,0)$) -- ($(c.east)+(0,0)$);}
\def\RNFpast {\ensuremath{\BarDiamond}}
\def\RNFglobFut {\ensuremath{\Box}}
\def\RNFglobPast {\ensuremath{\boxminus}}
\def\RNFuntil {\LTLuntil}
\def\RNFsince {\ensuremath{\;\mathop{\mathrm{\mathbf{S}}}\;}}
\def\RNFstart {\ensuremath{\EvtTrace_{\textrm{Start}}}}
\def\RNFend {\ensuremath{\EvtTrace_{\textrm{End}}}}
\def\RNFtrue {\ensuremath{\textsl{True}}}
\def\RNFfalse {\ensuremath{\textsl{False}}}
\newacronym{rf}{RCon}{Reactive Constraint}
\newacronym{sepautset}{sep.aut.set}{separated automata set}
\def\RfImplication {\ensuremath{\,\begin{tikzpicture}[x=0.2ex,y=0.2ex,baseline={([yshift=-0.5ex]current bounding box.center)}]\draw (0,0) rectangle (4,4);\draw[-{Stealth[scale=0.75]}] (4,2) -- (14,2);\end{tikzpicture}\:}}
 \def\TruthDegree {\ensuremath{\zeta}}
\newglossaryentry{truthdegree}{name={interestingness degree},description={the degree of interestingness of a \glsentrydesc{evttrace} to a \glsentrydesc{con}.},
	symbol={\TruthDegree}}
\def\RfActivation {\ensuremath{\alpha}}
\def\RfActivationFormula {\ensuremath{\varphi_{\alpha}}}
\newcommand{\RfActivationFormulaVar}[1] {\ensuremath{\varphi_{\alpha_#1}}}
\def\RfObject {\ensuremath{\tau}}
\def\RfObjectFormula {\ensuremath{\varphi_\tau}}
\newcommand{\RfObjectFormulaVar}[1] {\ensuremath{\varphi_{\tau_#1}}}
\newacronym{ltlp}{LTLp}{Linear-time Temporal Logic with Past}
\def\ltlpf {\ensuremath{\mathrm{LTLp}_f}}
\newacronym{ltlpf}{\ltlpf}{Linear-time Temporal Logic with Past on Finite Traces}
\newacronym{ldl}{LDL}{Linear Dynamic Logic}
\newacronym{ldlf}{LDL$_f$}{Linear Dynamic Logic over Finite Traces}
\def\RfFormula {\ensuremath{\Psi}}
\def\separationDegree {\textit{D}}
\newglossaryentry{sepdegree}{name={separation degree},description={number of triples of a \glsentrytext{sepautset}},
	symbol={\separationDegree}}
\newglossaryentry{sat}{name={satisfaction},description={verification of a formula on a structure}}
\newglossaryentry{fulfilment}{name={fulfilment},description={satisfaction of a constraint on a trace in which the activation occurs}}
\newglossaryentry{activator}{name={activator},description={the event that signals the occurrence of the activation in the trace}}
\def\instant {\ensuremath{i}}
\def\EvtTraceLength {\ensuremath{n}}
\def\LTLfpFormula {\ensuremath{\varphi}}
\newglossaryentry{mirrorimage}{name={mirror image},description={temporal formula obtained by replacing all its operators with their mirror images}}
\def\dnk {\ensuremath{\textsf{{\color{gray}x}}}}
\begin{document}
\let\WriteBookmarks\relax
\def\floatpagepagefraction{1}
\def\textpagefraction{.001}

\shorttitle{Measurement of Rule-based LTLf Declarative Process Specifications}    

\shortauthors{Cecconi, Barbaro, Di Ciccio, Senderovich}  

\title [mode = title]{Measuring Rule-based LTLf Process Specifications: \\ A Probabilistic Data-driven Approach}

\author[1]{Alessio Cecconi}[orcid=0000-0001-5730-6332]
\ead{alessio.cecconi.phd@gmail.com}

\affiliation[1]{organization={Vienna University of Economics and Business},
	addressline={Welthandelsplatz 1}, 
	city={Vienna},
postcode={1020}, 
country={Austria}}

\author[2]{Luca Barbaro}[orcid=0000-0002-2975-5330]
\ead{luca.barbaro@uniroma1.it}
\author[2]{Claudio {Di Ciccio}}[orcid=0000-0001-5570-0475]
\ead{claudio.diciccio@uniroma1.it}
\cormark[1]

\affiliation[2]{organization={Sapienza University of Rome, Department of Computer Science},
	addressline={Viale Regina Elena 295}, 
	city={Rome},
postcode={00161}, 
country={Italy}}

\author[3]{Arik Senderovich}[orcid=0000-0003-4728-8024]
\ead{sariks@yorku.ca}

\affiliation[3]{organization={York University, School of Information Technology (ITEC)},
	addressline={4700 Keele St}, 
	city={Toronto},
postcode={M3J 1P3}, 
country={Canada}}

\cortext[1]{Corresponding author}

\begin{abstract}
	Declarative process specifications define the behavior of processes by means of rules based on \acrfull{ltlf}.
	In a mining context, these specifications are inferred from, and checked on, multi-sets of runs recorded by information systems (namely, event logs).
	To this end, being able to gauge the degree to which process data comply with a specification is key.
	However, existing mining and verification techniques analyze the rules in isolation, thereby disregarding their interplay.
	In this paper, we introduce a framework to devise probabilistic measures for declarative process specifications.
	Thereupon, we propose a technique that measures the degree of satisfaction of specifications over event logs.
	To assess our approach, we conduct an evaluation with real-world data, evidencing its applicability for diverse process mining tasks, including discovery, checking, and drift detection.
\end{abstract}

\begin{highlights}
	\item Interestingness measures for rule-based process specifications are introduced.
	\item A probabilistic estimation framework of single rules and entire specifications is proposed.
	\item Measures of specifications differ from the aggregation of the measures for their individual rules.
	\item Measures exhibit different sensitivity to process behavior changes over time.
\end{highlights}

\begin{keywords}
	Linear temporal logic \sep
	Declarative process mining \sep
	Specification mining \sep
	Probabilistic modeling \sep 
	Statistical estimation
\end{keywords}

\maketitle

\section{Introduction}
\label{sec:intro}
\todo[inline]{Reviews and response sheet here: \url{https://docs.google.com/spreadsheets/d/1b200_wsfQbuS8iAAEL5VfCeDZO8B60zq8up17V5khY4/}}
The declarative specification of a process allows users and designers to norm and control its behavior through rules.
These rules consist of temporal logic formulae (such as \ltlf) that are verified against recorded runs of the process-aware systems in an event log, to check their compliance with the behavioral properties it must guarantee.
This automated checking task shows wide adoption in multiple areas of computer science, including process mining~\cite{DBLP:conf/spin/PesicBA10,DBLP:conf/ijcai/GiacomoFMP21}, planning~\cite{DBLP:conf/aaai/BacchusK96,DBLP:conf/aaai/CamachoTMBM17},  and software engineering~\cite{DBLP:conf/kbse/LemieuxPB15,DBLP:journals/ase/CaoTLL18}.
Rule-based specifications allow for the definition of system and process behavior focusing on core constraints that must be satisfied, leaving other execution details to knowledge- and context-based decisions of the actors~\cite{DBLP:books/sp/22/CiccioM22} with a so-called ``open-ended'' paradigm~\cite{DBLP:conf/cidm/MaggiMA11}. This aspect makes them particularly suitable to scenarios in which workflow flexibility is key~\cite{DiCiccio.etal/JoDS2015:KiPs,Aalst.etal/CSRD09:DeclarativeWFsBalancing}, as in the healthcare domain~\cite{DBLP:journals/eswa/RovaniMLA15,process_mining_for_healthcare} with the specification of clinical practice guidelines~\cite{DBLP:journals/widm/GuzzoRV22,Mannhardt2016Sepsis}.

Despite the increasing interest in this challenge, we observe that
a fundamental problem remains unaddressed.
Measuring the extent to which traces adhere to the admissible behavior in terms of \emph{specifications}, or sets of rules, is still a problem that leaves ample margins for investigation.

Let us consider an example inspired by~\cite{anomaly_detection_clinical_pathways} on the clinical pathway for the treatment of unstable angina. A declarative process specification $\DeclaModel$, e.g., consists of two rules, $\Psi_1$ and $\Psi_2$. Rule $\Psi_1$ states that ``Whenever a specific combination of medications is prescribed, it should be preceded by an electrocardiogram (ECG) and a cardiac enzyme test.'' Rule $\Psi_2$ indicates that ``If a patient experiences severe chest pain, an immediate administration of sublingual nitroglycerin will follow.'' Confidence is the ratio of events in a log that satisfy the consequent (i.e., the \emph{target}, like the ECG and cardiac enzyme test in $\Psi_1$), given the satisfaction of the antecedent (i.e., the \emph{activator}, like the occurrence of severe chest pain in $\Psi_2$).

Suppose the confidence of $\Psi_1$ and $\Psi_2$ are \SI{100}{\percent} and \SI{50}{\percent}, respectively. What is the confidence of $\DeclaModel=\{\Psi_1, \Psi_2\}$?
Considering the confidence of a single formula consisting of the conjunction of all rules (e.g., $\Psi_1 \wedge \Psi_2$) may be too coarse grained, since 
violating a single rule in one event or in multiple ones would lead to the same result of violating the whole specification for that trace.\todo{changed the wrong example with a proper one.} Similarly, aggregating measures over multiple rules (as in \cite{Cecconi2021Measuring}) may be misleading. Let the activator of $\Psi_1$, e.g., occur \num{100} times in the log, always leading to the satisfaction of the rule (Confidence \SI{100}{\percent}), and the activator of $\Psi_2$ occur twice, leading to the violation of the rule once (Confidence \SI{50}{\percent}). Consequently, there will be one violation out of \num{102} occurrences of the activators. Yet, the average confidence of the two rules will be \SI{75}{\percent}.
A lack of a proper framework to gauge the degree to which specifications are satisfied by, or emerging from, recorded data, hinders
their adoption for the discovery, checking and drift-detection of system and process behavior in rule-based settings. In turn, this issue slows down the evolution of dedicated techniques even where those tasks may turn out to be pivotal, like in the highly dynamic and flexible context of clinical pathways.

To overcome this issue, we propose a new approach adapting and extending the concept of \acrfull{rf}, originally proposed in~\cite{Cecconi2018Interestingness}, and the measurement framework for single declarative rules expressed as \acrshortpl{rf}~\cite{Cecconi2021Measuring}. \acrshortpl{rf} are rules expressed in an if--then fashion (like $\Psi_1$ and $\Psi_2$), namely a pair of {\ltlf} formulae one of which is the activator (if) and the other is the target (then).
\acrshortpl{rf} cover the full spectrum of declarative process specification languages such as \Declare~\cite{DBLP:conf/spin/PesicBA10} as any {\ltlf} formula can be translated into an \acrshort{rf}~\cite{Cecconi2018Interestingness}.
Equipped with this notion, we propose a measurement framework that takes inspiration from classical association rule mining~\cite{Geng2006Interestingness} to assess whether, and in how far, process specifications consisting of {{\ltlf}-based} rules expressed in an ``if--then'' fashion are satisfied by a trace. Our approach is rooted in probability theory and statistical inference. Specifically, in order to provide a non-binary interpretation for specification measurements, we model events of satisfaction and violation of formulae by traces (and logs of traces) using probability theory, and derive corresponding maximum-likelihood estimators for these probabilistic models. Moreover, we show that these estimators can be computed in polynomial time. 

To the best of our knowledge, this work is the first to tackle and solve the problem of devising well-defined measures for \emph{entire} declarative specifications consisting of multiple rules. To tackle this problem, we move from an ad-hoc counting approach to a sound probabilistic theory based on maximum likelihood estimation. Finally, we conduct an evaluation of our approach on real-world data with its software prototype implementation. This article is an extended and revised version of our previous paper~\cite{Cecconi2022Measurement}. The advancements particularly the following aspects in particular:
\begin{itemize}
	\item We enhanced the theoretical part of our contribution providing a more comprehensive analysis of the probabilistic approach along with detailed proofs for each proposition and theorem (\cref{sec:estimation,sec:approach}); 
 in addition, we added an appendix for the basic concepts from probability theory used in the paper (\cref{appendix:probability});
	\item We introduced a discussion in which we state the general properties and behaviors of the probability estimators (\cref{sec:approach-discussion});
	\item We added a preliminary section about interestingness measures. It introduces these measures prior to their use thus clarifying the goals of the probabilistic approach and the value of the final outcome (\cref{sec:measures-intro});
	\item We extend the quantitative evaluation by using a larger collection of datasets (\cref{sec:evaluation-discovery-vs-specification-measurements});
	\item We present a new and extensive use case where we measure the impact of drifts on declarative specifications. This part demonstrates how measured changes in the entire specification can signal shifts and anomalies in an ongoing process. (\cref{sec:use-case-process-drift}).
\end{itemize}

In the remainder of this paper,
\cref{sec:ltlf-logs,sec:rf,sec:measures-intro} formalize the background notions our work is based upon: {\ltlf} and its interpretation on event logs, \acrshortpl{rf}, rule-based declarative process specifications, and interestingness measures.
\Cref{sec:estimation} lays the foundations of our probabilistic theory, upon which the evaluation and measurement of declarative process specifications are based upon as described in \cref{sec:approach}.
We report on the evaluation of our implemented prototype on real-world data in \cref{sec:evaluation}. \Cref{sec:related-works} analyzes the research in the literature that relates to our investigation. Finally, \cref{sec:conclusions} concludes the paper with remarks on future work.

\section{Event logs and \acrfull{ltlf}}
\label{sec:ltlf-logs}
\begin{table*}[tbp]
	\begin{adjustbox}{width=0.95\textwidth,center=\textwidth}
		\begin{tabular}{clccccccccc}
\toprule
\textbf{Log} & \textbf{Evaluation} & \makecell[c]{\textbf{\acrshort{rf}} \\ \textbf{/ Specification}} & \makecell[c]{\textbf{$ P $ of \acrshort{rf}} \\ / \textbf{$ P $ of \DeclaModel}}  & \textbf{$ P $ of act.} & \textbf{$ P $ of target} & \textbf{Support} & \textbf{Confidence} & \textbf{Recall} & \textbf{Specificity} & \textbf{Lift} \\ 
\midrule

$t_1 = \langle \taska,\taskb,\taskc,\taskd,\taskb,\taskc,\taske,\taskc,\taskb \rangle$ & \makecell[l]{$\langle$\dnk, \dnk, 1, \dnk, \dnk, 1, \dnk, 1, \dnk$\rangle$\\$\langle$\dnk, \dnk, \dnk, 1, \dnk, \dnk, \dnk, \dnk, \dnk$\rangle$\\$\langle$\dnk, \dnk, 1, 1, \dnk, 1, \dnk, 1, \dnk$\rangle$} & \makecell[c]{~$ \RfFormula_1 = \taskc\RfImplication\RNFpast\taska $\\~$ \RfFormula_2 = \taskd\RfImplication\RNFfut\taske $\\~$\DeclaModel=\{\RfFormula_1,\RfFormula_2\}$} & \makecell[c]{~1.00\\~1.00\\~1.00}  & \makecell[c]{~0.33\\~0.11\\~0.44}  & \makecell[c]{~1.00\\~0.78\\~0.89}  & \makecell[c]{~0.33\\~0.11\\~0.44}  & \makecell[c]{~1.00\\~1.00\\~1.00}  & \makecell[c]{~0.33\\~0.14\\~0.50}  & \makecell[c]{~0.00\\~0.25\\~0.20}  & \makecell[c]{~1.00\\~1.29\\~1.13}  \\[1.2em]

$t_2 = \langle \taskb,\taskd,\taska,\taskb,\taskb,\taskd,\taske,\taskd,\taskc \rangle$ & \makecell[l]{$\langle$\dnk, \dnk, \dnk, \dnk, \dnk, \dnk, \dnk, \dnk, 1$\rangle$\\$\langle$\dnk, 1, \dnk, \dnk, \dnk, 1, \dnk, 0, \dnk$\rangle$\\$\langle$\dnk, 1, \dnk, \dnk, \dnk, 1, \dnk, 0, 1$\rangle$} &  \makecell[c]{~$ \RfFormula_1 = \taskc\RfImplication\RNFpast\taska $\\~$ \RfFormula_2 = \taskd\RfImplication\RNFfut\taske $\\~$\DeclaModel=\{\RfFormula_1,\RfFormula_2\}$} &  \makecell[c]{~1.00\\~0.67\\~0.75} &  \makecell[c]{~0.11\\~0.33\\~0.44} &  \makecell[c]{~0.78\\~0.78\\~0.78} &  \makecell[c]{~0.11\\~0.22\\~0.33} &  \makecell[c]{~1.00\\~0.67\\~0.75} &  \makecell[c]{~0.14\\~0.29\\~0.43} &  \makecell[c]{~0.25\\~0.17\\~0.20} &  \makecell[c]{~1.29\\~0.86\\~0.96} \\[1.2em]

$t_3 = \langle \taskc,\taskd,\taska,\taskb,\taskc,\taske,\taskb,\taskc,\taskb,\taskc \rangle$ &  \makecell[l]{$\langle$0, \dnk, \dnk, \dnk, 1, \dnk, \dnk, 1, \dnk, 1$\rangle$\\$\langle$\dnk, 1, \dnk, \dnk, \dnk, \dnk, \dnk, \dnk, \dnk, \dnk$\rangle$\\$\langle$0, 1, \dnk, \dnk, 1, \dnk, \dnk, 1, \dnk, 1$\rangle$} &  \makecell[c]{~$ \RfFormula_1 = \taskc\RfImplication\RNFpast\taska $\\~$ \RfFormula_2 = \taskd\RfImplication\RNFfut\taske $ \\~$\DeclaModel=\{\RfFormula_1,\RfFormula_2\}$ } &  \makecell[c]{~0.75\\~1.00\\~0.80} &  \makecell[c]{~0.40\\~0.10\\~0.50} &  \makecell[c]{~0.80\\~0.60\\~0.70} &  \makecell[c]{~0.30\\~0.10\\~0.40} &  \makecell[c]{~0.75\\~1.00\\~0.80} &  \makecell[c]{~0.38\\~0.17\\~0.57} &  \makecell[c]{~0.17\\~0.44\\~0.40} &  \makecell[c]{~0.94\\~1.67\\~1.14} \\[1.2em]

$t_4 = \langle \taskb,\taskc,\taska,\taskc,\taske,\taska \rangle$ &  \makecell[l]{$\langle$\dnk, 0, \dnk, 1, \dnk, \dnk$\rangle$\\$\langle$\dnk, \dnk, \dnk, \dnk, \dnk, \dnk$\rangle$\\$\langle$\dnk, 0, \dnk, 1, \dnk, \dnk$\rangle$} &  \makecell[c]{~$ \RfFormula_1 = \taskc\RfImplication\RNFpast\taska $\\~$ \RfFormula_2 = \taskd\RfImplication\RNFfut\taske $ \\~$\DeclaModel=\{\RfFormula_1,\RfFormula_2\}$ } &  \makecell[c]{~0.50\\~NaN \\~0.50} &  \makecell[c]{~0.33\\~0.00\\~0.33} &  \makecell[c]{~0.67\\~0.83\\~0.50} &  \makecell[c]{~0.17\\~0.00\\~0.17} &  \makecell[c]{~0.50\\~NaN \\~0.50} &  \makecell[c]{~0.25\\~0.00\\~0.33} &  \makecell[c]{~0.25\\~0.17\\~0.50} &  \makecell[c]{~0.75\\~NaN \\~1.00} \\[1.2em]

$t_5 = \langle \taskb,\taskb,\taskb \rangle$ &  \makecell[l]{$\langle$\dnk, \dnk, \dnk$\rangle$\\$\langle$\dnk, \dnk, \dnk$\rangle$\\$\langle$\dnk, \dnk, \dnk$\rangle$} &  \makecell[c]{~$ \RfFormula_1 = \taskc\RfImplication\RNFpast\taska $\\~$ \RfFormula_2 = \taskd\RfImplication\RNFfut\taske $ \\~$\DeclaModel=\{\RfFormula_1,\RfFormula_2\}$ } &  \makecell[c]{~NaN\\~NaN\\~NaN} &  \makecell[c]{~0.00\\~0.00\\~0.00} &  \makecell[c]{~0.00\\~0.00\\~0.00} &  \makecell[c]{~0.00\\~0.00\\~0.00} &  \makecell[c]{~NaN\\~NaN\\~NaN} &  \makecell[c]{~NaN\\~NaN\\~NaN} &  \makecell[c]{~1.00\\~1.00\\~1.00} &  \makecell[c]{~NaN\\~NaN\\~NaN} \\[2.0em]

\makecell[c]{~$L = \{t_1^{17}, t_2^{6}, t_3^{5}, t_4^{12}, t_5^{5}\}$\\~$ |L|=45 $}   &   &  \makecell[c]{~$ \RfFormula_1 = \taskc\RfImplication\RNFpast\taska $\\~$ \RfFormula_2 = \taskd\RfImplication\RNFfut\taske $ \\~$\DeclaModel=\{\RfFormula_1,\RfFormula_2\}$ } &  \makecell[c]{~0.80\\~0.85\\~0.81} &  \makecell[c]{~0.27\\~0.10\\~0.37} &  \makecell[c]{~0.75\\~0.69\\~0.65} &  \makecell[c]{~0.22\\~0.08\\~0.30} &  \makecell[c]{~0.80\\~0.85\\~0.81} &  \makecell[c]{~0.29\\~0.12\\~0.46} &  \makecell[c]{~0.27\\~0.33\\~0.44} &  \makecell[c]{~1.07\\~1.24\\~1.25} \\[1.2em]
   
\bottomrule
	NaN values denote a division by \num{0}.  & & & & & & & & & & \\
\end{tabular}
 	\end{adjustbox}
	\caption[Checking of a specification against a log.]{Measurements of \acrshortpl{rf} $\RfFormula_1 = \taskc\RfImplication\RNFpast\taska$, and $\RfFormula_2 = \taskd\RfImplication\RNFfut\taske$, and of specification $\DeclaModel = \{ \RfFormula_1, \RfFormula_2 \}$ on a log \vspace{-1ex}}
	\label{tab:running:example}
	\vspace{-5ex}
\end{table*}
In this paper, we are interested in the checking of specifications against collections of traces reporting on multiple executions of the process. As runs can recur, we formalize such structure
as a multi-set of traces, namely an event log.
\begin{definition}[Log]\label{def:eventlog}
	Given a finite alphabet of propositional symbols $\LogAlph$, we name as \emph{event} an assignment for the symbols in $\LogAlph$ and as \emph{trace} a finite sequence of events. An \emph{event log} (or \emph{log} for short) is a finite multi-set of traces $L = \{t_1^{j_1},\ldots,t_m^{j_m}\}$ of cardinality $|L| = \sum_{i=1}^m{j_i}$.
\end{definition}
\noindent
For example, \cref{tab:running:example} presents a log $ L = \{t_1^{17}, t_2^{6}, t_3^{5}, t_4^{12}, t_5^{5}\} $ defined over alphabet $ \LogAlph = \{\taska,\taskb,\taskc,\taskd,\taske\} $. Its cardinality is $45$.

\medskip
\acrfull{ltlf}~\cite{DeGiacomo2013Linear} expresses propositions over linear discrete-time structures of finite length -- namely, traces as per \cref{def:eventlog}. It shares its syntax with \acrfull{ltl}~\cite{Pnueli/FOCS1977:LTL} and is at the basis of declarative process specification languages such as \Declare~\cite{DBLP:books/sp/22/CiccioM22}. 
Here, we endow {\ltlf} with past modalities as in~\cite{DBLP:conf/lop/LichtensteinPZ85}. In the remainder of this section (and, as a compendium, in \cref{appendix:ltlf}), we outline the core concepts of {\ltlf} to which our approach resorts.

\begin{definition}[Syntax of {\ltlf}]
Well-formed \gls{ltlf} formulae are built from an alphabet $\LogAlph \supseteq \{ \lettera \}$ of propositional symbols, auxiliary symbols `$($' and `$)$', propositional constants $\RNFtrue$ and $\RNFfalse$, the logical connectives $\lnot$ and $\land$,
the unary temporal operators {\RNFnext} (\emph{next}) and {\RNFprev} (\emph{yesterday}), and the binary temporal operators \RNFuntil (\emph{until}) and \RNFsince (\emph{since}) as follows:
\par\vspace{-0.75\baselineskip}
\small
\begin{flalign*}
\LTLfpFormula \Coloneqq &
\RNFtrue | \RNFfalse | \lettera |
(\lnot\LTLfpFormula) |
(\LTLfpFormula_1 \land \LTLfpFormula_2) |  \\
&
( \RNFnext \LTLfpFormula) | 
(\LTLfpFormula_1 \RNFuntil \LTLfpFormula_2) | 
(\RNFprev\LTLfpFormula) | 
(\LTLfpFormula_1 \RNFsince \LTLfpFormula_2).
\end{flalign*}
\normalfont
\end{definition}
\noindent
We may omit parentheses when the operator precedence intuitively follows from the expression.
Given $\{\taske, \taskd\} \subseteq \LogAlph$, e.g., the following is an \gls{ltlf} formula:
${(\RNFnext\lnot\taske)\RNFuntil\taskd}$. 

Semantics of \gls{ltlf} is given in terms of finite \glspl{evttrace}, i.e., finite words over the alphabet $2^\LogAlph$.
We name the index of the element in the trace as \emph{instant}.
Intuitively, $\RNFnext \LTLfpFormula$ and $\RNFprev \LTLfpFormula$ indicate that $\LTLfpFormula$ holds true at the next and previous instant, respectively;
$\LTLfpFormula_1 \RNFuntil \LTLfpFormula_2$ states that $\LTLfpFormula_2$ will eventually hold and, until then, $\LTLfpFormula_1$ holds too;
dually, $\LTLfpFormula_1 \RNFsince \LTLfpFormula_2$ signifies that $\LTLfpFormula_2$ holds at some point and, from that instant on, $\LTLfpFormula_1$ holds too.
We formalize the above as follows.
\begin{definition}[Semantics of {\ltlf}]
Given a finite \gls{evttrace} {\EvtTrace} of length $n \in \mathbb{N}$, an \gls{ltlf} formula {\LTLfpFormula} is satisfied at a given instant {\instant} ($ 1\leq\instant\leq\EvtTraceLength $) by induction of the following:
\begin{compactenum}
	\item[$ (\EvtTrace, \instant) $] $  \models \RNFtrue $; 
$ (\EvtTrace, \instant) \nvDash \RNFfalse $; \item[$ (\EvtTrace, \instant) $] $   \models \lettera $ iff $ \lettera $ is $ \RNFtrue $ in $ \EvtTrace(\instant) $; 
	\item[$ (\EvtTrace, \instant) $] $  \models \lnot\LTLfpFormula $ iff $ (\EvtTrace, \instant) \nvDash \LTLfpFormula $; \item[$ (\EvtTrace, \instant) $] $   \models \LTLfpFormula_1\land\LTLfpFormula_2 $ iff $ (\EvtTrace, \instant) \models \LTLfpFormula_1 $ and $ (\EvtTrace, \instant) \models \LTLfpFormula_2 $; \item[$ (\EvtTrace, \instant) $] $  \models \RNFnext\LTLfpFormula $ iff $ i<\EvtTraceLength $ and $ (\EvtTrace, \instant+1) \models \LTLfpFormula $; \item[$ (\EvtTrace, \instant) $] $  \models \RNFprev\LTLfpFormula $ iff $ \instant>1 $ and $ (\EvtTrace, \instant-1) \models \LTLfpFormula $; \item[$ (\EvtTrace, \instant) $] $  \models \LTLfpFormula_1\RNFuntil\LTLfpFormula_2 $ iff $ (\EvtTrace, j) \models\LTLfpFormula_2 $  with $ \instant\leq j\leq\EvtTraceLength $, and $ (\EvtTrace, k) \models \LTLfpFormula_1 $ for all $k$ s.t.\ $ {\instant\leq k<j} $; \item[$ (\EvtTrace, \instant) $] $  \models \LTLfpFormula_1\RNFsince\LTLfpFormula_2 $ iff $ (\EvtTrace, j) \models\LTLfpFormula_2 $ with $ 1\leq j\leq \instant $, and $ (\EvtTrace, k) \models \LTLfpFormula_1 $ for all $k$ s.t.\ $ {j<k\leq \instant} $. \end{compactenum}
\normalfont
\end{definition}
\noindent
Without loss of generality, we consider here the non-strict semantics of $\RNFuntil$ and $\RNFsince$~\cite{DBLP:conf/birthday/HodkinsonR05}.
Also, notice that each event in \cref{tab:running:example} satisfies only one proposition (thus applying the ``Declare assumption''~\cite{DBLP:conf/aaai/GiacomoMM14}) for the sake of simplicity. 
In the following, we might directly refer to the sequence of events $\langle e_1,\ldots,e_n \rangle$ of a trace $t$ of length $n$ to indicate the sequence of assignments at instants $1,\ldots,n$.
For example, $t_1$, $t_2$, and $t_4$ in \cref{tab:running:example} are written as $\langle \taska,\taskb,\taskc,\taskd,\taskb,\taskc,\taske,\taskc,\taskb \rangle$, $\langle \taskb,\taskd,\taska,\taskb,\taskb,\taskd,\taske,\taskd,\taskc \rangle$, and $\langle \taskb,\taskc,\taska,\taskc,\taske,\taska \rangle$, respectively. We thus indicate, e.g., that $(t_1,1) \models \taska$,  $(t_2,4) \models \taskb$, and $(t_4,2) \models \taskc$.
Considering again the formula $(\RNFnext\lnot\taske)\RNFuntil\taskd$, we have that $(t_1,1) $ satisfies it
(i.e., the formula is satisfied at the first instant of $\EvtTrace_1$),
whereas $(t_2,6)$ does not. 

\smallskip
From the above operators, the following can be derived:
\begin{compactitem}
	\item Classical boolean abbreviations $ \lor, \rightarrow $;
	\item Constant $\RNFend \equiv \lnot\RNFnext\RNFtrue $, the last instant of a trace;
	\item Constant $\RNFstart \equiv \lnot\RNFprev\RNFtrue $, the first instant of a trace;
	\item $ \RNFfut\LTLfpFormula \equiv \RNFtrue\RNFuntil\LTLfpFormula $, indicating that $ \LTLfpFormula $ holds true at an instant between the current one (included) and {\RNFend} (we name this operator \emph{eventually});
\item $ \RNFpast\LTLfpFormula \equiv \RNFtrue\RNFsince\LTLfpFormula $, indicating that $ \LTLfpFormula $ holds true at an instant in the closed interval from {\RNFstart} to the current one (\emph{once});
	\item $ \RNFglobFut\LTLfpFormula \equiv \lnot\RNFfut\lnot\LTLfpFormula $, indicating that $ \LTLfpFormula $ holds true at every instant from the current one till {\RNFend} (\emph{always}); \item $ \RNFglobPast\LTLfpFormula \equiv \lnot\RNFpast\lnot\LTLfpFormula $, indicating that $ \LTLfpFormula $ holds true at every instant from {\RNFstart} to the current one (\emph{historically}). \end{compactitem}
\noindent
For example, $\taskd\wedge\RNFfut\taske$ is satisfied in a trace when the propositional atom $\taskd$ holds true and $\taske$ holds true at a later instant in the same trace.
Considering the log in \cref{tab:running:example}, we have that $(t_2, 6) \models \taskd\wedge\RNFfut\taske$ whereas $(t_1,1) \nvDash \taskd\wedge\RNFfut\taske$.

Notice that the semantics of the \emph{past operators} $\RNFprev$, $\RNFsince$, $\RNFpast$, and $\RNFglobPast$ correspond to the semantics of the \emph{future operators} $\RNFnext$, $\RNFuntil$, $\RNFfut$, and $\RNFglobFut$, respectively, if we evaluate them on finite traces read in reverse~\cite{Cecconi2018Interestingness}. For example, the evaluation of 
$(\RNFprev\lnot\taske)\RNFsince\taskd$ on 
$\langle \taska,\taske,\taskc,\taska,\taskc,\taskb \rangle$
is equivalent to the evaluation of 
$(\RNFnext\lnot\taske)\RNFuntil\taskd$ on 
$\langle \taskb,\taskc,\taska,\taskc,\taske,\taska \rangle$.

\medskip
Let $\|\varphi\|$ denote the size of the {\ltlf} formula $\varphi$ in terms of propositional symbols and connectives excluding parentheses. For example, $\|(\RNFnext\lnot\taske)\RNFuntil\taskd\|$ is \num{5} and $\|\taskd\wedge\RNFfut\taske\|$ is \num{4}.

\begin{theorem}[\!\!\cite{DBLP:conf/aaai/FiondaG16}]\label{thm:checking:trace}
	Let $t$ be a finite trace of length $n \in \mathbb{N}$.
	Checking whether $(t,i)$ (with $1 \leq i \leq n$) satisfies an {\ltlf} formula $\varphi$, $(t,i) \models \varphi$, is feasible in $O(n^2 \times \|\varphi\|)$.
\end{theorem}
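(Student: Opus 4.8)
The plan is to prove the bound by a bottom-up dynamic-programming (labelling) procedure over the subformulae of $\varphi$ and the instants of $t$. I would build a Boolean table $M$ indexed by pairs $(\psi, j)$, where $\psi$ ranges over the subformulae of $\varphi$ and $j$ over $\{1, \ldots, \EvtTraceLength\}$, maintaining the invariant that $M[\psi, j] = 1$ iff $(\EvtTrace, j) \models \psi$. Since every subformula of $\varphi$ contributes at least one propositional symbol or connective, the number of distinct subformulae is $O(\|\varphi\|)$, so the table has $O(\EvtTraceLength \times \|\varphi\|)$ entries, and the answer to the decision problem is just the entry $M[\varphi, \instant]$.

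First I would order the subformulae by non-decreasing size and fill the table in that order, so that when the row for $\psi$ is computed, the rows for all of its proper subformulae are already available. The base cases are the constants $\RNFtrue$, $\RNFfalse$ and the propositional atoms, each decided at every instant in $O(1)$ by inspecting $\EvtTrace(j)$. For the Boolean connectives $\lnot$ and $\land$ and the unary temporal operators $\RNFnext$ and $\RNFprev$, the semantics yield $M[\psi, j]$ as a constant-time function of a constant number of already-computed entries (e.g.\ $M[\RNFnext\LTLfpFormula, j] = 1$ iff $j < \EvtTraceLength$ and $M[\LTLfpFormula, j{+}1] = 1$), so each such row costs $O(\EvtTraceLength)$ and all of them together cost $O(\EvtTraceLength \times \|\varphi\|)$.

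The only operators not immediately constant-time per entry are the binary temporal ones. For $\psi = \LTLfpFormula_1 \RNFuntil \LTLfpFormula_2$, the definition requires, for each $j$, a search for a witness instant $j \le k \le \EvtTraceLength$ with $M[\LTLfpFormula_2, k] = 1$ such that $M[\LTLfpFormula_1, \cdot] = 1$ at all intervening instants; scanning naively costs $O(\EvtTraceLength)$ per entry, hence $O(\EvtTraceLength^2)$ for a single such row, and symmetrically (scanning backwards) for $\RNFsince$. Summing over the $O(\|\varphi\|)$ subformulae gives the claimed $O(\EvtTraceLength^2 \times \|\varphi\|)$, which dominates the cheaper rows and is therefore the overall running time. (One could collapse each temporal row to $O(\EvtTraceLength)$ via the recurrence $M[\LTLfpFormula_1 \RNFuntil \LTLfpFormula_2, j] = M[\LTLfpFormula_2, j] \lor (M[\LTLfpFormula_1, j] \land j < \EvtTraceLength \land M[\LTLfpFormula_1 \RNFuntil \LTLfpFormula_2, j{+}1])$ evaluated from $j = \EvtTraceLength$ downwards, but this refinement is not needed for the stated bound.)

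Correctness follows by induction on formula size: each table-update clause is a direct transcription of the corresponding clause in the semantics of {\ltlf}, and the induction hypothesis guarantees the correctness of the entries it reads. I do not foresee a genuine obstacle; the only points needing a little care are (i) scheduling the past operators $\RNFprev$ and $\RNFsince$ in the same bottom-up fashion as their future counterparts — their semantics is symmetric, merely referencing $j{-}1$ and $k \le j$ in place of $j{+}1$ and $k \ge j$ — and (ii) noting that computing the whole row for $\varphi$ is enough to answer the query for the particular instant $\instant$, which it is.
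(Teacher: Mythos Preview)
Your proposal is correct: the bottom-up labelling over the $O(\|\varphi\|)$ subformulae and $n$ instants, with the naive $O(n)$ scan per $\RNFuntil$/$\RNFsince$ entry, yields exactly the $O(n^2 \times \|\varphi\|)$ bound, and your correctness invariant is the right one.

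The paper itself does not spell out this argument; it simply invokes the constructive parse-tree proof of Fionda and Greco for the future-only fragment and then observes that the past modalities $\RNFprev$, $\RNFsince$, $\RNFpast$, $\RNFglobPast$ can be slotted into the same parse tree and evaluated by reading the trace in reverse, so the complexity is unchanged. Your treatment is more self-contained: you reconstruct the dynamic-programming table explicitly and handle the past operators directly via their symmetric semantics (indices $j{-}1$ and $k \le j$) rather than appealing to trace reversal. Both routes are sound and amount to the same underlying algorithm; yours has the advantage of being a complete standalone proof, while the paper's has the brevity of a citation plus a one-line extension.
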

\begin{proof}\itshape
It follows from the proof in~\cite{DBLP:conf/aaai/FiondaG16} elaborated for future operators (\RNFnext, \RNFfut, \RNFglobFut, and $\!\!\RNFuntil\!\!$). Notice that the use of past modalities $\RNFprev$, $\RNFglobPast$, $\RNFpast$ and $\!\!\RNFsince\!\!$ do not alter the complexity. Indeed, they can be included in the parse tree of the constructive proof in~\cite{DBLP:conf/aaai/FiondaG16} as the respective future counterparts and checked against the trace read in reverse (i.e., from end to start~\cite{Cecconi2018Interestingness}).
\end{proof}

\begin{corollary}\label{thm:checking:log}
	Let $L$ be an event log as per \cref{def:eventlog} consisting of $m \in \mathbb{N}$ distinct traces of length up to $n \in \mathbb{N}$ and cardinality $|L| \geq m$.
	Labeling the events in $L$ that satisfy an {\ltlf} formula $\varphi$ is feasible in $O(n^3 \times \|\varphi\| \times m)$.
\end{corollary}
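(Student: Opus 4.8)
The plan is to reduce the corollary to \cref{thm:checking:trace} by carefully accounting for the three sources of work: the number of distinct traces $m$, the maximum trace length $n$, and the number of instants per trace at which we must evaluate the formula. First I would observe that \cref{thm:checking:trace} tells us that deciding $(t,i)\models\varphi$ for a \emph{single} instant $i$ of a single trace of length $n$ costs $O(n^2\times\|\varphi\|)$. To \emph{label} every event of a trace $t$ (i.e.\ to decide, for each instant $i$ with $1\le i\le n$, whether $(t,i)\models\varphi$), a naive bound would be to invoke \cref{thm:checking:trace} once per instant, giving $n\times O(n^2\times\|\varphi\|)=O(n^3\times\|\varphi\|)$ per trace. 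Summing over the $m$ distinct traces of the log yields $O(n^3\times\|\varphi\|\times m)$, which is exactly the claimed bound.

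The key steps, in order, are: (i) recall that the constructive procedure underlying \cref{thm:checking:trace} — the dynamic-programming/parse-tree labelling of \cite{DBLP:conf/aaai/FiondaG16} — actually produces, for a subformula, the truth value at the target instant while traversing the trace; note that by the same construction one can obtain the truth values at \emph{all} instants within the same asymptotic envelope, but even if one does not exploit this, the per-instant reinvocation bound above suffices; (ii) bound the cost of labelling one trace by $O(n^3\times\|\varphi\|)$ as argued; (iii) sum over the $m$ distinct traces, using that each trace has length at most $n$, to get $O(n^3\times\|\varphi\|\times m)$; (iv) remark that the multiplicities $j_1,\dots,j_m$ do not affect the cost, since the labelling of a trace depends only on the trace itself and can be reused for all $|L|\ge m$ occurrences — hence the bound is in terms of $m$, not $|L|$; (v) note, as in the proof of \cref{thm:checking:trace}, that past modalities add nothing to the complexity because they are handled as future operators on the reversed trace.

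I would present this as a short proof: ``Fix a distinct trace $t$ of length at most $n$. By \cref{thm:checking:trace}, checking $(t,i)\models\varphi$ costs $O(n^2\times\|\varphi\|)$ for each of the at most $n$ instants $i$, so labelling all events of $t$ costs $O(n^3\times\|\varphi\|)$. There are $m$ distinct traces in $L$, and the labelling of each is independent of its multiplicity $j_i$, so the total cost of labelling all events in $L$ is $O(n^3\times\|\varphi\|\times m)$.'' Optionally I would add the sharper observation that a single pass of the labelling algorithm over $t$ already yields the truth value at every instant, so the $n^3$ can in fact be seen as $n^2$ for the full-trace labelling followed by the $\times m$ for the traces — but since the corollary only asserts the weaker $O(n^3\times\|\varphi\|\times m)$, the per-instant argument is enough and is the cleanest route.

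The main obstacle — really the only subtlety — is being precise about what ``labelling the events that satisfy $\varphi$'' means and why the multiplicity-weighted cardinality $|L|$ does not enter the bound: one must make explicit that work is shared across identical traces, so the relevant parameter is the number $m$ of \emph{distinct} traces rather than $|L|$. A secondary point to state cleanly is that invoking \cref{thm:checking:trace} once per instant is legitimate (the theorem is about an arbitrary instant $i$, not just $i=1$), which makes the $n\mapsto n^3$ step immediate; and, as already handled in \cref{thm:checking:trace}, past operators are absorbed by evaluating on the reversed trace, so no new complexity arises there.
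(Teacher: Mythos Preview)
Your proposal is correct and takes essentially the same approach as the paper: invoke \cref{thm:checking:trace} once per instant, multiply by the at most $n$ instants per trace, and sum over the $m$ distinct traces. The paper's own proof is a one-liner stating exactly this (``the checking is done for the $O(n)$ events of all $m$ distinct traces in $L$''), so your write-up is more detailed than needed but fully aligned with it.
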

\begin{proof}\itshape
	The proof follows from \cref{thm:checking:trace}: the checking is done for the $O(n)$ events of all $m$ distinct traces in $L$.
\end{proof}

\section{\acrfullpl{rf} and process specifications}
\label{sec:rf}
In this section, we illustrate how we adopt {\ltlf} to express rules specifying the behavior of a process in the form of \glspl{rf}.

\subsection{\acrfullpl{rf}}\label{sec:rcon:syntax:semantics}
\Glspl{rf} express \gls{ltlf}-based rules as antecedent-consequent pairs in an ``if-then'' fashion.
Next, we formalize their definition.
\begin{definition}[\acrfull{rf}]\label{def:reactiveconstraint}
	Given an alphabet of propositional symbols
	$\LogAlph$, let {\RfActivationFormula} and {\RfObjectFormula} be \gls{ltlf} formulae over {\LogAlph}. 
	A \emph{\acrfull{rf}} $\RfFormula$ is a pair $ ( \RfActivationFormula, \RfObjectFormula ) $ hereafter denoted as 
	$\RfFormula \triangleq \RfActivationFormula \RfImplication \RfObjectFormula$.
We name {\RfActivationFormula} as \emph{activator} and {\RfObjectFormula} as \emph{target}.
\end{definition}

We define the semantics of an \gls{rf} $\RfFormula = \RfActivationFormula \RfImplication \RfObjectFormula$ as follows: given a trace $t$ of length $n$ and an instant $i$ with $1 \leq i \leq n$, we say that
\begin{compactdesc}
	\item[{\RfFormula} is satisfied by $\EvtTrace$ in $\instant$\normalfont,] i.e., $ \EvtTrace, \instant \models \RfFormula $, iff $ \EvtTrace, \instant \models \RfActivationFormula$ and $ \EvtTrace, \instant \models \RfObjectFormula$
	\item[{\RfFormula} is violated by $\EvtTrace$ in $\instant$\normalfont,] $ \EvtTrace, \instant \nvDash \RfFormula $, iff $ \EvtTrace, \instant \models \RfActivationFormula$ and $ \EvtTrace, \instant \nvDash \RfObjectFormula$
	\item[{\RfFormula} is unaffected by $\EvtTrace$ in $\instant$\normalfont,] iff $ \EvtTrace, \instant \nvDash \RfActivationFormula$.
\end{compactdesc}
We also say that {\RfFormula} \textbf{is activated by} $t$ if there exists an instant $i$ s.t.\ $1 \leq i \leq n$ and $t,i \models \RfActivationFormula$.
For example, take $ \RfFormula_1 = \taskc\RfImplication\RNFpast\taska$ in \cref{tab:running:example}. At every occurrence of {\taskc} (the activator),  $\RfFormula_1$ is either \emph{satisfied} (if {\taskc} is eventually preceded by {\taska} as $\RNFpast\taska$ is the target), or \emph{violated}. $\RfFormula_1$ is instead \emph{unaffected} by those events in which {\taskc} does not occur.
The activator of $ \RfFormula_2 = \taskd\RfImplication\RNFfut\taske$ in \cref{tab:running:example} is {\taskd} and the target is $\RNFfut\taske$.
Whenever {\taskd} occurs, it is either satisfied (if eventually followed by {\taske}) or violated (otherwise). It is unaffected by events wherein {\taskd} does not hold.
Notice that by declaring that the activator of  $ \RfFormula_1 $ is {\taskc}, the user makes the ``trigger'' of the rule explicit. $ \RfFormula_1 $ and $ \RfFormula_2 $ are the \gls{rf} representation of what are known as \Preced{\taskc}{\taska} and \Resp{\taskd}{\taske} in the declarative process specification language \Declare~\cite{DBLP:conf/spin/PesicBA10}, respectively.

\paragraph{A Note on the Role of the Activators in \glspl{rf}.}
We remark that any {\ltlf} formula $\phi$ can be expressed by means of an \gls{rf}. The expressiveness of \glspl{rf} fully covers that of {\ltlf} and, a fortiori, of {\Declare}. The examination of the expressiveness of \glspl{rf} is discussed in detail in~\cite{Cecconi2018Interestingness,Cecconi2021Measuring}.
On the other hand, the temporal conditions that an \ltlf formula such as $\RNFglobFut (\phi_1 \to \phi_2)$ exerts do not substantially differ from those of $\phi_1 \RfImplication \phi_2$.
However, we adopt \glspl{rf} to express rules in a mining context because the explicit definition of the activator makes it possible to distinguish an interesting satisfaction from a vacuous one~\cite{DBLP:journals/sttt/KupfermanV03}.
The {\ltlf} formulae
$\psi_1 = \RNFglobFut (\taskc \to \RNFpast\taska)$
and
$\psi_2 = \RNFglobFut \left( ( \lnot \taskc \RNFuntil \taska ) \lor \lnot \taskc \right)$
both express the \Preced{\taskc}{\taska} constraint.
The rule dictates that, for every event, \emph{if} {\taskc} occurs, then it must be preceded by {\taska}.
This explanation closely resembles $\psi_1$.
Therefore, if {\taskc} does not occur, the constraint is also satisfied
(regardless of whether {\taska} occurs or not: \emph{ex falso sequitur quodlibet}), though vacuously.
Notice that the latter condition is explicit in the second conjunct under $\RNFglobFut$ in $\psi_2$: ``$\ldots \vee \lnot \taskc$''.
Indicating that a rule is satisfied though being unable to distinguish whether it is actually triggered or not adds limited information~\cite{Maggi.etal/CIDM2011:UserGuidedDiscovery}.
The adoption of \glspl{rf}, though not solving the vacuity problem (activator and target could still be vacuously satisfiable formulae per se), lets \emph{the user} explicitly define the conditions that make a satisfaction interesting.
With $ \taskc \RfImplication \RNFpast\taska $, e.g., we state that the rule is unaffected (neither violated nor satisfied) by events in which $\taskc$ does not occur.
Notice, however, that alternative expressions can be used to customize the interpretation of the rules.
To adopt the classical binary interpretation of {\ltlf} formulae, \Preced{\taskc}{\taska} can be expressed, e.g., as
$ {\Psi_3 = \RNFtrue \RfImplication \left( ( \lnot \taskc \RNFuntil \taska ) \lor \lnot \taskc \right)} $,
or
$ {\Psi_4 =  \RNFstart \RfImplication \RNFglobFut (\taskc \to \RNFpast\taska)} $.
In the first case, $\Psi_3$ indicates that every event activates the \gls{rf} (because $\RNFtrue$ holds in every event).
Therefore, the satisfaction of the target determines the satisfaction of the constraint.
In the second case, the activator of $\Psi_4$ is $ \RNFstart $.
Thus, the first event acts as a representative for the whole trace as either satisfying (if no events in it violates the constraint) or violating (if at least one event violates it). The constraint is unaffected by the trace in every other event.
The definition of well-formed \glspl{rf} and guidelines for their formulation in a mining context go beyond the scope of this paper, and suggests interesting theoretical investigation for future work.

\subsection{Process specifications}
Equipped with the above notion of \glspl{rf} and the rationale behind their use in this context, we can define a process specification as follows.
\begin{definition}[Rule-based {\ltlf} process specification]\label{def:specification}
	A rule-based {\ltlf} process specification (henceforth, \emph{specification} for short) is a finite non-empty set of \glspl{rf} $ \DeclaModel \triangleq \{ \RfFormula_1, \ldots, \RfFormula_s \}$, with $s \in \mathbb{N}$. \end{definition}
For example,~\cref{tab:running:example} presents a specification $ \mathcal{S}=\{\RfFormula_1, \RfFormula_2\} $ composed by the $ \RfFormula_1$ \gls{rf} above and $ \RfFormula_2 = \taskd\RfImplication\RNFfut\taske$.

\begin{corollary}\label{thm:specification:check}
	Let $\DeclaModel = \{ \RfFormula_1, \ldots, \RfFormula_s \}$ be a specification consisting of $s$ \glspl{rf}, the activator and target of which are of size up to $\|\varphi\|$.
	Labeling the events in $L$ with the satisfaction of activator and target of every \gls{rf} in $\DeclaModel$ is feasible in $O(n^3 \times \|\varphi\| \times |L| \times s)$.
\end{corollary}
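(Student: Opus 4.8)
The plan is to reduce the statement directly to \Cref{thm:checking:log}, observing that a specification is, for the purposes of event labeling, nothing more than a bounded collection of {\ltlf} formulae. First I would note that each \gls{rf} $\RfFormula_k = \RfActivationFormulaVar{k} \RfImplication \RfObjectFormulaVar{k}$ in $\DeclaModel$ contributes exactly two {\ltlf} formulae whose event-wise satisfaction has to be recorded, namely the activator $\RfActivationFormulaVar{k}$ and the target $\RfObjectFormulaVar{k}$, each of size at most $\|\varphi\|$ by hypothesis. Hence the labeling task for the whole specification $\DeclaModel$ amounts to $2s$ independent invocations of the single-formula labeling procedure whose cost is established in \Cref{thm:checking:log}.

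Next I would simply invoke \Cref{thm:checking:log} on each of these $2s$ formulae. Writing $m$ for the number of distinct traces in $L$, each invocation runs in $O(n^3 \times \|\varphi\| \times m)$; summing over the $2s$ formulae yields $O(n^3 \times \|\varphi\| \times m \times 2s)$, and the constant factor $2$ is absorbed into the asymptotic notation, giving $O(n^3 \times \|\varphi\| \times m \times s)$. Finally, since $L$ is a multi-set with $|L| \geq m$ by \Cref{def:eventlog}, we have $m \leq |L|$, so the bound is subsumed by $O(n^3 \times \|\varphi\| \times |L| \times s)$, as claimed. I would also briefly remark that once activator and target are labeled at every event, the ternary status of each rule (satisfied / violated / unaffected, per the semantics in \Cref{sec:rcon:syntax:semantics}) is read off in constant time per event and per rule, so this adds nothing to the asymptotic cost.

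There is no genuine obstacle here: the statement is a bookkeeping consequence of \Cref{thm:checking:log}. The only two points that deserve to be made explicit are the doubling of the count (activator plus target, a constant factor harmlessly swallowed by $O(\cdot)$) and the passage from the sharper parameter $m$ to $|L|$. On the latter, one may additionally observe that phrasing the bound with $|L|$ is the natural choice if one intends to materialize the labeling on \emph{all} occurrences in the multi-set rather than on the $m$ distinct representatives, which is presumably why the corollary is stated this way.
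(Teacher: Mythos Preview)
Your proposal is correct and follows essentially the same approach as the paper: the paper's proof is a single sentence stating that the result follows from \Cref{thm:checking:log} because a pair of labels (activator and target) suffices for each \gls{rf} in the specification. You have simply spelled out the bookkeeping in more detail—the $2s$ invocations, the absorption of the constant, and the passage from $m$ to $|L|$—all of which is implicit in the paper's one-line justification.
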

\begin{proof}\itshape The proof follows from \cref{thm:checking:log}, as a pair of labels is sufficient for all \glspl{rf} in the specification. \end{proof}
Take, e.g., trace $t_4 = \langle \taskb,\taskc,\taska,\taskc,\taske,\taska \rangle$ from \cref{tab:running:example} and the aforementioned \gls{rf} $ \RfFormula_1 = \taskc\RfImplication\RNFpast\taska$.
The activator ($\taskc$) is satisfied in $(t_4,2)$ and $(t_4,4)$. We can thus label every event in $t_4$ thereby creating a new sequence as follows: $\langle 0,1,0,1,0,0 \rangle$ where $1$ and $0$ indicate a satisfaction and a violation of the formula in the corresponding event, respectively. Similarly, we can create a sequence of labels denoting whether the target ($\RNFpast\taska$) is satisfied: $\langle 0,0,1,1,1,1 \rangle$.

A trivial approach to classify traces as compliant with an \glspl{rf} or not is to check whether no event violates it. Nevertheless, especially in checking contexts, understanding the \emph{extent} to which a trace and a log satisfy a specification is key~\cite{DBLP:conf/aips/GiacomoMMS16}.
Next, we introduce the interestingness measures, i.e., the quantitative device we employ to quantify the extent of specification satisfaction. 
Subsequently, we lay the foundations rooted in probabilistic theory to reach this goal.

\section{Interestingness Measures for \glspl{rf}}
\label{sec:measures-intro}

\todo{added section o briefly introduce interestingness measures. It makes the story of the paper better and places more clearly the contribution}

The association rule mining field has a long history of measures development for association rules, also called interestingness measures~\cite{Adamo/2001:Dataminingassociationrules,Geng2006Interestingness}.
These rules have a standard ``if-$A$-then-$B$'' form, where $A$ and $B$ are elements that may occur in a phenomenon, e.g., instructions in a set of database transactions, or events in a set of process traces.
These measures are based on the (joint) probabilities of the occurrences (and co-occurrences) of $A$ and $B$ with the general goal of understanding whether there is a significant (directional) relation between the two elements or, alternatively, whether their co-occurrence is uncorrelated. Historically, the development of these measures began in market basket analysis with the introduction of the A-Priori algorithm~\cite{Agrawal1994Fast}. The problem that the algorithm addressed was to discover, given a set of transactions, association rules between sets of frequently co-occurring elements. Yet, the simple co-occurrence does not necessarily imply a correlation between the elements. Therefore, more refined measures that distinguish authentic associations from spurious ones were developed~\cite{Geng2006Interestingness}.

\begin{table}[tbp] 
\begin{tabular}{cc}
\toprule
\textbf{Measure}     & \textbf{Definition}   \\
\midrule
Support     & $ P(A \cap B) $ \\
Confidence  & $ P(B|A) $ \\
Recall      & $ P(A|B) $ \\
Specificity & $ P(\lnot B | \lnot A) $ \\
Lift        & $ \dfrac{P(A \cap B)}{P(A)P(B)} $ \\
\bottomrule
\end{tabular} \caption[Selection of interestingness measures]{A selection of interestingness measures}
        \label{tab:measures-def}
\end{table}
Let us consider a few examples of such measures, presented in~\cref{tab:measures-def}: 
Given an ``if-$A$-then-$B$'' rule, 
the Support measure, $ P(A \cap B) $, quantifies the joint occurrences of the two elements;
Confidence, $ P(B|A) $, considers the occurrence of $B$ only when we know that $A$ occurred;
Specificity, $P(\lnot B|\lnot A) $, measures the non-occurrence of $B$ given the non-occurrence of $A$;
Recall, $ P(A|B) $, measures the conditional occurrence of $A$ given $B$;
finally, Lift, $ \frac{P(A \cap B)}{P(A)P(B)} $, is the ratio of the probability of co-occurrence of $A$ and $B$ over the product of the individual probabilities of $A$ and $B$ to occur. 
Intuitively, 
Support is high when $A$ and $B$ occur frequently together;
Confidence is high when $B$ occurs every time $A$ occurs, while
Recall is high in the opposite situation, i.e., if $A$ occurs every time $B$ occurs;
Specificity is high when $B$ does not occur if $A$ does not occur;
Lift is high when the separate probability of $A$ and $B$ occurring is lower than the probability of their joint occurrence.

Clearly, these measures quantify different aspects of the rule and, depending on the needs of the user, one measure may turn out to be more useful than another. A plethora of other interestingness measures have been defined in the literature. We refer the reader to existing surveys for an extensive overview~\cite{Geng2006Interestingness,Lenca2008OnSelecting,Tan2004Selecting}.

In the context of process mining, it has been already shown how to apply these interestingness measures to single \gls{rf} rules~\cite{Cecconi2021Measuring}. By definition, \glspl{rf} are ``if-then'' rules too, which makes these measures suitable for interestingness measurement based on the probabilities of the satisfaction of the activator and target conditions in a trace or a log.

The main goal of this paper is to extend these results to specifications of \glspl{rf}. To this end, we must assume a probabilistic model and derive sound statistical estimators for specifications satisfied by traces and, subsequently, logs.

\section{Estimators for \acrshort{ltlf} formulae}
\label{sec:estimation}

The interestingness measures for \glspl{rf} are based on the probabilities of their activator ({\RfActivationFormula}) and target ({\RfObjectFormula}) \gls{ltlf} formulae.
In this section,
we propose estimators for the probabilities of traces and logs satisfying \gls{ltlf} formulae and show that these estimators are computable in polynomial time.

\subsection{Trace estimators}
\label{sec:trace-estimators}
We start by defining probabilistic models 
for the evaluation of formulae over traces. One can consider the probability of an event in a given trace $t = \langle e_1, \ldots, e_n \rangle$ to satisfy an \gls{ltlf} formula $\varphi$ as the degree to which $\varphi$ is satisfied in that trace, which we denote as $P(\varphi(t))$. 

Throughout this work, we assume the existence of a 
labeling mechanism $\Lambda$ that, 
when given an event $e$ in a trace $t$ and a formula $\varphi$, 
marks the event with $1$ if the event satisfies $\varphi$ or with $0$ otherwise, i.e., $\Lambda(e,\varphi)\in \{0,1\}$.
This procedure can be achieved in polynomial time as shown in Corollary~\ref{thm:checking:log} through automata-based techniques for \gls{ltlf} formulae verification~\cite{Cecconi2021Measuring}.

Therefore, every trace $t$ can be associated with a binary sequence $x_{\varphi,t}$ as follows: \begin{equation} x_{\varphi,t} = \left\langle\Lambda\!\left(e_1, \varphi\right), \ldots ,\Lambda\!\left(e_n, \varphi\right)\right\rangle. \nonumber \end{equation}
Take again, e.g.,
$t_2 = \langle e_{2,1}, \ldots, e_{2,9} \rangle = \langle \taskb,\taskd,\taska,\taskb,\taskb,\taskd,\taske,\taskd,\taskc \rangle$ from \cref{tab:running:example}, and formula $\varphi=\taskd\wedge\RNFfut\taske$.
As only $(t_2, 2)$ and $(t_2, 6)$ satisfy $\varphi$, $\Lambda\!\left(e_{2,1}, \varphi\right)$ and $\Lambda\!\left(e_{2,6}, \varphi\right)$ return $1$ while $\Lambda\!\left(e_{2,i}, \varphi\right)$ is $0$ for every $i\in\{1,\ldots,9\}\setminus\{2,6\}$, i.e., \begin{equation}
x_{\varphi,t_2} = \langle 0,1,0,0,0,1,0,0,0 \rangle. \nonumber \end{equation} \noindent In what follows, we assume that $P(\varphi(t))$ (the probability of $t$ to satisfy $\varphi$),
is independent of the position of the event in the trace. This is an uninformative prior assumption, i.e., we assume that 
we are unaware of the values of other events and of the event location within the trace
when evaluating a specific event. Thus, the sequence $x_{\varphi,t}$ can be viewed as an independent and identically distributed (i.id.)
draw from a Bernoulli random variable $X_{\varphi,t}$, which takes
the value of $1$ with probability $P(\varphi(t))$ and $0$ otherwise, i.e., 
\begin{equation}
X_{\varphi,t} =
\begin{cases}
\begin{aligned}
	1, &  &  & \textrm{w.p.} \ \ P(\varphi(t)), \\
	0, &  &  & \textrm{otherwise}.
\end{aligned}
\end{cases} 
\label{eq:trace:eval:rand:var}
\end{equation}
This leads us to our first estimator, namely that of $P(\varphi(t))$.\footnote{
We write $P(\varphi(t))$ for the probability of specific events such as the satisfaction of a formula, and $P(X=x)$
for the probability that a random variable $X$ is set to a specific value $x$. See Appendix~\ref{appendix:probability} for 
basic probability notation.}

\begin{prop} \label{prop:uni}
The maximum likelihood estimator (MLE)\footnote{MLE estimators exhibit important statistical properties such as unbiasedness and consistency (see~\cref{appendix:stats}).} for $P(\varphi(t))$ where $t = \langle e_1, \ldots, e_n \rangle$ is
\begin{equation} \label{eq:unibernoulli}
	\widehat{P(\varphi(t))} = \frac{1}{n} \sum_{i=1}^n \Lambda(e_i, \varphi).
\end{equation}
\end{prop}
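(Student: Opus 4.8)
The plan is to recognize this as a textbook maximum-likelihood computation for the parameter of a Bernoulli distribution and to carry it out explicitly. By the modeling assumption in \eqref{eq:trace:eval:rand:var}, the labels $\Lambda(e_1,\varphi),\ldots,\Lambda(e_n,\varphi)$ constituting the sequence $x_{\varphi,t}$ are an i.id.\ sample from the Bernoulli variable $X_{\varphi,t}$ with success probability $p \triangleq P(\varphi(t))$. First I would write down the likelihood of observing this particular sequence: letting $k = \sum_{i=1}^n \Lambda(e_i,\varphi)$ denote the number of events satisfying $\varphi$, the probability mass is $\mathcal{L}(p) = \prod_{i=1}^n p^{\Lambda(e_i,\varphi)}(1-p)^{1-\Lambda(e_i,\varphi)} = p^{k}(1-p)^{n-k}$, since the draws are independent and identically distributed.

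Next I would pass to the log-likelihood $\ell(p) = \ln \mathcal{L}(p) = k \ln p + (n-k)\ln(1-p)$, which is legitimate because $\ln$ is strictly increasing and hence preserves the location of the maximum. Differentiating, $\ell'(p) = \frac{k}{p} - \frac{n-k}{1-p}$; setting this to zero and solving gives $k(1-p) = (n-k)p$, i.e.\ $k = np$, hence $\widehat{p} = k/n = \frac{1}{n}\sum_{i=1}^n \Lambda(e_i,\varphi)$, which is exactly \eqref{eq:unibernoulli}. To confirm this is a maximum rather than another kind of stationary point, I would note $\ell''(p) = -\frac{k}{p^2} - \frac{n-k}{(1-p)^2} < 0$ for $p \in (0,1)$, so $\ell$ is strictly concave and the stationary point is the unique global maximizer on the open interval.

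The only genuine subtlety — and the step I'd flag as the main obstacle, such as it is — is the boundary behavior. If $k = 0$ (no event satisfies $\varphi$) or $k = n$ (every event does), the stationary-point computation above is degenerate, but the likelihood $p^k(1-p)^{n-k}$ is still maximized at $p = 0$ or $p = 1$ respectively on the closed interval $[0,1]$, which again agrees with the formula $\widehat{p} = k/n$. So a complete argument would either handle these two corner cases separately or simply observe that $p \mapsto p^k(1-p)^{n-k}$ is maximized on $[0,1]$ at $k/n$ in all cases (treating $0^0 = 1$). Beyond that, everything is a routine calculus-of-one-variable optimization, so I would keep the write-up brief.
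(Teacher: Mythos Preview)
Your proposal is correct and follows the same underlying approach as the paper: both identify $X_{\varphi,t}$ as a univariate Bernoulli and invoke its standard MLE, namely the sample proportion of successes. The only difference is one of detail---the paper simply cites the result as well-established (pointing to~\cite{bickel2015mathematical}), whereas you carry out the log-likelihood optimization explicitly, including the second-derivative check and the boundary cases; your version is thus more self-contained, but not a different argument.
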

\begin{proof}\itshape
Since $X_{\varphi,t}$ 
is a 
univariate Bernoulli 
random variable its MLE is well-established in the 
literature (see, e.g., \cite{bickel2015mathematical}). 
Specifically, it equals to the ratio of `successful trials' over the $n$ trials. \end{proof} 
Returning to our running example (\cref{tab:running:example}), the MLE estimator is used to compute the trace probabilities of the {\RfActivationFormula} formula ($P$ of act.) and of the {\RfObjectFormula} formula ($P$ of target) for each \gls{rf} $\RfActivationFormula \RfImplication \RfObjectFormula$ in $\mathcal{S}$ and for every trace $ t\in L $.
Let us consider again trace
$t_4 = \langle \taskb,\taskc,\taska,\taskc,\taske,\taska \rangle$ and the \gls{rf} $ \RfFormula_1 =  \varphi_{\alpha_1}\RfImplication\varphi_{\tau_1}= \taskc\RfImplication\RNFpast\taska$.
As we have previously discussed in \cref{sec:rf}, the evaluation of $\varphi_{\alpha_1}$ and $\varphi_{\tau_1}$ on $t_4$ leads to the following sequences of labels, respectively:
$\langle 0,1,0,1,0,0 \rangle$ and
$\langle 0,0,1,1,1,1 \rangle$.
Therefore, we conclude that
$\widehat{P\!\left(\varphi_{\alpha_1}\left(t_4\right)\right)}$ is $\frac{2}{6}$
and
$\widehat{P\!\left(\varphi_{\tau_1}\left(t_4\right)\right)}$ is $\frac{4}{6}$.

In order to obtain measures of interest for formulae and specifications
we must, in addition, obtain estimators for 
the intersection of two \gls{ltlf} formulae $\varphi_1$ and $\varphi_2$
being satisfied by a trace,
e.g.,
$P(\varphi_1(t) \cap \varphi_2(t))$,
and for the conditional distribution of $\varphi_1$ to be satisfied by trace $t$ 
conditional on $\varphi_2$ being satisfied by the trace, e.g.,
$P(\varphi_1(t) | \varphi_2(t))$. 

The latter will be particularly useful to extend
the estimators to entire process specifications. Notice that we provide results for the satisfaction of formulae, 
yet similar results can be derived for violations 
by quantifying, e.g., $P\!\left(\neg \varphi_1(t) \cap \varphi_2(t)\right)$ and $P\!\left(\neg \varphi_1(t) | \varphi_2(t)\right)$.
Formalizing the above, we wish to estimate the quantities 
of interest from a labeled sequence,  $$x_{\left(\varphi_1, \varphi_2\right), t} = \left\langle\left(\Lambda\!\left(e_i, \varphi_1\right),\Lambda\!\left(e_i, \varphi_2\right)\right)\right\rangle_{i=1}^n,$$ for $t = \langle e_1, \ldots, e_n \rangle$.

Take, for example, $t_4 = \langle \taskb,\taskc,\taska,\taskc,\taske,\taska \rangle$ from \cref{tab:running:example} and the pair of activator and target of $\Psi_1=\taskc\RfImplication\RNFpast\taska$, i.e., $\varphi_{\alpha_1}=\taskc$ and $\varphi_{\tau_1}=\RNFpast\taska$, respectively.
We have that
$x_{(\varphi_{\alpha_1},\varphi_{\tau_1}),t_4}$ is $\langle (0,0),(1,0),(0,1),(1,1),(0,1),(0,1) \rangle$.

The resulting joint sequence $x_{\left(\varphi_1, \varphi_2\right), t}$ is again assumed to be an i.id.\ draw from a \emph{bivariate} Bernoulli random variable
$X_{\left(\varphi_1, \varphi_2\right), t}$. The bivariate Bernoulli corresponds to four parameters related to the four possible outcomes, namely:
\begin{equation} \label{eq:bivariate}
X_{\left(\varphi_1, \varphi_2\right), t} = 
\begin{cases}
\begin{aligned}
	(0,0), &  &  & \textrm{w.p} \ \ p_{00}, \\
	(0,1), &  &  & \textrm{w.p} \ \ p_{01}, \\
	(1,0), &  &  & \textrm{w.p} \ \ p_{10}, \\
	(1,1), &  &  & \textrm{w.p} \ \ p_{11},
\end{aligned}
\end{cases}
\end{equation}such that $\sum_{i,j} p_{ij} = 1$.
A more detailed definition of \emph{bivariate} Bernoulli random variables is given in~\cite{dai2013multivariate}.
The MLE for each $p_{ij}$ is proposed in~\cite{ip2015multivariate}. 
When estimating $P\!\left(\varphi_1(t) \cap \varphi_2(t)\right)$ 
we are essentially interested in an estimator for $p_{11}$,
which is the probability that both formulae are satisfied by the trace. 
\begin{prop}\label{prop:joint}
	The MLE for $P\!\left(\varphi_1(t) \cap \varphi_2(t)\right)$ given a trace $t = \langle e_1, \ldots, e_n \rangle$ is
\begin{equation} \widehat{P\!\left(\varphi_1(t) \cap \varphi_2(t)\right)} = \hat{p}_{11} = \frac{1}{n}\sum_{i=1}^n \Lambda\!\left(e_i, \varphi_1\right) \cdot \Lambda\!\left(e_i, \varphi_2\right).\end{equation}
\end{prop}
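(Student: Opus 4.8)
The plan is to recognize that $P(\varphi_1(t) \cap \varphi_2(t))$ corresponds exactly to the parameter $p_{11}$ of the bivariate Bernoulli random variable $X_{(\varphi_1,\varphi_2),t}$ defined in \eqref{eq:bivariate}, and then to invoke (or rederive) the known MLE for a multinomial/bivariate-Bernoulli parameter, specializing the cited result of~\cite{ip2015multivariate} to the $(1,1)$ cell. Since the joint sequence $x_{(\varphi_1,\varphi_2),t}$ is assumed to be an i.id.\ draw from $X_{(\varphi_1,\varphi_2),t}$, the likelihood factorizes as a product over the $n$ events, and the counts of the four outcomes $(0,0),(0,1),(1,0),(1,1)$ are sufficient statistics. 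This is the same structure as \cref{prop:uni}, but with a four-cell categorical variable instead of a two-cell one.

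First I would write the log-likelihood of the parameter vector $(p_{00},p_{01},p_{10},p_{11})$ given the observed counts $n_{00},n_{01},n_{10},n_{11}$ (which sum to $n$), namely $\ell = \sum_{i,j} n_{ij} \log p_{ij}$, subject to the constraint $\sum_{i,j} p_{ij} = 1$. Maximizing this by a Lagrange multiplier (or simply citing that the MLE of a multinomial cell probability is the empirical frequency) yields $\hat{p}_{ij} = n_{ij}/n$ for each cell, and in particular $\hat{p}_{11} = n_{11}/n$. The final step is to observe that $n_{11}$, the number of events at which both $\varphi_1$ and $\varphi_2$ hold, can be written as $\sum_{i=1}^n \Lambda(e_i,\varphi_1)\cdot\Lambda(e_i,\varphi_2)$, since each factor is in $\{0,1\}$ and the product is $1$ exactly when both labels are $1$. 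Dividing by $n$ gives the claimed formula.

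I do not anticipate a genuine obstacle here: the argument is a direct specialization of the multivariate-Bernoulli MLE already cited in the text, and the only mildly delicate point is making explicit that $P(\varphi_1(t)\cap\varphi_2(t))$ is by definition the probability that the bivariate variable takes value $(1,1)$, i.e., $p_{11}$ — this follows from the semantics of the labeling $\Lambda$ and the i.id.\ modeling assumption stated just before \eqref{eq:bivariate}. The indicator-product identity for $n_{11}$ is routine, so the proof should be short, mirroring the brevity of the proof of \cref{prop:uni}.
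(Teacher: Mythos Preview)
Your proposal is correct and follows essentially the same approach as the paper: identify $P(\varphi_1(t)\cap\varphi_2(t))$ with the cell probability $p_{11}$ of the bivariate Bernoulli and invoke the known MLE result from~\cite{ip2015multivariate}. The paper's proof is in fact a one-line citation, so your additional Lagrange-multiplier derivation and the indicator-product identity for $n_{11}$ simply make explicit what the paper leaves to the reference.
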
 
\begin{proof}\itshape The proof follows from the MLE estimators for Bivariate
Bernoulli random variables with corresponding success probabilities, $p_{ij}$, with $ i,j \in \{0,1\}$ (see~\cite{ip2015multivariate}). \end{proof} \noindent From the example above, we conclude that \begin{equation}
\widehat{P\!\left(\varphi_{\alpha_1}\left(t_4\right) \cap \varphi_{\tau_1}\left(t_4\right)\right)} = \frac{1}{6}, \end{equation} as only one element in the sequence is $(1,1)$. Similarly to $P(\varphi_1(t) \cap \varphi_2(t))$, we can estimate the other combinations of satisfaction and violation of the two formulae (using $\hat{p}_{00}$, $\hat{p}_{01}$, and $\hat{p}_{10}$). 
\begin{remark}
One may be tempted to assume independence between the two sequences $\Lambda\!\left(e_i, \varphi_1\right)_{i=1}^{n}$ and $\Lambda\!\left(e_i, \varphi_2\right)_{i=1}^{n}$. However, this is seldomly the case. Consider two formulae: one that requires activity $\lettera$ to happen at least once in $t$, and the other that requires that activity $\lettera$ never occurs. Clearly, the evaluation of these two formulae would not be independent. 
\end{remark} Having modeled the joint probability of two \gls{ltlf} formulae satisfied by a trace, we can now define the probability of one formula being satisfied (or violated) by $t$ conditioned on another formula being satisfied (or violated) by the same trace $t$.
The conditional distribution of the random variable,
$X_{\varphi_1,t} \ | \ X_{\varphi_2, t} = x_2$,
is a \emph{univariate} Bernoulli that depends on the sequence $x_2$ (which results from applying $\Lambda$ to $\varphi_2$ and $t$) and on the four parameters $p_{ij}$ of the joint bivariate Bernoulli distribution~(see the proof in~\cite{dai2013multivariate}). This result 
leads to the following estimator of the conditional probability. 
\begin{prop} \label{prop:conditional}
The MLE for $P\!\left(\varphi_1(t) | \varphi_2(t)\right)$  given a trace $t = \langle e_1, \ldots, e_n \rangle$ 
is 
\begin{equation}\widehat{P\!\left(\varphi_1(t) | \varphi_2(t)\right)} = \frac{\sum_{i=1}^n \Lambda\!\left(e_i, \varphi_1\right)\cdot \Lambda\!\left(e_i, \varphi_2\right)}{\sum_{i=1}^n \Lambda\!\left(e_i, \varphi_2\right)}.
\end{equation}
\end{prop}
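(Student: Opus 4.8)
The plan is to reduce the claim to the univariate Bernoulli MLE already established in~\cref{prop:uni} by restricting attention to the sub-sample of events on which $\varphi_2$ is satisfied. First I would recall, following the characterization of the conditional law of a bivariate Bernoulli in~\cite{dai2013multivariate}, that $X_{\varphi_1,t}\mid X_{\varphi_2,t}=1$ is itself a univariate Bernoulli whose success probability is $q \triangleq \tfrac{p_{11}}{p_{01}+p_{11}} = \tfrac{P(\varphi_1(t)\cap\varphi_2(t))}{P(\varphi_2(t))} = P(\varphi_1(t)\mid\varphi_2(t))$. Hence estimating $P(\varphi_1(t)\mid\varphi_2(t))$ amounts to estimating the parameter of this conditional Bernoulli from the observations indexed by $\{\,i : \Lambda(e_i,\varphi_2)=1\,\}$.

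Next I would write down the conditional likelihood. Among the $n$ i.id.\ draws collected in $x_{(\varphi_1,\varphi_2),t}$, set $k = \sum_{i=1}^n \Lambda(e_i,\varphi_2)$, the number of trials for which $\varphi_2$ holds, and $\ell = \sum_{i=1}^n \Lambda(e_i,\varphi_1)\cdot\Lambda(e_i,\varphi_2)$, the number of those for which $\varphi_1$ holds as well. Conditioning on $X_{\varphi_2,t}=1$, the likelihood of the observed labels is $q^{\ell}(1-q)^{k-\ell}$, i.e.\ the likelihood of $k$ i.id.\ $\mathrm{Bernoulli}(q)$ outcomes with effective sample size $k$ (not $n$). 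Maximizing its logarithm $\ell\ln q + (k-\ell)\ln(1-q)$ over $q\in[0,1]$ gives the first-order condition $\ell/q = (k-\ell)/(1-q)$, whose solution $\hat q = \ell/k$ is exactly the claimed expression $\bigl(\sum_i \Lambda(e_i,\varphi_1)\Lambda(e_i,\varphi_2)\bigr)\big/\bigl(\sum_i \Lambda(e_i,\varphi_2)\bigr)$.

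As a slicker alternative I would mention the functional invariance of maximum likelihood estimation: since $P(\varphi_1(t)\mid\varphi_2(t)) = P(\varphi_1(t)\cap\varphi_2(t))/P(\varphi_2(t))$ is a continuous function of the joint parameters, its MLE is obtained by substituting the estimators from~\cref{prop:joint} and~\cref{prop:uni}, namely $\widehat{P(\varphi_1(t)\cap\varphi_2(t))}\big/\widehat{P(\varphi_2(t))} = \hat p_{11}\big/\widehat{P(\varphi_2(t))}$; the common factor $1/n$ cancels and the same formula results.

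The main obstacle is bookkeeping rather than depth: one must be careful that ``conditioning'' here means restricting to the $k$ trials with $\Lambda(e_i,\varphi_2)=1$, so the denominator is $k$ and not $n$, and one must flag the degenerate case $k=0$ — $\varphi_2$ satisfied at no event of $t$ — in which the conditional likelihood is vacuous and the estimator is undefined (the NaN entries in~\cref{tab:running:example}). Apart from that caveat, the argument is the standard Bernoulli MLE derivation applied to the conditional law, exactly as the statement is phrased.
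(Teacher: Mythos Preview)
Your proposal is correct and follows essentially the same approach as the paper: both arguments rest on the fact (from~\cite{dai2013multivariate}) that the conditional law $X_{\varphi_1,t}\mid X_{\varphi_2,t}=1$ is a univariate Bernoulli, and both arrive at the ratio $\hat p_{11}/(\hat p_{01}+\hat p_{11})$ via the standard Bernoulli MLE. The paper takes your ``slicker alternative'' route---plugging the bivariate MLEs $\hat p_{11}$ and $\hat p_{01}$ into the parameter expression and simplifying---whereas your first argument, the direct maximization of the conditional likelihood on the sub-sample of size $k$, is a more self-contained derivation; your remark on the degenerate case $k=0$ matches the paper's separate remark on NaN values.
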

\begin{proof}\itshape The conditional distribution for bivariate Bernoulli is a univariate Bernoulli with success probability of $\frac{\hat{p}_{11}}{\hat{p}_{01}+\hat{p}_{11}}$~\cite{dai2013multivariate}. Therefore, the MLE estimation of $\widehat{P\!\left(\varphi_1(t) | \varphi_2(t)\right)}$ is that of a univariate Bernoulli with that success probability~\cite{bickel2015mathematical}.

\noindent
This leads to the following estimator:
\begin{equation*}\widehat{P\!\left(\varphi_1(t) | \varphi_2(t)\right)} = \frac{\hat{p}_{11}}{\hat{p}_{01}+\hat{p}_{11}},
\end{equation*}
with $\hat{p}_{11}$ as derived in \cref{prop:joint} and $\hat{p}_{01}$ being
\begin{equation*}\hat{p}_{01} = \frac{1}{n} \sum_{i=1}^n \left(1-\Lambda\!\left(e_i, \varphi_1\right)\right)\cdot \Lambda\!\left(e_i, \varphi_2\right).\end{equation*} 
 
\noindent
Therefore, \BeginSmEq
\begin{equation*} \widehat{P\!\left(\varphi_1(t) | \varphi_2(t)\right)} = \frac{\hat{p}_{11}}{\hat{p}_{01}+\hat{p}_{11}} 
                         = \frac{\sum_{i=1}^n \Lambda\!\left(e_i, \varphi_1\right)\cdot \Lambda\!\left(e_i, \varphi_2\right)}{\sum_{i=1}^n \Lambda\!\left(e_i, \varphi_2\right)}.
\end{equation*}
\EndSmEq

 \end{proof}
\begin{remark}
    When estimating $\widehat{P\!\left(\varphi_1(t) | \varphi_2(t)\right)}$, the denominator of the estimator may be equal to $0$. In such a case, the conditional probability is ill-defined and the trace is ignored for log-level computations; the value is denoted as $\textrm{NaN}$.
\end{remark}

Take, e.g., $t_4 = \langle \taskb,\taskc,\taska,\taskc,\taske,\taska \rangle$ from \cref{tab:running:example} as above, $\varphi_{\alpha_1}=\taskc$ and $\varphi_{\tau_1}=\RNFpast\taska$, namely the activator and target of $\Psi_1$.
We have that $\widehat{P\!\left(\varphi_{\tau_1}\left(t_4\right) | \varphi_{\alpha_1}\left(t_4\right)\right)}$ is $\frac{1}{2}$ as $\varphi_{\alpha_1}$ is satisfied by two events, only one of which satisfies $\varphi_{\tau_1}$ too.

Similarly, we can estimate the remaining that correspond to the random variable, $X_{\varphi_1,t} \ | \ X_{\varphi_2, t} = x_2$, i.e., $P\!\left(\neg \varphi_1(t) | \neg \varphi_2(t)\right)$, $P\!\left(\varphi_1(t) | \neg \varphi_2(t)\right)$, and $P\!\left(\neg \varphi_1(t) | \varphi_2(t)\right)$, since all of them are derived from the random variable $X_{\varphi_1,t} \ | \ X_{\varphi_2, t} = x_2$ and the fact that it is a univariate Bernoulli.

\subsection{Log estimators}
We lift our results from traces to logs by estimating
$P\!\left(\varphi(L)\right)$, i.e., the probability that the log $L$ satisfies a formula $\varphi$. Recall that an event log $L = \left\{t_1^{j_1}, \ldots, t_m^{j_m}\right\}$ is a bag of traces with trace $t_i^{j_i}$ occurring $j_i$ times in the log. The size of the log $|L| = \sum_{i=1}^m j_i$ is the total number of traces.

Let us denote with $\bar{L} = \{t_1, \ldots, t_m\}$ the set of unique traces in $L$.
We assume that the traces in $L$ are independently generated by a 
trace generator $T$, which is, in turn, associated with a discrete probability function $P(T=t)$.\footnote{In practice, the trace can be generated via a random walk over, e.g., a finite-state automaton~\cite{DBLP:conf/caise/CiccioBCM15}.} Let $\mathcal{T}$ be the support of the probability distribution of $T$, i.e., \begin{equation} \mathcal{T} = \{t \ | \ P(T=t)>0\}.\end{equation} First, we generalize our definitions from a given trace $t$ to a \emph{random} trace $T$.
To this end, we 
assume log completeness: the log contains all possible traces that can be generated from $T$, i.e., $\bar{L} = \mathcal{T}$.
We plan to lift this assumption in future work.

Next, we shall define $X_{\varphi, T}$ as a sequence of binary evaluations -- similarly to the approach we adopted to define $X_{\varphi, t}$ in Eq.~\eqref{eq:trace:eval:rand:var}, yet over a random trace $T$.
Note that $X_{\varphi, T}$ is essentially a doubly stochastic random variable, 
as its success probability, $P\!\left(X_{\varphi, T} = 1\right)$, changes for traces randomly sampled from the distribution of $T$ (which we assume to be generalized Bernoulli, i.e., Bernoulli with multiple outcomes). 
We shall use
$X_{\varphi, T}$ together with our log completeness assumption to derive 
an estimator for $P(\varphi(L))$. 
\begin{prop}\label{MLE:bayes1}
The MLE for $P(\varphi(L))$ for a log $L$ with a trace set $\bar{L} = \{ t_i = \langle e_{i,1}, \ldots, e_{i,n_i} \rangle\}_{i=1}^m$ is
\begin{align}\label{eq:p:phi:log:mle}
	\widehat{P(\varphi(L))} = \frac{1}{|L|} \sum_{i=1}^{m} \frac{j_i}{n_i} \sum_{k=1}^{n_i} \Lambda\!\left(e_{i,k}, \varphi\right).
\end{align} with $m$ being the number of unique traces in $L$ and $|L|$ denoting the number of traces in the log. \end{prop}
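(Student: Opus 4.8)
The plan is to identify $P(\varphi(L))$ with the success probability of the doubly stochastic variable $X_{\varphi,T}$ introduced above and to disentangle its two layers of randomness. By the law of total probability over the trace generator, writing $n_t$ for the length of trace $t$,
\[
  P(\varphi(L)) \;=\; P\!\left(X_{\varphi,T}=1\right) \;=\; \sum_{t \in \mathcal{T}} P(T=t)\, P(\varphi(t)) \;=\; \sum_{t \in \mathcal{T}} P(T=t)\,\frac{1}{n_t}\sum_{k=1}^{n_t}\Lambda\!\left(e_{t,k},\varphi\right).
\]
Hence the target quantity is a smooth function of two finite parameter families: the categorical (generalized Bernoulli) weights $\{P(T=t)\}$ of the trace generator, and the per-trace Bernoulli parameters $\{P(\varphi(t))\}$. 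The log-completeness assumption $\bar L = \mathcal{T}$ is what makes both families finite and indexed exactly by the observed unique traces $t_1,\ldots,t_m$; I would state explicitly that it rules out any trace of positive $T$-mass being absent from $L$, so that empirical frequencies estimate the whole distribution rather than a truncation of it.

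Next I would write the likelihood of the observed log. Since the $|L|$ traces are i.i.d.\ draws from $T$ and, conditionally on a trace $t_i$, its events are modelled as i.i.d.\ Bernoulli$(P(\varphi(t_i)))$ draws (the assumption behind Eq.~\eqref{eq:trace:eval:rand:var}), the log-likelihood splits into a multinomial term depending only on $\{P(T=t_i)\}$ with sufficient statistics the counts $j_i$, plus one Bernoulli term per unique trace depending only on $P(\varphi(t_i))$ with sufficient statistic $\sum_{k=1}^{n_i}\Lambda(e_{i,k},\varphi)$ --- repeating a trace $j_i$ times merely multiplies its Bernoulli log-likelihood term by $j_i$ and does not move its maximiser. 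Maximising the multinomial term subject to $\sum_i P(T=t_i)=1$ is the standard Lagrange-multiplier computation and gives $\widehat{P(T=t_i)} = j_i/|L|$; maximising each Bernoulli term is precisely \cref{prop:uni}, giving $\widehat{P(\varphi(t_i))} = \tfrac{1}{n_i}\sum_{k=1}^{n_i}\Lambda(e_{i,k},\varphi)$. Because the two groups of parameters appear in separate summands of the log-likelihood, the joint MLE is simply the pair of these component-wise MLEs.

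Finally, invoking the functional-invariance (equivariance) property of maximum likelihood estimators for the map $(p_i,q_i)_{i=1}^m \mapsto \sum_i p_i q_i$ and substituting, I obtain
\[
  \widehat{P(\varphi(L))} \;=\; \sum_{i=1}^m \frac{j_i}{|L|}\cdot\frac{1}{n_i}\sum_{k=1}^{n_i}\Lambda\!\left(e_{i,k},\varphi\right) \;=\; \frac{1}{|L|}\sum_{i=1}^m \frac{j_i}{n_i}\sum_{k=1}^{n_i}\Lambda\!\left(e_{i,k},\varphi\right),
\]
which is the claimed estimator. The main obstacle is not the algebra but the modelling bookkeeping: arguing that the log-likelihood genuinely decouples the trace-generator weights from the per-trace Bernoulli parameters (so the joint MLE is the concatenation of the separate ones), pinning down exactly where log completeness is invoked, and citing the MLE-invariance result in the form needed here. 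A minor caveat worth noting is the degenerate case of empty traces or traces that never activate $\varphi$, which should be excluded or handled through the same $\textrm{NaN}$ convention already adopted for the conditional estimators.
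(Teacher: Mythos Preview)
Your proposal is correct and follows essentially the same route as the paper: law of total probability over the trace generator, log completeness to replace $\mathcal{T}$ by $\bar L$, the multinomial MLE $\widehat{P(T=t_i)}=j_i/|L|$, \cref{prop:uni} for each $\widehat{P(\varphi(t_i))}$, and the functional-invariance property of MLEs (the paper cites \cite[Ch.~7]{casella2021statistical}). Your explicit argument that the log-likelihood decouples into a multinomial block and independent per-trace Bernoulli blocks is in fact a welcome refinement over the paper, which applies invariance directly without justifying that the component estimators are jointly the MLE.
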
\begin{proof}\itshape 
From the law of total probability (see Appendix~\ref{appendix:probability}) we get that \BeginSmEq
\begin{align}\label{eq:ltp_log}
	P\!\left(X_{\varphi, T} = x\right) &= \sum_{t\in \mathcal{T}} P(T=t) P\!\left(X_{\varphi, T}= x \mid T=t\right) = \nonumber \\
                          & =      \sum_{t\in \mathcal{T}} P(T=t) P\!\left(X_{\varphi, t}= x\right),
	\end{align} \EndSmEq which provides a link between log-based evaluation of formulae and trace-based evaluation. 
 Since we assume log completeness, we may replace $\mathcal{T}$ with $\bar{L}$ in Eq.~\eqref{eq:ltp_log}, and plug in $P(\varphi(L))$ for $P\!\left(X_{\varphi, T}= 1\right)$ and $P(\varphi(t))$ for 
$P(X_{\varphi, t}= 1)$, thus yielding the following: 
\begin{align}\label{eq:p:phi:log:mle2}
	\widehat{P(\varphi(L))} =  \sum_{i=1}^{m} \widehat{P(T=t_i)} \widehat{P(\varphi(t_i))}.                            
\end{align} The term $\widehat{P(\varphi(t_i))}$ is estimated using \cref{prop:uni}, and $T$ is assumed 
to be a generalized Bernoulli random variable which has a known MLE estimator as follows (it is a special case of the Multinomial random variable),
\begin{equation}
    \widehat{P(T=t_i)} = \frac{j_i}{\sum_{k=1}^m j_k} = \frac{j_i}{|L|}.
\end{equation} The proof for the latter can be found in~\cite{rao1957maximum}. 

Finally, to show that the proposed estimator for $\widehat{P(\varphi(L))}$ is indeed an MLE we use
a well-known result in statistics is that any function of multiple MLE estimators yields an MLE estimator~\cite[Ch.~7]{casella2021statistical},
which completes our proof. \end{proof}
For example, $t_4$ has a multiplicity of \num{12} considering the example log in~\cref{tab:running:example}. The cardinality of the example log $L$ is \num{45}, so $\widehat{P(t=t_4)}$ is $\frac{12}{45}$.
We saw in \cref{sec:trace-estimators} that 
$\widehat{P\!\left(\varphi_{\alpha_1}(t_4)\right)}$ is $\frac{2}{6}$
for 
$ \varphi_{\alpha_1} = \taskc
$.
Therefore, the term for $i=4$ in the summation of Eq.~\eqref{eq:p:phi:log:mle} is $\frac{12}{45}\cdot\frac{2}{6}\approxeq0.09$ for $\varphi_{\alpha_1}$.
Extending the sum to all the traces in $\bar{L}$, we have that the value of $\widehat{P\!\left(\varphi_{\alpha_1}(L)\right)}$ is approximately $0.274$.

To lift the estimators of intersection and  
conditional probabilities from traces to logs,
we can again apply the law of total probability and derive the following.
\begin{theorem}\label{MLE:intersection}
	The MLE of $P\!\left(\varphi_1(L) \cap \varphi_2(L)\right)$ for a log $L$ with a trace set $\bar{L} = \{ t_i = \langle e_{i,1}, \ldots, e_{i,n_i} \rangle\}_{i=1}^m$ is
	\BeginSmEq \begin{equation}
		\widehat{P\!\left(\varphi_1(L) \cap \varphi_2(L)\right)}= \frac{1}{|L|}\sum_{i=1}^m \frac{j_i}{n_i} \sum_{k=1}^{n_i} \Lambda\!\left(e_{i,k}, \varphi_1\right) \Lambda\!\left(e_{i,k}, \varphi_2\right).\label{eq:joint}
	\end{equation}
	\EndSmEq 
\end{theorem}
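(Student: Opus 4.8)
The plan is to follow the same blueprint as the proof of \cref{MLE:bayes1}, replacing the univariate per-trace estimator of \cref{prop:uni} with the bivariate one of \cref{prop:joint}. First I would lift the bivariate trace random variable $X_{\left(\varphi_1,\varphi_2\right),t}$ of Eq.~\eqref{eq:bivariate} to a doubly stochastic pair-sequence $X_{\left(\varphi_1,\varphi_2\right),T}$ over a random trace $T$: conditioned on $T=t$, this sequence is an i.id.\ draw from $X_{\left(\varphi_1,\varphi_2\right),t}$, whose four parameters $p_{00},p_{01},p_{10},p_{11}$ depend on $t$. Applying the law of total probability (see Appendix~\ref{appendix:probability}) over the discrete distribution of $T$, exactly as in Eq.~\eqref{eq:ltp_log}, gives
\begin{align*}
  P\!\left(X_{\left(\varphi_1,\varphi_2\right),T} = (x,y)\right)
  &= \sum_{t\in\mathcal{T}} P(T=t)\, P\!\left(X_{\left(\varphi_1,\varphi_2\right),T} = (x,y)\mid T=t\right) \\
  &= \sum_{t\in\mathcal{T}} P(T=t)\, P\!\left(X_{\left(\varphi_1,\varphi_2\right),t} = (x,y)\right),
\end{align*}
and instantiating $(x,y)=(1,1)$ identifies $P\!\left(\varphi_1(L)\cap\varphi_2(L)\right)$ with $P\!\left(X_{\left(\varphi_1,\varphi_2\right),T}=(1,1)\right)$ and $P\!\left(\varphi_1(t)\cap\varphi_2(t)\right)$ with $P\!\left(X_{\left(\varphi_1,\varphi_2\right),t}=(1,1)\right)$.

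Next I would invoke log completeness ($\bar L=\mathcal{T}$) to replace the sum over $\mathcal{T}$ with the sum over the $m$ unique traces $t_i\in\bar L$, obtaining a plug-in expression analogous to Eq.~\eqref{eq:p:phi:log:mle2}:
\begin{equation*}
  \widehat{P\!\left(\varphi_1(L)\cap\varphi_2(L)\right)} = \sum_{i=1}^m \widehat{P(T=t_i)}\;\widehat{P\!\left(\varphi_1(t_i)\cap\varphi_2(t_i)\right)}.
\end{equation*}
Then I would substitute the two known estimators: $\widehat{P(T=t_i)}=j_i/|L|$, since $T$ is a generalized Bernoulli (a special case of the multinomial) random variable~\cite{rao1957maximum}; and $\widehat{P\!\left(\varphi_1(t_i)\cap\varphi_2(t_i)\right)}=\hat{p}_{11}=\frac{1}{n_i}\sum_{k=1}^{n_i}\Lambda\!\left(e_{i,k},\varphi_1\right)\Lambda\!\left(e_{i,k},\varphi_2\right)$ by \cref{prop:joint}. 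Rearranging the resulting double sum yields precisely Eq.~\eqref{eq:joint}.

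Finally, to certify that this quantity is an MLE and not merely a plug-in estimate, I would invoke the functional invariance of maximum likelihood estimators --- any function of MLEs is an MLE of the corresponding function of the parameters~\cite[Ch.~7]{casella2021statistical}: by the total-probability identity above, $P\!\left(\varphi_1(L)\cap\varphi_2(L)\right)$ equals $\sum_{i} P(T=t_i)\,p_{11}^{(i)}$, a function of the multinomial weights $P(T=t_i)$ and of the per-trace bivariate Bernoulli parameters $p_{11}^{(i)}$, each of which possesses a known MLE.

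I expect the main obstacle to be the same one faced in \cref{MLE:bayes1}: making the hierarchical/mixture structure precise enough that both the law of total probability and the invariance argument apply cleanly. Concretely, this means (i) justifying that, conditioned on $T=t$, the pair-sequence is genuinely an i.id.\ bivariate Bernoulli sample --- which rests on the positional-independence (``uninformative prior'') assumption carried over from \cref{sec:trace-estimators} --- and (ii) arguing that the two estimation layers (the multinomial over which trace is drawn, and the bivariate Bernoulli within a trace) involve disjoint parameters, so their MLEs may be composed by invariance without further identifiability concerns. Once these points are in place, the algebraic reduction of the mixture to the closed form of Eq.~\eqref{eq:joint} is routine.
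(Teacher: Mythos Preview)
Your proposal is correct and follows essentially the same approach as the paper: apply the law of total probability over the trace distribution $T$ (under log completeness), plug in the multinomial MLE $\widehat{P(T=t_i)}=j_i/|L|$ and the bivariate-Bernoulli MLE $\hat{p}_{11}$ from \cref{prop:joint}, and conclude via the functional-invariance property of MLEs~\cite[Ch.~7]{casella2021statistical}. Your write-up is somewhat more explicit about the doubly stochastic construction and the positional-independence assumption than the paper's, but the argument is the same.
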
 \begin{proof}\itshape
   From the law of total probability, we get that
   \footnotesize
   \begin{align*}
		P\!\left(\varphi_1(L) \cap \varphi_2(L)\right) &= \sum_{i=1}^m P(T=t_i)
        P\!\left(\varphi_1(T_i) \cap \varphi_2(T_i) \mid T_i = t_i\right) = \\
            &=\sum_{i=1}^m P(T=t_i) P\!\left(\varphi_1(t_i) \cap \varphi_2(t_i)\right),
            \end{align*}
            \EndSmEq
    which leads to the following estimate: \BeginSmEq
    \begin{align} \widehat{P\!\left(\varphi_1(L) \cap \varphi_2(L)\right)} & = \sum_{i=1}^m \widehat{P(T=t_i)} \widehat{P\!\left(\varphi_1(t_i) \cap \varphi_2(t_i)\right)} = \nonumber \\
             &=\frac{1}{|L|}\sum_{i=1}^m \frac{j_i}{n_i} \sum_{k=1}^{n_i} \Lambda\!\left(e_{i,k}, \varphi_1\right) \Lambda\!\left(e_{i,k}, \varphi_2\right).\label{eq:joint2}
	\end{align} \EndSmEq To obtain the result in Eq.~\eqref{eq:joint2} we estimate $P\!\left(T=t_i\right)$ as in~\cref{MLE:bayes1} (Multinomial MLE) and  
$P\!\left(\varphi_1(t) \cap \varphi_2(t)\right)$ as in~\cref{prop:joint} (bivariate Bernoulli).
Since both are MLEs, we get that $\widehat{P\!\left(\varphi_1(L) \cap \varphi_2(L)\right)}$ is an MLE~\cite[Ch.~7]{casella2021statistical}. 
\end{proof}

In \cref{sec:trace-estimators}, e.g., we showed that
$\widehat{P\!\left(\varphi_{\alpha_1}(t_4) \cap \varphi_{\tau_1}(t_4)\right)}$ is $\frac{1}{6}$
considering the example log, $\varphi_{\alpha_1}$ as above, and $\varphi_{\tau_1}=\RNFpast\taska$.
Recalling that $\widehat{P\!\left(t=t_4\right)}$ is $\frac{12}{45}$, we have that the term of the summation in Eq.~\eqref{eq:joint} for $t=t_4$ is $\frac{12}{45}\cdot\frac{1}{6}\approxeq0.04$.
The value of $\widehat{P\!\left(\varphi_{\alpha_1}(L) \cap \varphi_{\tau_1}(L)\right)}$ is computed by summing up the elements obtained for every $t\in\bar{L}$, thus obtaining $0.219$.

Lastly, we show an estimator for the conditional distribution 
$P\!\left(\varphi_1(L) \mid \varphi_2(L)\right)$ -- similarly, one can
derive estimators for the other conditional probabilities as for traces.
\begin{theorem}\label{MLE:conditional}
	The MLE for $P\!\left(\varphi_1(L) \mid \varphi_2(L)\right)$ 
	for a log $L$ with a trace set $\bar{L} = \{ t_i = \langle e_{i,1}, \ldots, e_{i,n_i} \rangle\}_{i=1}^m$  
	is \BeginSmEq \begin{align}\label{eq:log_conditional}
    	\widehat{P\!\left(\varphi_1(L) \ | \ \varphi_2(L)\right)} = \frac{\sum_{i=1}^m \frac{j_i}{n_i} \sum_{k=1}^{n_i} \Lambda\!\left(e_{i,k}, \varphi_1\right) \cdot \Lambda\!\left(e_{i,k}, \varphi_2\right)}{\sum_{i=1}^{m} \frac{j_i}{n_i} \sum_{k=1}^{n_i} \Lambda\!\left(e_{i,k}, \varphi_2\right)}.
	\end{align} \EndSmEq
\end{theorem}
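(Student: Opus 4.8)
The plan is to obtain the log-level conditional estimator by the same two-step recipe used for \cref{MLE:intersection}: first rewrite the target quantity in terms of pieces whose log-level MLEs have already been established, then invoke the invariance of MLEs under functions. Concretely, by the definition of conditional probability, $P\!\left(\varphi_1(L)\mid\varphi_2(L)\right) = P\!\left(\varphi_1(L)\cap\varphi_2(L)\right)\big/P\!\left(\varphi_2(L)\right)$, so it suffices to plug the MLE of the numerator from \cref{MLE:intersection} and the MLE of the denominator from \cref{MLE:bayes1} (the latter applied to $\varphi_2$) into this ratio. Since any function of MLEs is itself an MLE~\cite[Ch.~7]{casella2021statistical}, the ratio $\widehat{P\!\left(\varphi_1(L)\cap\varphi_2(L)\right)}\big/\widehat{P\!\left(\varphi_2(L)\right)}$ is the MLE of $P\!\left(\varphi_1(L)\mid\varphi_2(L)\right)$.

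I would then substitute the explicit forms and cancel: the numerator is $\frac{1}{|L|}\sum_{i=1}^m \frac{j_i}{n_i}\sum_{k=1}^{n_i}\Lambda\!\left(e_{i,k},\varphi_1\right)\Lambda\!\left(e_{i,k},\varphi_2\right)$ from Eq.~\eqref{eq:joint}, and the denominator is $\frac{1}{|L|}\sum_{i=1}^m \frac{j_i}{n_i}\sum_{k=1}^{n_i}\Lambda\!\left(e_{i,k},\varphi_2\right)$ from Eq.~\eqref{eq:p:phi:log:mle}. The common factor $\frac{1}{|L|}$ cancels, yielding exactly Eq.~\eqref{eq:log_conditional}. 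Mirroring the trace-level discussion after \cref{prop:conditional}, I would add a one-line remark that when the denominator vanishes the conditional probability is ill-defined, is denoted $\mathrm{NaN}$, and such logs are excluded from aggregate computations.

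As an alternative, to stay even closer to the structure of the earlier proofs one could go through the law of total probability directly; but here a little care is needed, since—unlike the joint probability—the conditional $P\!\left(\varphi_1(L)\mid\varphi_2(L)\right)$ does not decompose additively over traces. The correct move is to apply total probability (together with log completeness, replacing $\mathcal{T}$ by $\bar{L}$, exactly as in \cref{MLE:bayes1,MLE:intersection}) to the numerator and the denominator \emph{separately}, obtaining $P\!\left(\varphi_1(L)\cap\varphi_2(L)\right)=\sum_i P(T=t_i)\,P\!\left(\varphi_1(t_i)\cap\varphi_2(t_i)\right)$ and $P\!\left(\varphi_2(L)\right)=\sum_i P(T=t_i)\,P\!\left(\varphi_2(t_i)\right)$, and only afterwards forming the ratio; this is essentially why I prefer the first route, which avoids the temptation to push the ratio inside the sum.

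I expect the main obstacle to be not the algebra (a routine cancellation) but the justification that the MLE-invariance argument is legitimate for a ratio whose denominator can vanish—i.e.\ making explicit that the estimator is defined on the event $\widehat{P\!\left(\varphi_2(L)\right)}>0$ and coincides with the conditional MLE there—together with being transparent about the independence and log-completeness assumptions inherited from \cref{MLE:bayes1,MLE:intersection}. Stating this restriction explicitly closes the gap and completes the proof.
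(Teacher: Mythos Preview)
Your proposal is correct and follows essentially the same approach as the paper: write the conditional as the ratio of the joint and the marginal, plug in the MLEs from \cref{MLE:intersection} and \cref{MLE:bayes1}, cancel the common $\tfrac{1}{|L|}$, and invoke invariance of MLEs under functions~\cite[Ch.~7]{casella2021statistical}. Your additional remarks on the vanishing-denominator case and the caution about not pushing the ratio inside the sum are welcome elaborations but not part of the paper's (terser) proof.
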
 \begin{proof}\itshape  
  	By definition of conditional probability we get that \begin{align*}
    	\widehat{P\!\left(\varphi_1(L) \ | \ \varphi_2(L)\right)} = \frac{\widehat{P\!\left(\varphi_1(L) \cap \varphi_2(L)\right)}}{\widehat{P\!\left(\varphi_2(L)\right)}},
	\end{align*}  
with $\widehat{P\!\left(\varphi_1(L) \cap \varphi_2(L)\right)}$ 
estimated as in \cref{MLE:intersection}, and $\widehat{P\!\left(\varphi_2(L)\right)}$ as in \cref{MLE:bayes1}. Plugging the two expressions gets us to the estimator in Eq.~\eqref{eq:log_conditional}.
        We once more use the result in \cite[Chapter 7]{casella2021statistical} to prove that the estimator is an MLE.
 \end{proof} 
For instance, we showed above that for the example log $L$ in \cref{tab:running:example} the estimations
$\widehat{P\!\left(\varphi_{\alpha_1}(L) \cap \varphi_{\tau_1}(L)\right)}$ and
$\widehat{P\!\left(\varphi_{\alpha_1}(L)\right)}$ are $0.219$ and $0.274$, respectively.
By applying the computation above, we have that
$\widehat{P\!\left(\varphi_{\tau_1}(L) \ | \ \varphi_{\alpha_1}(L)\right)}$ is $\frac{0.219}{0.274}\approxeq0.80$.

\medskip
We conclude this section by highlighting that the computation of the estimators described thus far is tractable.
\begin{theorem}\label{thm:estimation_ltl}
	The estimators for \gls{ltlf} formulae being satisfied by a trace or a log, and the intersection and conditional probabilities thereof
	are computable in polynomial time.
\end{theorem}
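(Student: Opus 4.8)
The plan is to observe that every estimator derived in \cref{sec:estimation} is given by a closed-form arithmetic expression whose only non-trivial ingredients are the event labels $\Lambda(e_{i,k},\varphi)$; once these labels are available, each estimator reduces to a fixed number of additions, multiplications, and at most one division over $O(n)$ or $O(nm)$ terms. Hence the whole computation inherits its complexity from the labeling step, which is already known to be tractable.

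First I would recall from \cref{thm:checking:log} (and \cref{thm:specification:check} for the pairs of formulae needed by the intersection and conditional estimators) that labeling all events of a log $L$ with the truth value of a formula $\varphi$ --- i.e., producing the sequences $x_{\varphi,t}$ for every $t\in\bar{L}$ --- is feasible in $O(n^3 \times \|\varphi\| \times m)$, where $m$ is the number of unique traces and $n$ bounds the trace length. For the bivariate quantities one simply invokes this labeling twice, once for $\varphi_1$ and once for $\varphi_2$, which does not change the asymptotics.

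Next I would go through the estimators in turn. At the trace level, \cref{prop:uni} is an average of $n$ labels ($O(n)$ arithmetic operations); \cref{prop:joint} multiplies the two label sequences pointwise and averages ($O(n)$); \cref{prop:conditional} is a ratio of two such sums ($O(n)$, with the degenerate zero-denominator case returning $\textrm{NaN}$). Lifting to logs, \cref{MLE:bayes1}, \cref{MLE:intersection}, and \cref{MLE:conditional} are weighted sums over the $m$ unique traces in which each weight $j_i/n_i$ and each inner sum over $n_i \leq n$ events is computed directly from the already-available labels; this contributes only $O(nm)$ additional work. Combining, every estimator is computable in $O(n^3 \times \|\varphi\| \times m)$ time overall --- dominated by the labeling --- which is polynomial in the size of $L$ and $\varphi$.

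The argument is essentially bookkeeping, so the only point that carries real content is the first one: that the labeling $\Lambda$ itself runs in polynomial time. This is precisely what \cref{thm:checking:trace} and its corollaries establish, including the observation that the past modalities do not increase the complexity since they are handled by reading the trace in reverse. Given that, the remaining steps are routine, and I do not anticipate any genuine obstacle.
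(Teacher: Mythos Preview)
Your proposal is correct and follows essentially the same approach as the paper: both arguments rest on \cref{thm:checking:trace} and \cref{thm:checking:log} to establish that the labeling $\Lambda$ is polynomial, and then observe that each estimator is a simple arithmetic aggregation over the resulting labels. Your version is considerably more detailed --- you walk through each estimator individually and give explicit operation counts --- whereas the paper dispatches the post-labeling step in a single clause, bounding it by $O(|L|\times n)$; but the underlying structure of the argument is identical.
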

\begin{proof}\itshape The proof relies on Theorem~\ref{thm:checking:trace} and Corollary~\ref{thm:checking:log}: once we have checked and labeled the traces, the computation of estimators requires only queries over the resulting labels, which can be performed in $O(|L| \times n)$ considering $n$ as the length of the longest trace in the log. \end{proof}

\section{Evaluation and measurement of specifications}
\label{sec:approach}

In the previous section, we estimated the probabilities of any \gls{ltlf} formula.
Yet, as the evaluation of an \gls{rf} differs from that of an \gls{ltlf} formula (see \cref{sec:rf}), 
the evaluation of a specification consisting of \glspl{rf} should take into account the interplay of activators and targets. The key point is in how to evaluate an \emph{intersection} of \glspl{rf} (i.e., a specification) on events.
The rationale is that once we have the evaluation of the specification on every event, we can estimate the probabilities and consequently its measures like for any other \gls{rf}.
In the remainder of this section, we start formalizing the semantics of \gls{rf} intersections in \cref{sec:specs}, continue proposing estimators of probabilities of traces and log satisfying such specifications (\cref{sec:logestimators}), describe the computation of interestingness measures (\cref{sec:computing:measures}), and finally highlight advantages and practical implications of our approach (\cref{sec:approach-discussion}).

\subsection{Evaluating specifications}
\label{sec:specs}
Formally, we define the semantics of a specification {\DeclaModel} as follows: given a trace $ t $ of length $ n $, an instant $ i $ with $ 1\leq i\leq n $, and a specification $ \DeclaModel \triangleq \{ \RfFormula_1, \ldots, \RfFormula_s \}$, with $s \in \mathbb{N}$ and $\RfFormula_j = \RfActivationFormulaVar{j} \RfImplication \RfObjectFormulaVar{j}$ for every $j$ s.t.\ $1 \leq j \leq s$, we say that 
\begin{compactitem}
	\item{\DeclaModel} is activated by $\EvtTrace$ in $\instant$\normalfont, i.e., $ (\EvtTrace, \instant) \models \DeclaModel_\RfActivation $, iff there exists a $ \RfFormula_j \in \DeclaModel $ s.t.\ $ (\EvtTrace, \instant) \vDash \RfActivationFormulaVar{j}$; \item{\DeclaModel} is satisfied by $\EvtTrace$ in $\instant$\normalfont, $ (\EvtTrace, \instant) \models \DeclaModel$, iff $ (\EvtTrace, \instant) \models \DeclaModel_\RfActivation $ and there does not exist any $ \RfFormula_j\in\DeclaModel$ s.t.\ $(t,i) \nvDash \RfFormula_j$; \item{\DeclaModel} is violated by $\EvtTrace$ in $\instant$\normalfont, $ (\EvtTrace, \instant) \nvDash \DeclaModel $, iff there exists a $ \RfFormula_j\in\DeclaModel$ s.t.\ $(t,i) \nvDash \RfFormula_j$;
\item{\DeclaModel} is unaffected by $\EvtTrace$ in $\instant$ iff $ (\EvtTrace, \instant) \nvDash \DeclaModel_\RfActivation $.
\end{compactitem}
In other words, {\DeclaModel}  is activated if at least one of its \glspl{rf} is activated, satisfied if all and only its activated \glspl{rf} are satisfied, violated if at least one activated \gls{rf} is violated, and unaffected if it is not activated.

In light of the above, we can express a specification $\DeclaModel = \{ \RfFormula_1, \ldots, \RfFormula_s \}$ as an \gls{rf}, $ \DeclaModel = \DeclaModel_\RfActivation \RfImplication \DeclaModel_\RfObject$, where $\DeclaModel_\RfActivation$ and $\DeclaModel_\RfObject$ are {\ltlf} formulae expressed as follows:
\begin{equation}
	\DeclaModel_\RfActivation =
	\bigvee\limits_{j=1}^{s}{\varphi_{\alpha_j}};\qquad
	\DeclaModel_\RfObject =
	\bigwedge\limits_{j=1}^{s}{\neg\left(\varphi_{\alpha_j} \wedge \neg\varphi_{\tau_j}\right)}.
\end{equation}
\noindent
For example, 
{\DeclaModel} from \cref{tab:running:example} is activated when either $\RfFormula_1$ or $\RfFormula_2$ are (i.e., $\DeclaModel_\RfActivation = \taskc \lor \taskd $) and it is satisfied when all the activated constraints are satisfied, i.e., $\DeclaModel_\RfObject = (\lnot\taskc\lor\RNFpast\taska)\land(\lnot\taskd\lor\RNFfut\taske)$. Hence, {\DeclaModel} is violated in $(\EvtTrace_3,1)$, e.g, because $\RfFormula_1$ is violated and satisfied in $(\EvtTrace_3,2)$, because $\RfFormula_2$ is satisfied and $\RfFormula_1$ is unaffected. 

The possibility to reduce a specification to a single \gls{rf} is key to defining the corresponding estimators and thereby computing the probability a trace and a log satisfy it.

\subsection{Estimators for Specifications}\label{sec:estimators}
In this part, we define what trace and log estimators
for single \gls{rf}s, as well as for \gls{rf} specifications. 
\subsubsection{Trace Estimators} We first start by estimating the interestingness degree of an \gls{rf}. 
Let $\Lambda_R$ be an \gls{rf} interpreter that takes an event and an \gls{rf} $\RfFormula = \RfActivationFormula \RfImplication \RfObjectFormula$ and returns a label $\Lambda_R(e,\RfFormula) \in \{0,\dnk,1\}$ corresponding to $\RfFormula$ being violated, unaffected, and satisfied by $e$, respectively. The labeling of an event given an \gls{rf} $\Psi$ resorts to $\Lambda$ (explained in \cref{sec:trace-estimators})
as follows:
\begin{equation}
\Lambda_R(e, \Psi) = \begin{cases}
	\begin{aligned}
		0, &  &  & \textrm{if} \ \ \Lambda\!\left(e, \varphi_\alpha\right)=1 \ \ \textrm{ and } \ \ \Lambda\!\left(e, \varphi_\tau\right)=0, \\
		1, &  &  & \textrm{if} \ \ \Lambda\!\left(e, \varphi_\alpha\right)=1 \ \ \textrm{ and } \ \ \Lambda\!\left(e, \varphi_\tau\right)=1,\\
		\dnk, &  &  & \textrm{otherwise}.
	\end{aligned}
\end{cases}\end{equation}
Notice that {\dnk} is a new outcome of the labeling function $\Lambda_R$ that solely applies to \glspl{rf} but not to {\ltlf} formulae (see the definition of $\Lambda$ in \cref{sec:trace-estimators}).
The second column of~\cref{tab:running:example} lists the labels assigned by $\Lambda_R$ to every event in the traces and all \glspl{rf} in a sequence. For example, the evaluation of $\Psi_1=\varphi_{\alpha_1}\RfImplication\varphi_{\tau_1}=\taskc\RfImplication\RNFpast\taska$ on $t_4=\langle e_{4,1},\ldots,e_{4,6} \rangle = \langle \taskb,\taskc,\taska,\taskc,\taske,\taska \rangle$ is such that
$(t_4,1) \nvDash \taskc$, thus $\Lambda_R(e_{4,1},\Psi_1)=\dnk$; also, we observe that
$(t_4,2) \vDash \taskc$ but 
$(t_4,2) \nvDash \RNFpast\taska$, therefore $\Lambda_R(e_{4,2},\Psi_1)=0$; finally, we have that
$(t_4,4) \vDash \taskc$ and 
$(t_4,4) \vDash \RNFpast\taska$, so $\Lambda_R(e_{4,4},\Psi_1)=1$.
Following this approach, we attain the following sequence of labels from $t_4$ via $\Lambda_R$ on $\Psi_1$: $\langle \dnk,0, \dnk, 1, \dnk, \dnk \rangle$.

We aim at estimating the probabilities of 
\emph{interesting} satisfaction and violation of a given \gls{rf} in a trace,
which correspond to cases in which the \gls{rf} is activated by it (see \cref{sec:rcon:syntax:semantics}).
Therefore, the corresponding random variable that describes
an `interesting' \gls{rf} (i.e., an \gls{rf} that is activated by a trace and satisfied in it), is $$X_{\varphi_\tau,t} \mid X_{\varphi_\alpha,t} = 1.$$
As before, the latter is a univariate Bernoulli random variable with success probability of 
$$P(\Psi(t)) = P\!\left(\varphi_\tau(t) \ | \ \varphi_\alpha(t)\right).$$
In other words, $P(\Psi(t))$ is the probability that an \gls{rf} $\Psi$ is interestingly satisfied by 
some trace $t$. We are now ready to state our first estimation result. 

\begin{prop} \label{prop:rf:intersection}
The MLE of $P(\Psi(t))$ for a given trace $t = \langle e_1, \ldots, e_n \rangle$  is 
\BeginSmEq \begin{equation}
\widehat{P(\Psi(t))} = \frac{\sum_{i=1}^n \Lambda\!\left(e_i, \varphi_\tau(t)\right)\cdot \Lambda\!\left(e_i, \varphi_\alpha(t)\right)}{\sum_{i=1}^n \Lambda\!\left(e_i, \varphi_\alpha(t)\right)}.
\end{equation}
\EndSmEq 
\end{prop}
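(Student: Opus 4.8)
The plan is to recognize that \cref{prop:rf:intersection} is, in essence, \cref{prop:conditional} viewed through the semantic lens of an \gls{rf}. The first step I would take is to make the identification $P(\Psi(t)) = P\!\left(\varphi_\tau(t) \mid \varphi_\alpha(t)\right)$ precise. By definition, an \gls{rf} $\Psi = \varphi_\alpha \RfImplication \varphi_\tau$ is \emph{interestingly} satisfied by an event exactly when it is both activated there and its target holds there; hence the event whose probability we wish to estimate is the one described by the conditional random variable $X_{\varphi_\tau,t}\mid X_{\varphi_\alpha,t}=1$ introduced just above the statement. As recalled in the proof of \cref{prop:conditional} (following \cite{dai2013multivariate}), the conditional of a bivariate Bernoulli is itself a univariate Bernoulli, whose single parameter is $P\!\left(\varphi_\tau(t)\mid\varphi_\alpha(t)\right)$; this is exactly $P(\Psi(t))$.

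Given this identity, the second step is immediate: invoke \cref{prop:conditional} with $\varphi_1 := \varphi_\tau$ and $\varphi_2 := \varphi_\alpha$. That proposition already supplies
$$\widehat{P\!\left(\varphi_\tau(t)\mid\varphi_\alpha(t)\right)} = \frac{\sum_{i=1}^n \Lambda\!\left(e_i,\varphi_\tau\right)\cdot\Lambda\!\left(e_i,\varphi_\alpha\right)}{\sum_{i=1}^n \Lambda\!\left(e_i,\varphi_\alpha\right)},$$
which is precisely the claimed formula. Since the reparameterization $P(\Psi(t)) = P\!\left(\varphi_\tau(t)\mid\varphi_\alpha(t)\right)$ is just the identity map, the functional-invariance property of maximum-likelihood estimators (the same property invoked via \cite[Ch.~7]{casella2021statistical} elsewhere in the paper) guarantees that this estimator is also the MLE of $P(\Psi(t))$. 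I would close the argument by noting, as in the remark following \cref{prop:conditional}, that when $\sum_{i=1}^n \Lambda(e_i,\varphi_\alpha)=0$ --- i.e., the \gls{rf} is not activated by $t$ --- the quantity is undefined (NaN) and the trace is discarded for log-level aggregation.

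The one point I would be careful about --- and the only place the argument is more than a formality --- is the bookkeeping linking the \gls{rf}-level labels $\Lambda_R(e,\Psi)\in\{0,\dnk,1\}$ to the pair $\bigl(\Lambda(e,\varphi_\alpha),\Lambda(e,\varphi_\tau)\bigr)$ that \cref{prop:conditional} consumes. One must verify, directly from the definition of $\Lambda_R$ stated before the proposition, that $\Lambda_R(e,\Psi)=1$ iff $\bigl(\Lambda(e,\varphi_\alpha),\Lambda(e,\varphi_\tau)\bigr)=(1,1)$, and that $\Lambda_R(e,\Psi)\neq\dnk$ iff $\Lambda(e,\varphi_\alpha)=1$; this ensures that ``conditioning on activation and counting interesting satisfactions'' is exactly what the numerator and denominator of the formula compute. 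This check is routine but is what makes the reduction to \cref{prop:conditional} legitimate rather than merely notational, so I would spell it out explicitly before concluding.
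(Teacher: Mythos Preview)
Your proposal is correct and matches the paper's own proof essentially verbatim: the paper simply states $P(\Psi(t)) = P\!\left(\varphi_\tau(t)\mid\varphi_\alpha(t)\right)$ and then invokes \cref{prop:conditional} to obtain the displayed estimator. Your additional remarks on functional invariance, the NaN case, and the $\Lambda_R$ bookkeeping are sound elaborations but go beyond what the paper spells out.
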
 
\begin{proof}\itshape We need to estimate the probability that an \gls{rf} is interestingly satisfied by $t$, 
\begin{equation*}
P(\Psi(t)) = P\!\left(\varphi_\tau(t) \ | \ \varphi_\alpha(t)\right).
\end{equation*} To this end, we can employ~\cref{prop:conditional} to compute the corresponding MLE: \BeginSmEq
\begin{equation*}
\widehat{P\!\left(\varphi_\tau(t) \ | \ \varphi_\alpha(t)\right)} = \frac{\sum_{i=1}^n \Lambda\!\left(e_i, \varphi_\tau(t)\right)\cdot \Lambda\!\left(e_i, \varphi_\alpha(t)\right)}{\sum_{i=1}^n \Lambda\!\left(e_i, \varphi_\alpha(t)\right)}.
\end{equation*} \EndSmEq 
\end{proof}  \noindent For example, $\widehat{P\!\left(\Psi_1(t_4)\right)}=\frac{1}{2}$, as shown in 
\cref{sec:trace-estimators}. Moreover, note that for $t_4$, $\Psi_2$ is never activated,
which leads to $\widehat{P\!\left(\Psi_2(t_4)\right)} = \textrm{NaN}$. To obtain an MLE for $\widehat{P\!\left(\lnot \Psi(t)\right)}$ we can write
\begin{equation}
\widehat{P\!\left(\lnot \Psi(t)\right)} = 1-\widehat{P(\Psi(t))},
\end{equation} due to the result that a function of an MLE is an MLE~\cite{casella2021statistical}. 
\todo{Remark 5.1 was removed}

At this stage, we turn to estimate the probabilities of interest for a specification $\mathcal{S}$ (i.e., a set of \glspl{rf}). 
Specifically, since a specification is interpreted similarly to 
a single \gls{rf}, once we obtained the interpretation for $\mathcal{S}_\RfActivation$ and $\mathcal{S}_\RfObject$, 
we apply the labeling mechanism $\Lambda_R$ again to obtain one of the three possible outcomes:
\begin{equation}
\Lambda_R(e, \mathcal{S}) = \begin{cases}
	\begin{aligned}
		0, &  &  & \textrm{if } \Lambda\!\left(e, \mathcal{S}_\RfActivation\right)=1  \textrm{ and }  \Lambda\!\left(e, \mathcal{S}_\RfObject\right)=0, \\
		1, &  &  & \textrm{if } \Lambda\!\left(e, \mathcal{S}_\RfActivation\right)=1  \textrm{ and }  \Lambda\!\left(e, \mathcal{S}_\RfObject\right)=1, \\
		\dnk, &  &  & \textrm{otherwise}.
	\end{aligned}
\end{cases}\end{equation} Note that $\varphi_\alpha$ and $\varphi_\tau$ are replaced by $\mathcal{S}_\RfActivation$ and $\mathcal{S}_\RfObject$, respectively, but the event-based labeling mechanism $\Lambda_R$ remains unchanged. 
This allows for the re-use of \cref{prop:rf:intersection}
to obtain estimators for interesting satisfaction and violation of specifications denoted by $P(\mathcal{S}(t))$ and $P(\neg\mathcal{S}(t))$, respectively. 

For estimating interesting satisfaction recall that $P(\mathcal{S}(t)) = P\!\left(\mathcal{S}_\tau(t) \ | \ \mathcal{S}_\alpha(t)\right)$. Thus, we get the following result. \begin{prop} \label{prop:specification:trace}
	The MLE of $P(\mathcal{S}(t))$ for a given trace $t = \langle e_1, \ldots, e_n \rangle$ is
	\BeginSmEq \begin{equation}
	\widehat{P(\mathcal{S}(t))} = \widehat{P\!\left(\mathcal{S}_\tau(t) \ | \ \mathcal{S}_\alpha(t)\right)} = \frac{\sum_{i=1}^n \Lambda\!\left(e_i,\mathcal{S}_\RfObject\right)\cdot \Lambda\!\left(e_i, \mathcal{S}_\RfActivation\right)}{\sum_{i=1}^n \Lambda\!\left(e_i, \mathcal{S}_\RfActivation\right)}.
	\end{equation} 
	\EndSmEq \end{prop}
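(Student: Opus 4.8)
The plan is to reduce this statement to \cref{prop:rf:intersection} by exploiting the reformulation of a specification as a single \gls{rf} established in \cref{sec:specs}. First I would observe that $\mathcal{S}_\RfActivation = \bigvee_{j=1}^s \RfActivationFormulaVar{j}$ and $\mathcal{S}_\RfObject = \bigwedge_{j=1}^s \neg\!\left(\RfActivationFormulaVar{j} \wedge \neg \RfObjectFormulaVar{j}\right)$ are finite Boolean combinations of {\ltlf} formulae, hence themselves {\ltlf} formulae over $\LogAlph$. Consequently, the labeling mechanism $\Lambda$ of \cref{sec:trace-estimators} applies to them unchanged, and by \cref{thm:checking:log} the label sequences $\langle \Lambda(e_i,\mathcal{S}_\RfActivation) \rangle_{i=1}^n$ and $\langle \Lambda(e_i,\mathcal{S}_\RfObject) \rangle_{i=1}^n$ are well-defined (and polynomially computable).

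The key step is to verify that the event-wise three-valued label $\Lambda_R(e_i, \mathcal{S})$ obtained by feeding $\mathcal{S}_\RfActivation$ and $\mathcal{S}_\RfObject$ into the \gls{rf} interpreter coincides with the direct semantics of \cref{sec:specs}. This is a propositional unfolding of the definitions: $(t,i) \models \mathcal{S}_\RfActivation$ iff some $\RfActivationFormulaVar{j}$ holds at $i$, which is exactly ``$\mathcal{S}$ is activated by $t$ in $i$''; and, given activation, $(t,i) \models \mathcal{S}_\RfObject$ iff for no $j$ do we have both $(t,i) \models \RfActivationFormulaVar{j}$ and $(t,i) \nvDash \RfObjectFormulaVar{j}$, i.e., no activated $\RfFormula_j$ is violated at $i$, which is exactly ``$\mathcal{S}$ is satisfied by $t$ in $i$'' (here $\neg(\RfActivationFormulaVar{j} \wedge \neg \RfObjectFormulaVar{j})$ is propositionally equivalent to $\RfActivationFormulaVar{j} \to \RfObjectFormulaVar{j}$). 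The ``violated'' and ``unaffected'' cases follow symmetrically. Thus $\mathcal{S}$ behaves event by event exactly as the single \gls{rf} $\mathcal{S}_\RfActivation \RfImplication \mathcal{S}_\RfObject$, and in particular $P(\mathcal{S}(t))$, the probability of an \emph{interesting} satisfaction, equals $P\!\left(\mathcal{S}_\RfObject(t) \mid \mathcal{S}_\RfActivation(t)\right)$ by the same reasoning used for single \glspl{rf} in \cref{sec:estimators}.

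It then remains to invoke \cref{prop:rf:intersection} verbatim with the substitution $\varphi_\alpha \mapsto \mathcal{S}_\RfActivation$, $\varphi_\tau \mapsto \mathcal{S}_\RfObject$ (equivalently, to apply \cref{prop:conditional} to the bivariate Bernoulli $X_{(\mathcal{S}_\RfObject,\mathcal{S}_\RfActivation),t}$), which yields $\widehat{P(\mathcal{S}(t))} = \left(\sum_{i=1}^n \Lambda(e_i,\mathcal{S}_\RfObject)\cdot \Lambda(e_i,\mathcal{S}_\RfActivation)\right) \big/ \left(\sum_{i=1}^n \Lambda(e_i,\mathcal{S}_\RfActivation)\right)$, as claimed; the degenerate case of an empty denominator (i.e., $\mathcal{S}$ never activated by $t$) is inherited from the remark following \cref{prop:conditional} and reported as $\textrm{NaN}$.

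The main --- indeed essentially the only substantive --- obstacle is the event-wise equivalence of the second paragraph: one must check carefully that $\Lambda_R$ applied to the derived pair $(\mathcal{S}_\RfActivation, \mathcal{S}_\RfObject)$ faithfully reproduces the ``all and only the activated \glspl{rf} are satisfied'' clause of \cref{sec:specs}, rather than some strictly weaker or stronger condition. Once this correspondence is pinned down, everything downstream is a direct citation of the already-established single-\gls{rf} estimation results.
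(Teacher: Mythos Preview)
Your proposal is correct and follows essentially the same approach as the paper: reduce the specification to the single \gls{rf} $\mathcal{S}_\RfActivation \RfImplication \mathcal{S}_\RfObject$ and then invoke \cref{prop:rf:intersection} with the substitution $\varphi_\alpha \mapsto \mathcal{S}_\RfActivation$, $\varphi_\tau \mapsto \mathcal{S}_\RfObject$. The paper's proof is terser because the event-wise equivalence you spell out in your second paragraph has already been established in \cref{sec:specs}, so the paper simply cites that $\mathcal{S}$ \emph{is} an \gls{rf} and plugs in; your more explicit verification is fine but not strictly needed as a fresh argument here.
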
 \begin{proof}\itshape Since $\mathcal{S}$ is an \gls{rf}, the MLE for the probability of interestingly satisfying it is given in~\cref{prop:rf:intersection}. We plug-in the activator and target of the \gls{rf}, 
$\mathcal{S}_\RfActivation$ and $\mathcal{S}_\RfObject$ instead of $\varphi_\alpha$ and $\varphi_\tau$, respectively, to complete the proof. \end{proof}
Notice that we use a similar notation for single \glspl{rf}, replacing $\Psi(t)$ with $\mathcal{S}(t)$ to denote satisfaction of a specification. For example, $\widehat{P\!\left(\DeclaModel(t_4)\right)}=\frac{1}{2}$, as we consider only $0$ or $1$ outcomes applying the computation described in~\cref{prop:specification:trace}.

To obtain an MLE for $\widehat{P(\lnot \mathcal{S}(t))}$ we can write
\begin{equation}
\widehat{P(\lnot \mathcal{S}(t))} = 1-\widehat{P(\mathcal{S}(t))},
\end{equation} due to the result that a function of an MLE is an MLE~\cite{casella2021statistical}.

\subsubsection{Log Estimators} \label{sec:logestimators}
To derive log estimators we again assume that 
$L$ is complete, i.e., $\bar{L} = \mathcal{T}$ with $\mathcal{T}$ being the support of the probability distribution of $T$. 
In what follows, we 
provide a result for a specification of \glspl{rf} (including the special case that the specification is a single \gls{rf}).
Let $P(\mathcal{S}(L))$ denote the probability of a log $L$ to interestingly satisfy a specification $\mathcal{S}$.
\begin{theorem} \label{prop:rf:intersection:log}
 The MLE of $P(\mathcal{S}(L))$ for a log $L$ with a trace set $\bar{L} = \{ t_i = \langle e_{i,1}, \ldots, e_{i,n_i} \rangle\}_{i=1}^m$  
	is given by \BeginSmEq
	\begin{align}\label{eq:log_conditional_spec}
    	\widehat{P(\mathcal{S}(L))} & = \widehat{P\!\left(\mathcal{S}_\tau(L) \ | \ \mathcal{S}_\alpha(L)\right)} = \\ \nonumber
    	                            & = \frac{\sum_{i=1}^m \frac{j_i}{n_i} \sum_{k=1}^{n_i} \Lambda\!\left(e_{i,k}, \mathcal{S}_\RfObject\right) \cdot \Lambda\!\left(e_{i,k}, \mathcal{S}_\RfActivation\right)}{\sum_{i=1}^{m} \frac{j_i}{n_i} \sum_{k=1}^{n_i} \Lambda\!\left(e_{i,k}, \mathcal{S}_\RfActivation\right)}.
	\end{align} \EndSmEq
\end{theorem}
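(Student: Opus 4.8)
The plan is to reuse the reduction established in \cref{sec:specs}: a specification $\mathcal{S} = \{\RfFormula_1, \ldots, \RfFormula_s\}$ is itself an \gls{rf} $\mathcal{S} = \mathcal{S}_\RfActivation \RfImplication \mathcal{S}_\RfObject$, where $\mathcal{S}_\RfActivation = \bigvee_{j=1}^s \varphi_{\alpha_j}$ and $\mathcal{S}_\RfObject = \bigwedge_{j=1}^s \neg\!\left(\varphi_{\alpha_j} \wedge \neg\varphi_{\tau_j}\right)$ are ordinary {\ltlf} formulae built from the component activators and targets using only $\vee$, $\wedge$, and $\neg$. Hence $\mathcal{S}$ is interpreted exactly like a single \gls{rf}, and interestingly satisfying it by a log amounts to satisfying its target formula conditionally on its activator formula, i.e., $P(\mathcal{S}(L)) = P\!\left(\mathcal{S}_\RfObject(L) \mid \mathcal{S}_\RfActivation(L)\right)$. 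The claimed estimator is then \cref{MLE:conditional} instantiated at $\varphi_1 := \mathcal{S}_\RfObject$ and $\varphi_2 := \mathcal{S}_\RfActivation$.

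Concretely, I would argue as follows. First, since $\mathcal{S}_\RfActivation$ and $\mathcal{S}_\RfObject$ are {\ltlf} formulae, the event-level labels $\Lambda(\cdot,\mathcal{S}_\RfActivation)$ and $\Lambda(\cdot,\mathcal{S}_\RfObject)$ are well-defined (\cref{thm:checking:trace,thm:checking:log}) and agree, on each event, with the three-valued specification semantics of \cref{sec:specs}: an event with $\Lambda(e,\mathcal{S}_\RfActivation)=1$ and $\Lambda(e,\mathcal{S}_\RfObject)=1$ is an interesting satisfaction, one with $\Lambda(e,\mathcal{S}_\RfActivation)=1$ and $\Lambda(e,\mathcal{S}_\RfObject)=0$ a violation, and otherwise $\mathcal{S}$ is unaffected, which is exactly the definition of $\Lambda_R(e,\mathcal{S})$ used in \cref{prop:specification:trace}. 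Second, the trace-level model of \cref{prop:rf:intersection,prop:specification:trace} identifies $P(\mathcal{S}(t))$ with the univariate Bernoulli parameter $P\!\left(\mathcal{S}_\RfObject(t)\mid\mathcal{S}_\RfActivation(t)\right)$; lifting to logs by the law of total probability over the trace generator $T$ together with the log-completeness assumption $\bar{L}=\mathcal{T}$ --- the same argument used in the proofs of \cref{MLE:bayes1,MLE:intersection,MLE:conditional} --- gives $P(\mathcal{S}(L)) = P\!\left(\mathcal{S}_\RfObject(L)\cap\mathcal{S}_\RfActivation(L)\right)\big/ P\!\left(\mathcal{S}_\RfActivation(L)\right)$. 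Third, I would plug in the numerator MLE from \cref{MLE:intersection} and the denominator MLE from \cref{MLE:bayes1}, obtaining Eq.~\eqref{eq:log_conditional_spec}; being a function of MLEs, the resulting estimator is an MLE by \cite[Ch.~7]{casella2021statistical}.

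The main obstacle is conceptual rather than computational: one must justify that no probabilistic machinery beyond that of \cref{sec:estimation} is needed, i.e., that the compound formulae $\mathcal{S}_\RfActivation$ and $\mathcal{S}_\RfObject$ may legitimately play the roles of $\varphi_2$ and $\varphi_1$ so that \cref{MLE:conditional} transfers verbatim, and that the derived \gls{rf} $\mathcal{S}$ satisfies the hypotheses of \cref{prop:rf:intersection}. A secondary point requiring care is well-definedness: the denominator $\sum_{i=1}^m \frac{j_i}{n_i}\sum_{k=1}^{n_i}\Lambda\!\left(e_{i,k},\mathcal{S}_\RfActivation\right)$ can be zero when no event of $L$ activates $\mathcal{S}$, in which case $P(\mathcal{S}(L))$ is undefined and --- consistently with the remark following \cref{prop:conditional} --- reported as $\textrm{NaN}$, with traces whose own activator count vanishes likewise excluded from the log-level sums.
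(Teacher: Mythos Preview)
Your proposal is correct and follows essentially the same route as the paper's own proof: both argue that since $\mathcal{S}_\RfActivation$ and $\mathcal{S}_\RfObject$ are ordinary {\ltlf} formulae, \cref{MLE:conditional} applies verbatim with $\varphi_1 := \mathcal{S}_\RfObject$ and $\varphi_2 := \mathcal{S}_\RfActivation$, yielding Eq.~\eqref{eq:log_conditional_spec} as an MLE. Your write-up is more explicit about why the substitution is legitimate and about the ill-defined (NaN) case, but these are elaborations rather than a different argument.
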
 \begin{proof}\itshape The proof follows from~\cref{MLE:conditional}. 
Specifically, we replace $\varphi_\alpha$ and $\varphi_\tau$ in Eq.~\eqref{eq:log_conditional} with $\mathcal{S}_\RfActivation$ and $\mathcal{S}_\RfObject$, respectively.
\end{proof}

\noindent
Returning to the running example in \cref{tab:running:example}, we have that $\widehat{P(\mathcal{S}(L))}$ is $\frac{(17\cdot0.44)+(6\cdot0.33)+(5\cdot0.40)+(12\cdot0.17)+(5\cdot 0.0)}{(17\cdot0.44)+(6\cdot0.44)+(5\cdot0.50)+(12\cdot0.33)+(5\cdot 0.0)} \approxeq 0.81$.
To obtain an MLE for $\widehat{P(\lnot \mathcal{S}(L))}$ we can write
\begin{equation}
\widehat{P(\lnot \mathcal{S}(L))} = 1-\widehat{P(\mathcal{S}(L))},
\end{equation} due to the result that a function of an MLE is an MLE~\cite{casella2021statistical}. 
Notice that to obtain an estimator for a single \gls{rf} with activator $\varphi_\alpha$ and target $\varphi_\tau$, we replace $\mathcal{S}_\RfActivation$ and $\mathcal{S}_\RfObject$ with $\varphi_\alpha$ and $\varphi_\tau$, respectively.

Similarly to \cref{sec:estimation}, we conclude by providing a result that states the computational complexity of our technique. 
\begin{theorem}
The estimation of the interestingness measures over specifications of \gls{rf} given an event log is polynomial.
\end{theorem}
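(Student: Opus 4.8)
The plan is to reduce the problem to the tractability results already established for \gls{ltlf} checking and estimation, exploiting the reduction of a specification to a single \gls{rf} from \cref{sec:specs}. Recall that a specification $\mathcal{S} = \{\Psi_1, \ldots, \Psi_s\}$ with $\Psi_j = \varphi_{\alpha_j} \RfImplication \varphi_{\tau_j}$ is equivalent to the single \gls{rf} $\mathcal{S}_\alpha \RfImplication \mathcal{S}_\tau$, where $\mathcal{S}_\alpha = \bigvee_{j=1}^s \varphi_{\alpha_j}$ and $\mathcal{S}_\tau = \bigwedge_{j=1}^s \neg\left(\varphi_{\alpha_j} \wedge \neg \varphi_{\tau_j}\right)$. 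Letting $\|\varphi\|$ bound the size of every activator and target occurring in $\mathcal{S}$, I would first observe that $\|\mathcal{S}_\alpha\|$ and $\|\mathcal{S}_\tau\|$ are both $O(s \times \|\varphi\|)$: the outer disjunction and conjunction add $O(s)$ connectives, and each conjunct of $\mathcal{S}_\tau$ reuses one activator and one target plus a constant number of extra connectives. Hence the reduced \gls{rf} still has size polynomial in the input.

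Second, I would label the events. Applying \cref{thm:checking:log} to the two \gls{ltlf} formulae $\mathcal{S}_\alpha$ and $\mathcal{S}_\tau$ — of size $O(s\|\varphi\|)$ each — labels every event of the $m$ distinct traces of $L$ with $\Lambda(e, \mathcal{S}_\alpha)$ and $\Lambda(e, \mathcal{S}_\tau)$ in $O(n^3 \times s \times \|\varphi\| \times m)$, where $n$ bounds the trace length. Equivalently, one may first label all events with each $\varphi_{\alpha_j}$ and $\varphi_{\tau_j}$ via \cref{thm:specification:check} and then combine the labels pointwise, $\Lambda(e, \mathcal{S}_\alpha) = \max_j \Lambda(e, \varphi_{\alpha_j})$ and $\Lambda(e, \mathcal{S}_\tau) = \min_j\bigl(1 - \Lambda(e, \varphi_{\alpha_j})\cdot(1-\Lambda(e, \varphi_{\tau_j}))\bigr)$, at an extra cost of $O(s \times n \times m)$, which is dominated.

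Third, I would read off the estimators and the measures. Given the two label sequences for every trace, \cref{prop:specification:trace,prop:rf:intersection:log} express $\widehat{P(\mathcal{S}(t))}$ and $\widehat{P(\mathcal{S}(L))}$ as ratios of sums of the labels, while \cref{prop:uni,MLE:bayes1,MLE:intersection} likewise yield $\widehat{P(\mathcal{S}_\alpha(L))}$, $\widehat{P(\mathcal{S}_\tau(L))}$, and $\widehat{P(\mathcal{S}_\alpha(L) \cap \mathcal{S}_\tau(L))}$; by \cref{thm:estimation_ltl} each of these costs $O(|L| \times n)$. Every interestingness measure in \cref{tab:measures-def} — Support, Confidence, Recall, Specificity, Lift — is a fixed arithmetic combination of a constant number of these quantities, hence obtained with $O(1)$ further operations (ill-defined ratios being treated as $\textrm{NaN}$ as discussed earlier). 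Adding the contributions — $O(n^3 \times s \times \|\varphi\| \times m)$ for labeling plus $O(|L| \times n)$ for the estimators — gives a bound polynomial in $n$, $s$, $\|\varphi\|$, $m$, and $|L|$, which establishes the claim.

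I expect the only delicate point to be the size analysis of $\mathcal{S}_\alpha$ and $\mathcal{S}_\tau$: one must verify that collapsing $s$ \glspl{rf} into a single \gls{rf} does not blow up the formula size (it does not — the construction is linear in $s$), so that the polynomial-time checking results of \cref{sec:ltlf-logs} remain applicable verbatim. Everything downstream is routine bookkeeping over the computed labels.
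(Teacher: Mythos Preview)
Your proposal is correct and follows essentially the same approach as the paper: the paper's proof is a one-liner invoking the argument of \cref{thm:estimation_ltl}, noting that each estimator is a query over the labeled sequences computable in $O(|L| \times n)$. Your version is more explicit---in particular, you spell out the size analysis of $\mathcal{S}_\alpha$ and $\mathcal{S}_\tau$ and the labeling cost, which the paper leaves implicit via \cref{thm:specification:check}---but the underlying strategy (polynomial-time labeling followed by constant-depth arithmetic on sums of labels) is identical.
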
 \begin{proof}\itshape The proof relies on a similar argument to the proof of Theorem~\ref{thm:estimation_ltl}: each of the estimators is a  query over the labeled sequences that can be computed in $O\left(|L| \times n\right)$, considering $n$ as the length of the longest trace in $L$.\end{proof}

\subsection{Computing Measures of Interestingness for Specifications}\label{sec:computing:measures}
\todo{extended the discussion regarding the interestingness measures and their meaning in process traces and logs for specifications.}
Having defined the estimators, we can now quantify the interestingness of rule-based \gls{ltlf} process specifications relying on association rule mining measures. 
Specifically, the estimates that we derive for specifications (i.e.,~\cref{prop:specification:trace,prop:rf:intersection:log}) allow us to compute a plethora of interestingness measures
while considering the joint effects of multiple rules at once.\todo{removed reference to strict boolean conjunction}
Thereby, we advance the state of the art as measures were previously restricted to a single-rule scope~\cite{Cecconi2021Measuring}.

\begin{table}[tbp] 
\begin{tabular}{ccc}

\toprule
\textbf{Measure}     & \textbf{Trace} &  \textbf{Log}  \\
\midrule
Support     & $ P\!\left(\DeclaModel_\RfActivation \cap \DeclaModel_\RfObject,t\right) $ & $ P\!\left(\DeclaModel_\RfActivation \cap \DeclaModel_\RfObject,L\right) $\\
Confidence  & $ P\!\left(\DeclaModel_\RfObject|\DeclaModel_\RfActivation, t\right) $ & $ P\!\left(\DeclaModel_\RfObject|\DeclaModel_\RfActivation, L\right) $\\
Recall      & $ P\!\left(\DeclaModel_\RfActivation|\DeclaModel_\RfObject, t\right) $ & $ P\!\left(\DeclaModel_\RfActivation|\DeclaModel_\RfObject, L\right) $\\
Specificity & $ P\!\left(\lnot\DeclaModel_\RfObject|\lnot\DeclaModel_\RfActivation, t\right) $ & $ P\!\left(\lnot\DeclaModel_\RfObject|\lnot\DeclaModel_\RfActivation, L\right) $\\
Lift        & $ \dfrac{P\!\left(\DeclaModel_\RfActivation \cap \DeclaModel_\RfObject,t\right)}{P\!\left(\DeclaModel_\RfActivation, t\right)P\!\left(\DeclaModel_\RfObject, t\right)} $ & $ \dfrac{P\!\left(\DeclaModel_\RfActivation \cap \DeclaModel_\RfObject,L\right)}{P\!\left(\DeclaModel_\RfActivation, L\right)P\!\left(\DeclaModel_\RfObject, L\right)} $\\
\bottomrule
\end{tabular} \caption[Selection of interestingness measures for specifications of RCons]{Interestingness measures from \cref{tab:measures-def} defined for specifications}
		\label{tab:measures}
\end{table}

\Cref{tab:measures} shows the measures presented in~\cref{tab:measures-def} computed for a specification $\mathcal{S}$.
In the following, we describe what these measures intuitively mean given a specification and an event log. As previously described, the event log serves as a means to estimate probabilities, which in turn are the building block of the interestingness measures.
We define these measures on two levels.
At a trace-level, they gauge interestingness of a specification based on the number of events that satisfy specific conditions.
At a log-level, they take into account events \emph{and} traces, which we shall collectively name as ``fraction of the log'' (or log fraction) for simplicity.
Without loss of generality, we will describe the measures at a log-level. As it can be seen in \cref{tab:measures}, the same concepts seamlessly apply by restricting the notion of ``log fraction'' to that of ``trace fraction'' as the part of events in a single trace for the trace-level, due to the regularity of the definitions.

To determine the interestingness of a specification, Support considers the fraction of the log satisfying both its activator and its target. Confidence assesses in how far the target is satisfied within the log fraction that satisfies the activator. Dually, Recall reports on the events and traces that satisfy the activator among those that belong to the log fraction satisfying the target. Specificity quantifies the violation of the target within the fraction of the log that violates the activator.
Finally, Lift scales Support by the product of the estimated probabilities of activator and target. 

In the running example (see \cref{tab:running:example}), we use~\cref{prop:specification:trace,prop:rf:intersection:log}
to compute the measures that appear in the last five columns (Support, Confidence, etc.) for $\mathcal{S} = \{ \Psi_1, \Psi_2\}$. Observing the 
result in the last line of~\cref{tab:running:example}, e.g., we have that $\widehat{P(\mathcal{S}(L))}$ is $0.81$, which together
with the probabilities of activator and target of $\mathcal{S}$ 
yield a Support of \num{0.3} and Confidence of \num{0.81}.

The measures used here are only an explanatory subset of the available ones. Thanks to the estimators presented throughout \cref{sec:approach}, it is possible to seamlessly compute all the ones presented in \cite{Cecconi2021Measuring} and define new ones, based on the probabilities of activators and targets.

\subsection{Discussion}
\label{sec:approach-discussion}
\todo{added sub-section to discuss, out of probability details, the outcome of our proposal}

Having laid down the theoretical foundations, 
we turn to discussing the proposed approach from a practical perspective.
Intuitively, the probability estimators of an entire specification can be viewed as a relaxed metric of satisfaction, i.e., the extent to which the specification is satisfied in the given trace or log. It is a relaxed notion of satisfaction because it does not employ a strict boolean evaluation for the entire trace (as commonly done when evaluating \gls{ltlf} formulae). The logical evaluation is performed at the level of single events, while for traces and logs we use probability theory leading to relaxed statistical estimators. These estimators provide details about the severity of the observed violations 
in traces and logs. 

Furthermore, log estimators are not biased by the length of the traces, but only by their frequency. Consider the following example: given a log of two traces, one composed of \num{100} events and the other of \num{10} events, suppose that every event in the first trace satisfies the specification, whereas every event of the second one violates it. Thus, \num{100} out of \num{110} events, namely about the \SI{91}{\percent} of them, satisfy the specification. However, only \SI{50}{\percent} of the traces satisfy it. 
A trace represents the execution of a process instance. Indeed, regardless of the number of steps required, the main information is the overall outcome from that instance, e.g., to which extent a specification holds true in that trace.
This aspect is reflected in our log estimators: Given their definition in \cref{sec:logestimators}, it can be intuitively observed that the information of the trace used for the log aggregation is a fraction of events in the trace (i.e., those that satisfy a formula) and not their total number.  
Notice that the frequency of the trace in the log plays a major role, as it signals the probability of running the process in the way the trace reports.

The combination of rules in a specification is also a pivotal point of the contribution of our approach.
Recall that a specification is not simply a container of independent rules, but a unique system composed of their simultaneous interactions.
The classical approach to reason about specifications is to build a unique \gls{ltlf} formula formed by the logical conjunction of all the rules~\cite{DBLP:conf/ecai/Li0PVH14}. The disadvantage of such an approach is that the specification may be considered as violated by an entire trace for the violation of a rule by a single event. 
Since the specification is a unique system, it is agreeable that a violated rule implies the violation of the entire \glspl{rf} specification, yet the finer granular level of the analysis confines this violation to a single event, and not to the entire trace. Therefore, an estimator for specifications should score low values in traces only if multiple events violate the rule.
Extending the focus on event logs, the estimator should assign low values if numerous events in considerably many traces violate it. 

\begin{table}[tbp] 
\begin{tabular}{clcc}
\toprule
\textbf{Log} & 
\textbf{Evaluation} & 
\makecell[c]{\textbf{\acrshort{rf}} \\ \textbf{/ Specification}} & 
\makecell[c]{\textbf{$ P $ of \acrshort{rf}} \\ / \textbf{$ P $ of \DeclaModel}}\\ 
\midrule

$t_1 = \langle \taska,\taskb,\taskc,\taskd,\taske,\taskf  \rangle$ & 
\makecell[l]{
	$\langle$0, 1, 1, 1, 1, 1$\rangle$\\
	$\langle$1, 0, 1, 1, 1, 1$\rangle$\\
	$\langle$1, 1, 0, 1, 1, 1$\rangle$\\
	$\langle$1, 1, 1, 0, 1, 1$\rangle$\\
	$\langle$1, 1, 1, 1, 0, 1$\rangle$\\
	$\langle$1, 1, 1, 1, 1, 0$\rangle$\\
	$\langle$0, 0, 0, 0, 0, 0$\rangle$
} & 
\makecell[c]{
	~$ \RfFormula_1 = \RNFtrue\RfImplication\lnot\taska $\\
	~$ \RfFormula_2 = \RNFtrue\RfImplication\lnot\taskb $\\
	~$ \RfFormula_3 = \RNFtrue\RfImplication\lnot\taskc $\\
	~$ \RfFormula_4 = \RNFtrue\RfImplication\lnot\taskd $\\
	~$ \RfFormula_5 = \RNFtrue\RfImplication\lnot\taske $\\
	~$ \RfFormula_6 = \RNFtrue\RfImplication\lnot\taskf $\\
	~$\DeclaModel=\{\RfFormula_1,\dots,\RfFormula_6\}$
} &
\makecell[c]{
	~0.83\\
	~0.83\\
	~0.83\\
	~0.83\\
	~0.83\\
	~0.83\\
	~0.00
}  \\[1.2em]
\midrule

$t_2 = \langle \taska,\tasky,\tasky,\tasky,\taske,\taskf  \rangle$ & 
\makecell[l]{
	$\langle$0, 1, 1, 1, 1, 1$\rangle$\\
	$\langle$1, 1, 1, 1, 1, 1$\rangle$\\
	$\langle$1, 1, 1, 1, 1, 1$\rangle$\\
	$\langle$1, 1, 1, 1, 1, 1$\rangle$\\
	$\langle$1, 1, 1, 1, 0, 1$\rangle$\\
	$\langle$1, 1, 1, 1, 1, 0$\rangle$\\
	$\langle$0, 1, 1, 1, 0, 0$\rangle$
} & 
\makecell[c]{
	~$ \RfFormula_1 = \RNFtrue\RfImplication\lnot\taska $\\
	~$ \RfFormula_2 = \RNFtrue\RfImplication\lnot\taskb $\\
	~$ \RfFormula_3 = \RNFtrue\RfImplication\lnot\taskc $\\
	~$ \RfFormula_4 = \RNFtrue\RfImplication\lnot\taskd $\\
	~$ \RfFormula_5 = \RNFtrue\RfImplication\lnot\taske $\\
	~$ \RfFormula_6 = \RNFtrue\RfImplication\lnot\taskf $\\
	~$\DeclaModel=\{\RfFormula_1,\dots,\RfFormula_6\}$
} &
\makecell[c]{
	~0.83\\
	~1.00\\
	~1.00\\
	~1.00\\
	~0.83\\
	~0.83\\
	~0.50
}  \\ [1.2em]

\midrule

\makecell[c]{
	~$L = \{t_1^{5}, t_2^{10}\}$\\[1ex]~$ |L|=15 $
	}   &   
&  
\makecell[c]{
	\makecell[c]{
	~$ \RfFormula_1 = \RNFtrue\RfImplication\lnot\taska $\\
	~$ \RfFormula_2 = \RNFtrue\RfImplication\lnot\taskb $\\
	~$ \RfFormula_3 = \RNFtrue\RfImplication\lnot\taskc $\\
	~$ \RfFormula_4 = \RNFtrue\RfImplication\lnot\taskd $\\
	~$ \RfFormula_5 = \RNFtrue\RfImplication\lnot\taske $\\
	~$ \RfFormula_6 = \RNFtrue\RfImplication\lnot\taskf $\\
	~$\DeclaModel=\{\RfFormula_1,\dots,\RfFormula_6\}$
}
} &  
\makecell[c]{
	~0.83\\
	~0.94\\
	~0.94\\
	~0.94\\
	~0.83\\
	~0.83\\
	~0.33
}  \\[1.2em]

\bottomrule
\end{tabular}

 \caption[Example of a specification violated by high probability constraints.]{Example of a specification violated by high probability constraints.}
	\label{tab:validation:example:full:violation}
\end{table}

\Cref{tab:validation:example:full:violation} exemplifies the above reasoning. The specification $\DeclaModel=\{\RfFormula_1,\ldots,\RfFormula_6\}$ is composed only of constraints formulated as $\RNFtrue\RfImplication\lnot z$, i.e., every event should be different from a given event $z$ (with $z \in \{\taska,\ldots,\taskf\} $). 
In the first trace, $\EvtTrace_1$, every single constraint is individually violated only by one event of the trace and satisfied by the others. Accordingly, their estimators assign high values ($1.00$). Nonetheless, the specification is violated in every single event by at least one constraint, thus the estimator's score for {\DeclaModel} is $0.00$. 
Instead, every event of the second trace, $\EvtTrace_2$, satisfies $\RfFormula_2$, $\RfFormula_3$ and $\RfFormula_4$ (scoring $1.00$), while $\RfFormula_1$, $\RfFormula_5$ and $\RfFormula_6$ are violated by one event therein each (scoring $0.83$). In this case, then, the estimator assigns $0.50$ to the specification.
Since the numerosity of $t_1$ in the event log is \num{5} and the numerosity of $t_2$ is twice as much (\num{10}), the specification on the entire log is assigned $0.33$ by the estimator.

This is a desirable behavior: A specification is evaluated for the combination of its constraints, not its single parts. The first trace shows clearly that, despite the single constraints being mostly satisfied, together they do not properly capture the execution of the trace. 
Thus, the average of the measures would be a descriptive statistic for the single rules but a misleading indicator for the specification.
Exploring extensions of weighted evaluations of specifications where specific constraints may be more relevant than others and treated unequally is an interesting future outlook.

\section{Evaluation}
\label{sec:evaluation}
The goal of this section is to demonstrate how gauging the interestingness of an entire process specification with multiple measures leads to novel results that are not achievable via the analysis of typical, single-rule based measures.
To this end, we conduct three experiments.
In 
\cref{sec:evaluation-discovery-vs-specification-measurements}, we show how the measure of a specification differs from the assemblage of the measures of single rules; 
in \cref{sec:evaluation-absolute-match-different-measures}, we provide experimental evidence of the fact that distinct measures show different aspects of a specification, even in the case of full compliance with a Confidence level of \num{1.0};   
in \cref{sec:use-case-process-drift}, we propose a use-case application of our approach, analyzing its support to process drift detection.
Finally, \cref{sec:exp:discussion} concludes the section with a discussion of our findings.

We implemented our technique in a proof-of-concept Java tool, publicly available at \url{https://github.com/Oneiroe/Janus}.
The implementation natively supports a relevant set of rule templates based on~\cite{Dwyer1999Patterns}, but we already showed in \cref{sec:approach} that the technique is seamlessly applicable to any~\gls{rf}. Furthermore, inspired by~\cite{Geng2006Interestingness}, it supports the computation of \num{37} interestingness measures. 

\begin{table}[tbp] 
\begin{tabular}{lS[table-format = 6.0]S[table-format = 2.0]S[table-format = 6.0]}
\toprule
\textbf{Event log}	&
\textbf{Traces} &
\textbf{Tasks} &
\textbf{Events}\\ 

\midrule
BPIC12~\citep{Dongen/BPIC2012:log}& 13087 & 36 & 262200 
\\
BPIC13\_cp~\citep{Steeman/BPIC2013:log}& 1487 & 7 & 6660 
\\
BPIC14\_f~\citep{Dongen/BPIC2014:log}& 41353 & 9 & 369485 
\\
BPIC15\_1f~\citep{Dongen2015BPIC15}& 902 & 70 & 21656  
\\
RTFMP~\citep{LeoniMannhardt/RoadTrafficFine2015:log}& 150370 & 11 & 561470 
\\
Sepsis~\citep{Mannhardt2016Sepsis}& 1050 & 16 & 15214 
\\
Help-Desk~\citep{Polato2017helpdesk}& 4580 & 14 & 21348
\\

\bottomrule
\end{tabular}

 \caption[Details of the real-world event logs used for the evaluation]{Details of the real-world event logs used for the evaluation}
	\label{tab:logs}
\end{table}

In our experiments, we used a set of openly available, real-world event logs. The details and references to the datasets are reported in \cref{tab:logs}.
Note that our tool scales linearly with the size of the event log, the size of the specification, and the number of measures to compute. For example, for the Sepsis dataset the measurement took around \SI{35}{\second} with the heaviest setup (i.e., to compute \num{37} measures for a specification containing \num{3202} rules and a log with \num{1050} traces) on an Intel Core i5-7300U CPU at \SI{2.60}{\GHz}, quad-core, \SI{16}{\giga\byte} of RAM, running Ubuntu 18.04.6.

\subsection{Measuring single rules and entire specifications}
\label{sec:evaluation-discovery-vs-specification-measurements}
In the following, we assess the usefulness of specifications measurement on real-life data, applying our technique to the results of various pattern-based \gls{ltlf} specification miners, i.e., Janus~\cite{Cecconi2018Interestingness}, MINERful~\cite{DiCiccio2015OnTheDiscovery}, and Perracotta~\cite{Yang2006Perracotta}.
The goal is to highlight the importance of checking an entire specification, as opposed to the analysis of individual rules. The rationale is that while many specification miners use a threshold to retrieve only rules satisfied above that value, the corresponding specification may present a satisfaction degree below the desired level, following Morin's principle that the whole may be less than the sum of its parts~\cite{morin2013methode}. 

The interested reader can find the scripts and input files to reproduce the tests alongside output reports and the full collection of specification rules at \url{https://oneiroe.github.io/DeclarativeSpecificationMeasurements-Journal-static/}.
We conducted the experiment as follows. We discovered a specification from the log with each miner. Then, we used our tool to compute the interestingness measures on the log. Here we focus on Confidence, as all miners implement a custom calculation for it.
We repeated the discovery step with increasing Confidence thresholds, from \num{0} to \num{1}, with step size of \num{0.05}. Finally, we compared the measures of the specifications to the input threshold. 

The results can be found in~\cref{fig:evaluation-miners-trend-comparison-sepsis,fig:evaluation-miners-trend-comparison-bpic12,fig:evaluation-miners-trend-comparison-bpic13,fig:evaluation-miners-trend-comparison-bpic14,fig:evaluation-miners-trend-comparison-bpic15,fig:evaluation-miners-trend-comparison-rtfm,fig:evaluation-miners-trend-comparison-helpdesk}. As highlighted in~\cref{fig:evaluation-miners-trend-comparison-sepsis,fig:evaluation-miners-trend-comparison-bpic14,fig:evaluation-miners-trend-comparison-bpic15}, every miner returned specifications whose overall Confidence is lower than the initially set threshold, even for thresholds greater that \num{0.7}. This means that although every rule in a specification has a Confidence value greater or equal to the threshold, the overall specification performs worse than the user-defined minimum accepted level.
\begin{figure}
	\centering
	\includegraphics[width=1.0\linewidth]{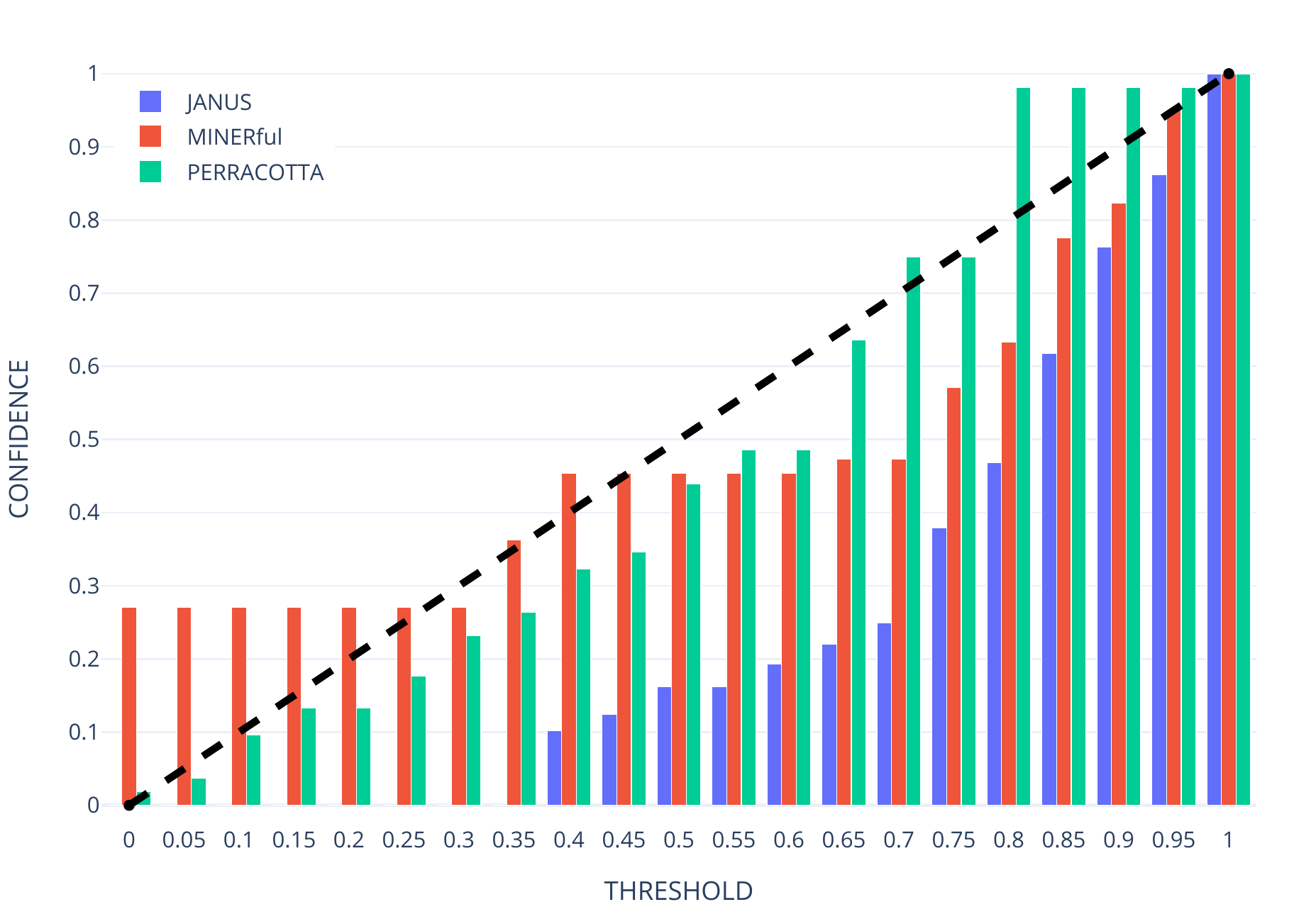}
	\caption[Discovered specification confidence: Sepsis]{Confidence of the mined specifications with respect to the threshold used for their discovery with the Sepsis dataset~\cite{Mannhardt2016Sepsis}.}
	\label{fig:evaluation-miners-trend-comparison-sepsis}
\end{figure}
\begin{figure}
	\centering
	\includegraphics[width=1.0\linewidth]{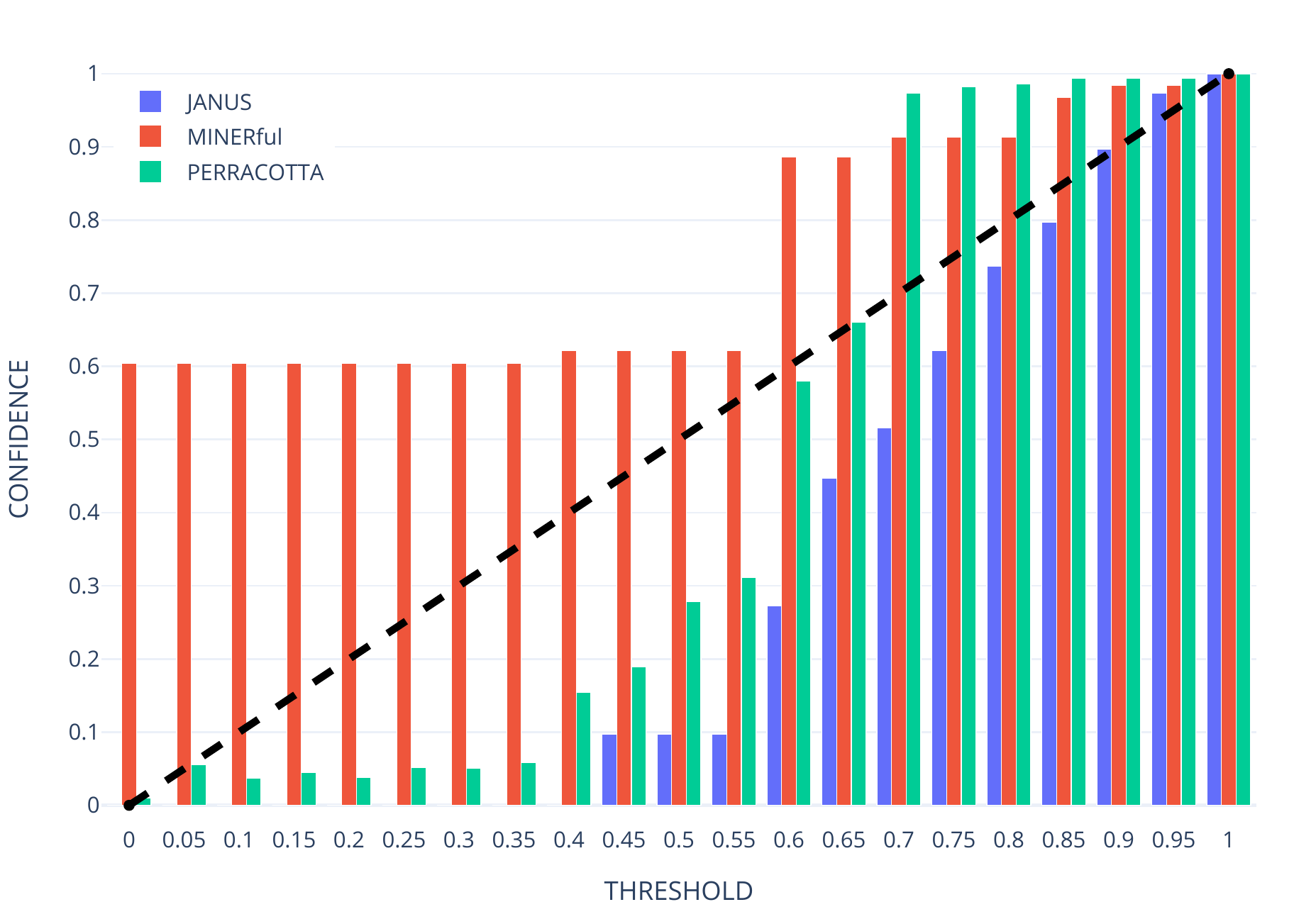}
	\caption[Discovered specification confidence: BPIC12]{Confidence of the mined specifications with respect to the threshold used for their discovery with the BPIC12 dataset~\cite{Dongen/BPIC2012:log}.}
	\label{fig:evaluation-miners-trend-comparison-bpic12}
\end{figure}
\begin{figure}
	\centering
	\includegraphics[width=1.0\linewidth]{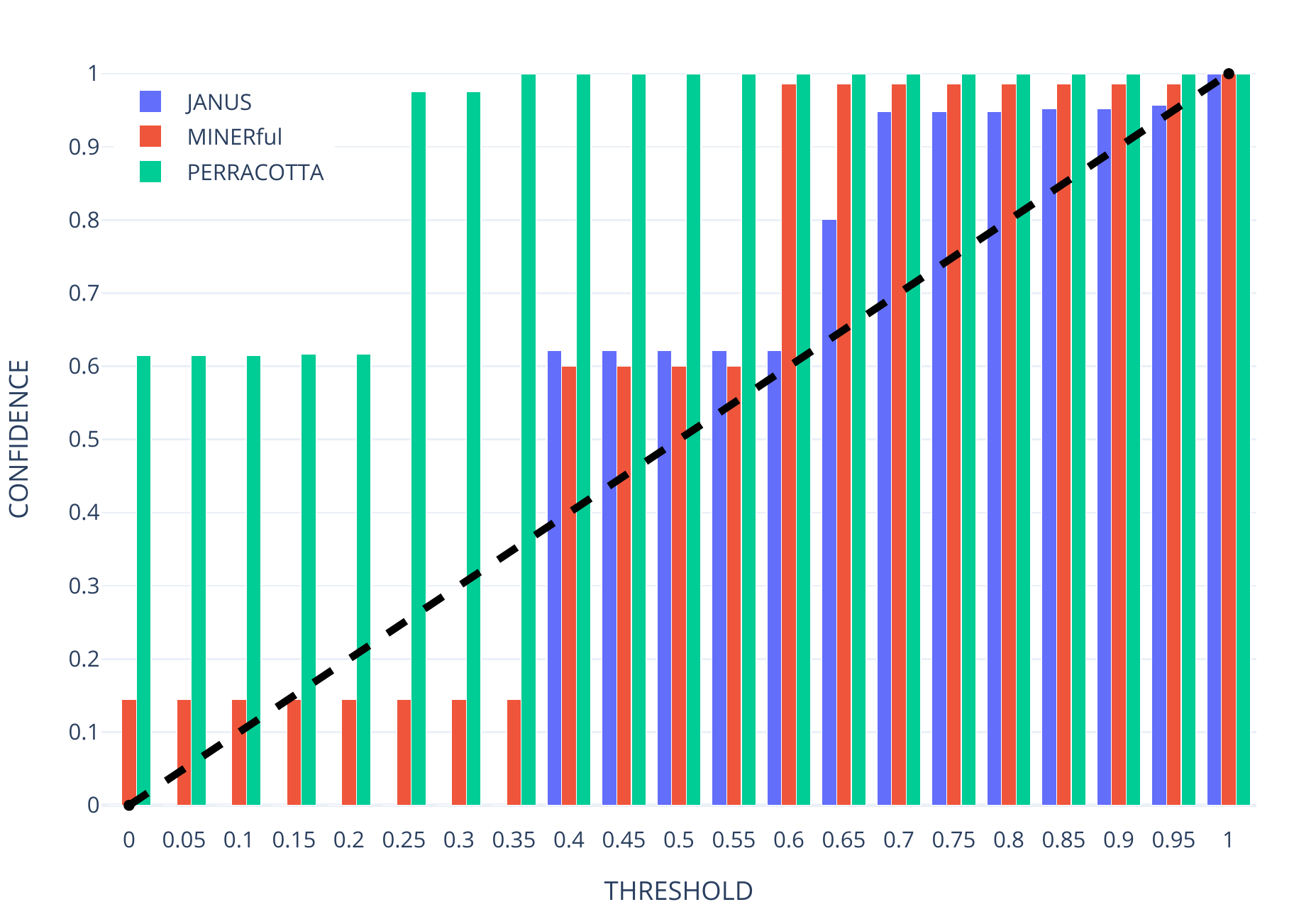}
	\caption[Discovered specification confidence: BPIC13]{Confidence of the mined specifications with respect to the threshold used for their discovery with the BPIC13 dataset~\cite{Steeman/BPIC2013:log}.}
	\label{fig:evaluation-miners-trend-comparison-bpic13}
\end{figure}
\begin{figure}
	\centering
	\includegraphics[width=1.0\linewidth]{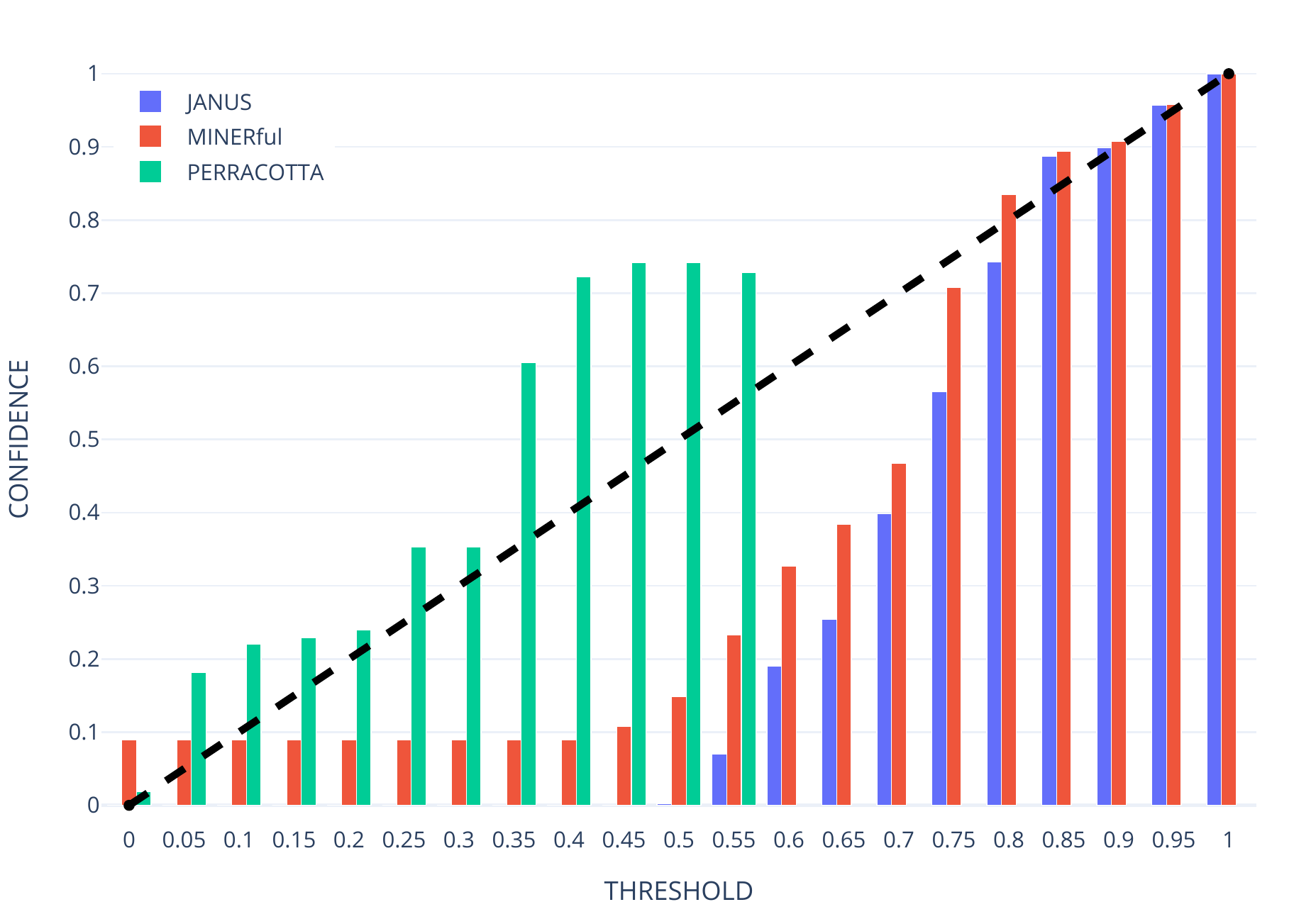}
	\caption[Discovered specification confidence: BPIC14\_f]{Confidence of the mined specifications with respect to the threshold used for their discovery with the BPIC14\_f dataset~\cite{Dongen/BPIC2014:log}. Note: Perracotta did not return any rules with thresholds above \num{0.55} with this dataset.}
	\label{fig:evaluation-miners-trend-comparison-bpic14}
\end{figure}
\begin{figure}
	\centering
	\includegraphics[width=1.0\linewidth]{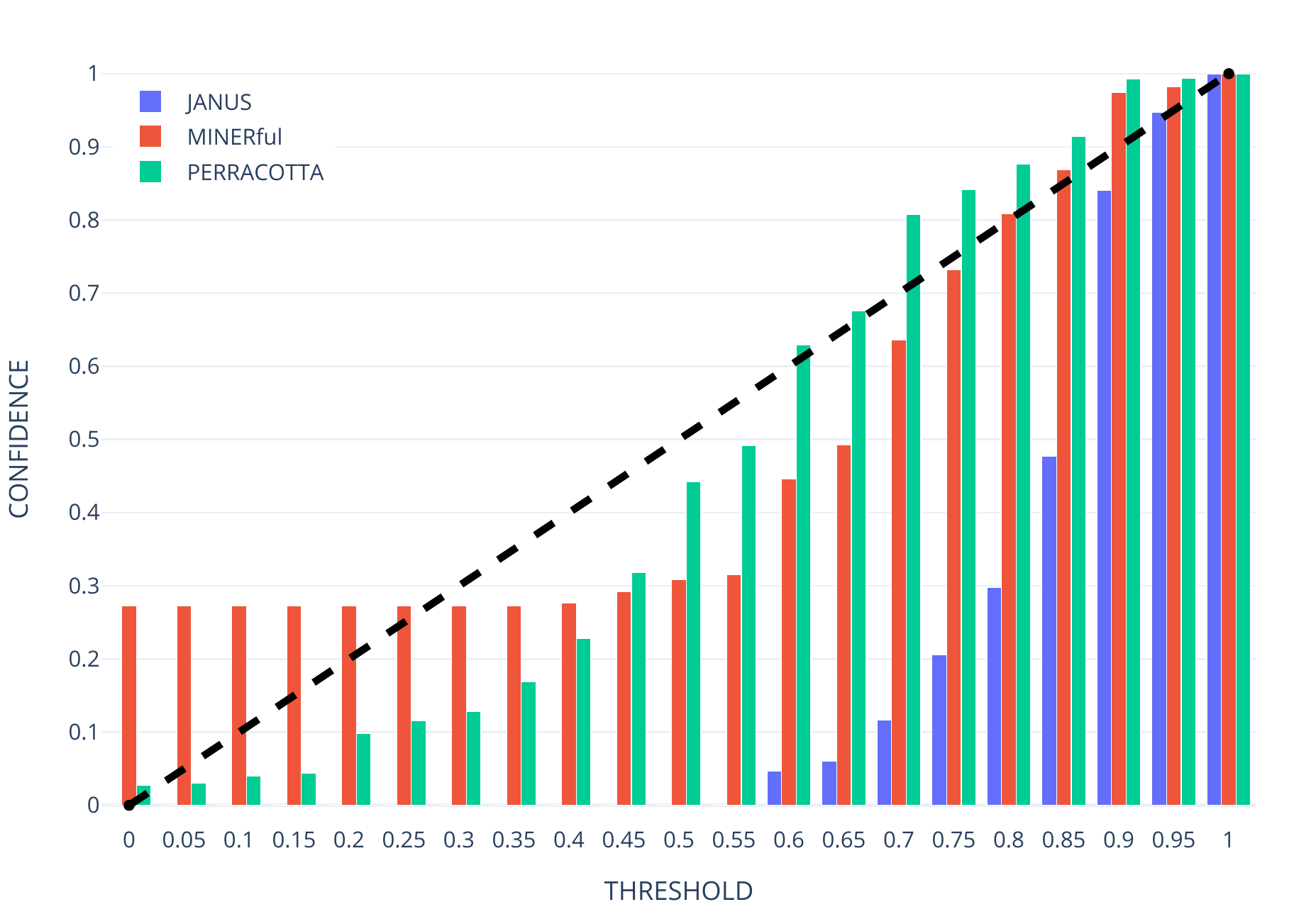}
	\caption[Discovered specification confidence: BPIC15\_1f]{Confidence of the mined specifications with respect to the threshold used for their discovery with the BPIC15\_1f dataset~\cite{Dongen2015BPIC15}.}
	\label{fig:evaluation-miners-trend-comparison-bpic15}
\end{figure}
\begin{figure}
	\centering
	\includegraphics[width=1.0\linewidth]{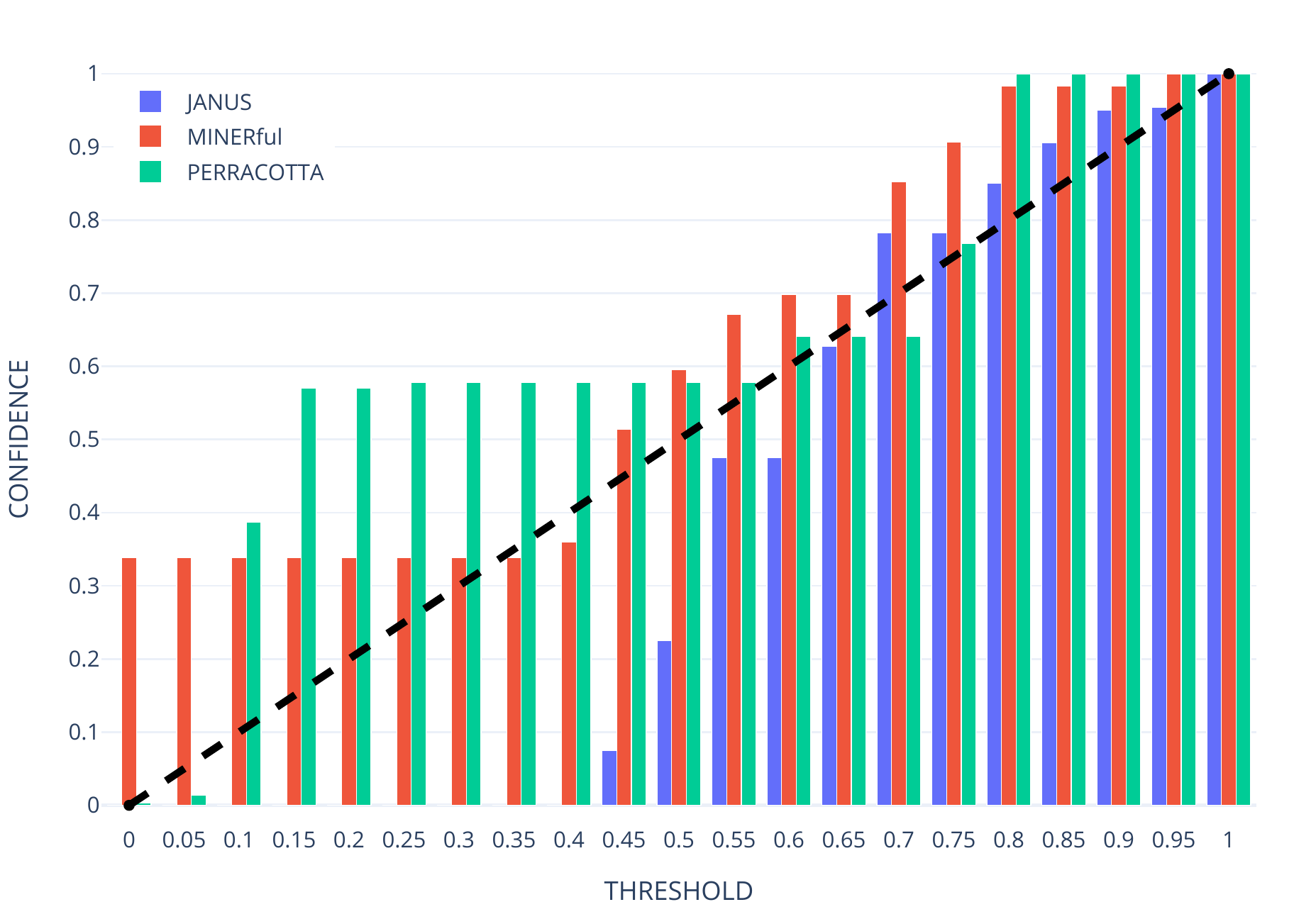}
	\caption[Discovered specification confidence: RTFMP]{Confidence of the mined specifications with respect to the threshold used for their discovery with the RTFMP dataset~\cite{LeoniMannhardt/RoadTrafficFine2015:log}.}
	\label{fig:evaluation-miners-trend-comparison-rtfm}
\end{figure}
\begin{figure}
	\centering
	\includegraphics[width=1.0\linewidth]{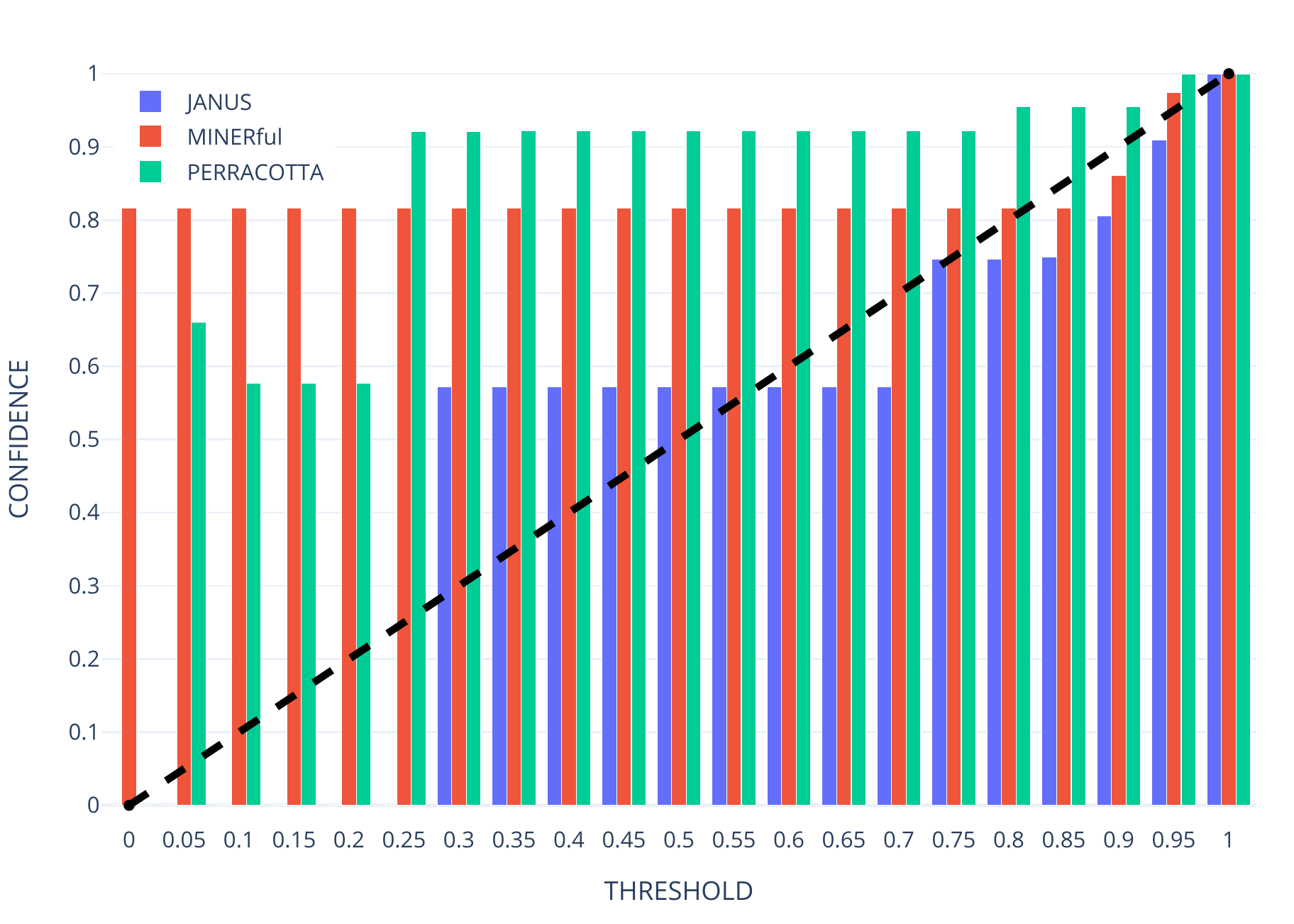}
	\caption[Discovered specification confidence: Help-Desk]{Confidence of the mined specifications with respect to the threshold used for their discovery for the Help-Desk dataset~\cite{Polato2017helpdesk}.}
	\label{fig:evaluation-miners-trend-comparison-helpdesk}
\end{figure}
Other datasets (see~\cref{fig:evaluation-miners-trend-comparison-bpic12,fig:evaluation-miners-trend-comparison-bpic13,fig:evaluation-miners-trend-comparison-rtfm,fig:evaluation-miners-trend-comparison-helpdesk}), exhibit a similar behavior for lower thresholds.

This issue may lead to sub-optimal results, akin to the multiple testing problem~\cite{Haemaelaeinen2019ATutorial} in statistical inference:
assuming the independence of each hypothesis, thus not considering their inter-dependency, leads to erroneous outcomes.
With our technique, this kind of issue becomes evident. Improving declarative process specification miners with the integration of our technique paves the path for interesting future endeavors.
However, we remark that this analysis focuses on the quantitative matching of the specification to the logs, regardless of the semantics of the discovered specifications.
Next, we broaden the perspective of interestingness from the sole Confidence to a larger set of measures.

\subsection{Differing measures}
\label{sec:evaluation-absolute-match-different-measures}
In this part, we show how different specifications, though never violated in the log, may exhibit different characteristics through the inspection of multiple measures.

In this experiment, we make use of the Sepsis event log~\cite{Mannhardt2016Sepsis}.
The dataset refers to a real-world process in the healthcare domain and contains records of patient visits with sepsis symptoms at a Dutch hospital. 
The dataset exhibits high variability: \num{75}\% of the traces are unique, which makes it a good candidate to evaluate partial specifications.
To retrieve multiple distinct specifications, we first mined a specification with the miner set to discard partly violated rules. Any miner would be fit for the task, thus without loss of generality we employed Janus with a Confidence threshold set to \num{1} for all rules. The resulting specification consisted of \num{238} rules, which we partitioned randomly into \num{5} subsets of \glspl{rf}.
Finally, we computed interestingness measures for each of the sub-specifications.
An excerpt of the results is reported in~\cref{tab:evaluation-conf1-different-measures}. \begin{table}[tb]
	\begin{adjustbox}{width=.9\textwidth,center=\textwidth}
		\begin{tabular}{ccccccc}
\toprule
\textbf{Measure}     & \textbf{Original $\mathcal{S}$} & \textbf{$\mathcal{S}_1$} & \textbf{$\mathcal{S}_2$} & \textbf{$\mathcal{S}_3$} & \textbf{$\mathcal{S}_4$} & \textbf{$\mathcal{S}_5$} \\ 
\midrule
	Confidence	&	1.000	&	1.000	&	1.000	&	1.000	&	1.000	&	1.000	\\
	Support		&	1.000	&	0.003	&	0.011	&	0.074	&	0.522	&	0.683	\\
	Recall		&	1.000	&	0.011	&	1.000	&	1.000	&	1.000	&	1.000	\\
	Specificity	&	NaN		&	0.741	&	1.000	&	1.000	&	1.000	&	1.000	\\
	Lift		&	1.000	&	3.831	&	89.691	&	13.512	&	1.916	&	1.465	\\
\bottomrule
\end{tabular}
 	\end{adjustbox}
	\caption[Measurements of sub-specifications]{Measurements of sub-specifications}
	\label{tab:evaluation-conf1-different-measures}
\end{table}

The results show that despite a Confidence level of \num{1} for every specification (i.e., there are no violated rules), other measures can still detect differences. \todo{Updated result discussion with the new numbers from the software update}
For example, 
Support shows that $ \DeclaModel_1 $,$ \DeclaModel_2 $, and $ \DeclaModel_3 $ are applied to a very small portion of events of the log, while $ \DeclaModel_4 $ and $ \DeclaModel_5 $ are activated by more than half of it. 
Specificity and Recall signal that for every specification the respective targets and activators always occur together, except for $ \DeclaModel_1 $: 
its target occurs without the activator the vast majority of the time.
The division by zero for the original $ \DeclaModel $ Specificity suggests that the specification was activated in every event of the log. 
Also, Lift shows that, in all the models, the joint probability of activator and target is higher than their individual one, yet with different strengths, e.g., $ \DeclaModel_2 $ towering any other one. 
The input specifications and output measurements are available for reproducibility purposes at \url{https://oneiroe.github.io/DeclarativeSpecificationMeasurements-Journal-static/}.

To conclude, this experiment demonstrates the advantages of the availability of a full set of measures, going beyond the mere satisfaction of specifications. Indeed, the choice of the most appropriate measure will depend on the specific application of use, as we will exemplify next.

\subsection{Impact of process drift on specification measures}
\label{sec:use-case-process-drift}

Throughout this section, we analyze the impact that drifts~\cite{DBLP:journals/tkde/MaaradjiDRO17} in process specifications may have on the various interestingness measures. Our objective is twofold.
On the one hand, we aim to undertake a preliminary assessment of the capability of measures to reflect modifications in the process behavior.
On the other hand, we want to observe the difference in sensitivity to changes exhibited by the measures originally suggested in the literature for association rule mining~\cite{Tan2004Selecting,Lenca2008OnSelecting} and adopted in the context of declarative process mining in~\cite{Cecconi2022Measurement}. Specifically, we perform the analysis for entire specifications and not only for individual rules.

We used the Visual Drift Detection (VDD) tool~\cite{DBLP:conf/icpm/YeshchenkoMCP20} as a benchmark to detect the points in which the process is subject to variations over time. VDD divides event logs into sub-logs of consecutive traces, measures the Confidence of all discoverable {\Declare} constraints in each sub-log, and analyzes the time sequences generated by the Confidence measurements while grouping together constraints that exhibit similar trends over time. It automatically detects change points within groups based on the oscillation of values in the sequence. Then, it automatically classifies those change points as either process drifts or evidence of erratic behavior and outliers~\cite{DBLP:conf/er/YeshchenkoCMP19a}. 

\subsubsection{Experimental setting}
Taking the results illustrated in~\cite{DBLP:conf/er/YeshchenkoCMP19a,DBLP:journals/tvcg/YeshchenkoCMP22} as a reference, we conducted our experiments with the following two real-world event logs:
\begin{iiilist}
	\item the aforementioned Sepsis log~\cite{Mannhardt2016Sepsis}, which reports on the tasks executed from the registration to the discharge of patients affected by sepsis, and
	\item the Help-Desk log~\cite{Polato2017helpdesk}, recording the activities carried out within a management process of the Help-desk of an Italian software company.
\end{iiilist}
The scripts we use for our tests can be found alongside the analysis results at the following address: 
\url{https://github.com/l2brb/Measurement-change-point-evaluation}.

To carry out our experiments, we went through the following steps.
We first mined a process specification consisting of rules with high Confidence (rarely violated whenever triggered) from the event logs taken as a whole. To this end, we used Janus\footnote{\url{https://github.com/Oneiroe/Janus/}}~\cite{Cecconi2018Interestingness} with the Confidence threshold set to \num{0.95}.
Then, we sliced the event log into consecutive sub-logs with a tumbling window approach, namely extracting sequences of \num{50} consecutive,  non-overlapping trace sets.
We resorted to the dedicated pre-processing feature of MINERful\footnote{\url{https://github.com/cdc08x/MINERful/}}~\cite{DiCiccio2015OnTheDiscovery} to this extent.
We fed the sub-logs into VDD for the detection of change points, setting the parameters as follows: 
\begin{iiilist}
	\item \num{50} for the window size, \item \num{50} for slide size, and \item \num{300} for the cut threshold. \end{iiilist}
Finally, we made our tool compute the value of the measures of the initial specification on every sub-log in a sequence.
Thereupon, we analyzed the trend of the specification measurements of the specification in correspondence with the change points detected by VDD, to observe if, and in how far, they exhibit variations.
In the following, we discuss the results of our analysis.

\subsubsection{Experimental results}
\Cref{fig:italian_sublog_confidence,fig:sepsis_sublog_confidence} illustrate the trends of Confidence (on the $y$-axis) for the mined specifications on the sub-logs of the Help-Desk and the Sepsis datasets, respectively.
The points on the $x$-axis represent the timestamp of the first event in every sub-log.
We add colored bars in correspondence with the change points detected by VDD.
As Yeshchenko et al.\ explain in~\cite{DBLP:conf/er/YeshchenkoCMP19a}, two drifts occur in the course of the process executions recorded in the Help-Desk log. They are located in the first and the last quarter of the diagram.

In \cref{fig:italian_sublog_confidence}, we highlight the corresponding points with a dark green bar.
The lighter bars refer to the erratic behavior of the process in correspondence with other changing points of lower impact.
We observe that all changing points correspond to upward or downward peaks in the poly-line plotting the Confidence values.
However, they are not the sole steep slopes that can be noticed in the diagram.
The reason is, VDD considers the trend of Confidence values for \emph{all} discoverable constraints.
In our case, instead, we focus on a selection of constraints, namely those that were discovered by Janus from the whole log setting a minimum threshold of \SI{95}{\percent} for Confidence.
Therefore, variations in the measure can be more visible considering the derived specification, while the oscillation may be mitigated by taking into account other constraints that are violated more often. Also, notice that VDD takes every constraint in isolation, while our holistic approach aims at measuring the interestingness of a specification as a whole.

\begin{figure*}[bt]
	\centering
	\includegraphics[width=0.66\linewidth]{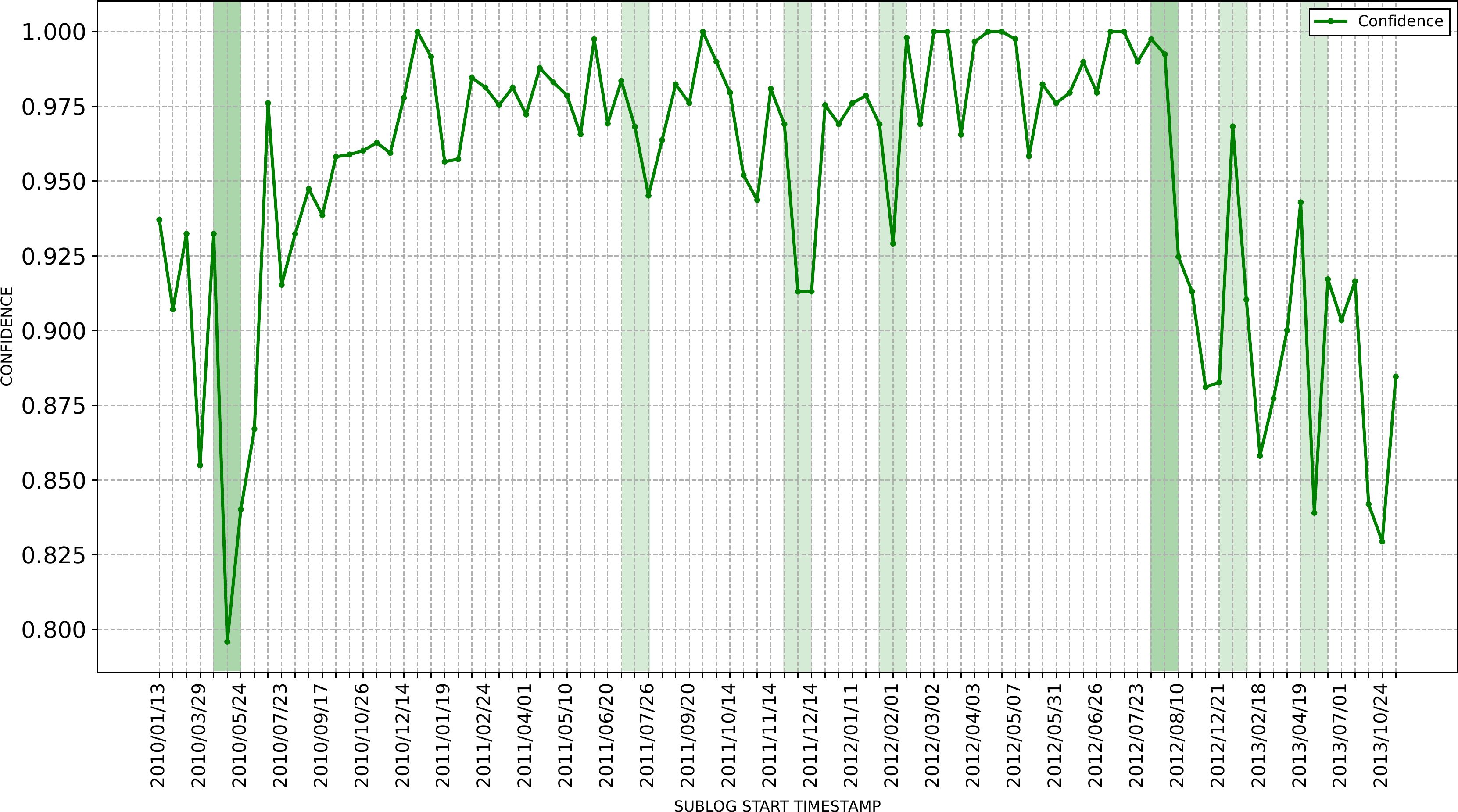}
	\caption[Help-Desk sub-log confidence values]{Help-Desk sub-log confidence values. The plot is visually edited with the addition of vertical bands in the areas where drifts (darker) and erratic behavior (lighter) are detected by VDD as in~\cite{DBLP:conf/er/YeshchenkoCMP19a}.}
	\label{fig:italian_sublog_confidence}
\end{figure*}
\begin{figure*}[tb]
	\centering
	\includegraphics[width=0.66\linewidth]{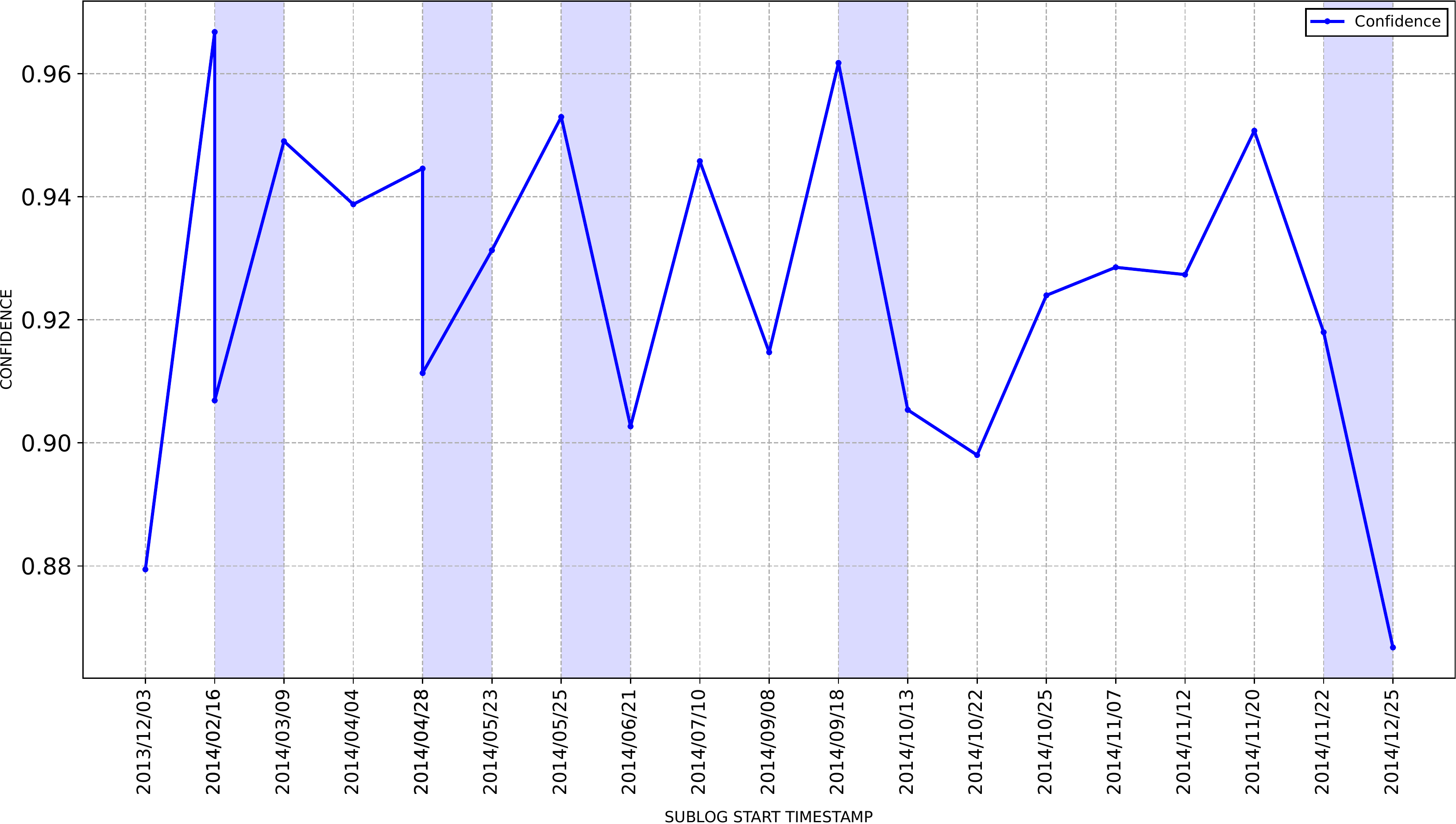}
	\caption[Sepsis sub-log confidence values]{Sepsis sub-log confidence values. The plot is visually edited with the addition of vertical bands in the areas where erratic behavior is detected by VDD as in~\cite{DBLP:journals/tvcg/YeshchenkoCMP22}.}
	\label{fig:sepsis_sublog_confidence}
\end{figure*}

Similar results are visible in \cref{fig:sepsis_sublog_confidence}. The bars in the diagram highlight the change points identified by VDD. As discussed in~\cite{DBLP:journals/tvcg/YeshchenkoCMP22}, none of the detected change points corresponds to actual process drifts.
Instead, these oscillations are evidence of the erratic behavior characterizing the event log, most likely due to the reportedly flexible nature of the healthcare process involved~\cite{Mannhardt2016Sepsis}; moreover, the latest oscillation is an outlier that occurs towards the end of the event log.
We observe that this characteristic is also evidenced by the fact that seasonal oscillations characterize the entire diagram, though their amplitude is limited compared to the Help-Desk case.

\medskip
\def\PA{\ensuremath{\DeclaModel_\RfActivation}}
\def\PB{\ensuremath{\DeclaModel_\RfObject}}
\def\PAB{{\PA \cap \PB}}
\def\PAnB{{\PA \cap \lnot \PB}}
\def\PnAB{{\lnot \PA \cap \PB}}
\def\PnAnB{{\lnot \PA \cap \lnot \PB}}
\newcommand{\Pof}[1]{P(#1)}
\begin{table}[tb]
	\caption[Quality measures employed in the sensitivity experiment]{Quality measures employed in the sensitivity experiment. For the sake of readability, we omit parameter $L$ (the event log) from the formulae.}
	\label{tab:interestingness:measures}
	\begin{adjustbox}{width=0.95\columnwidth,center=\columnwidth}\renewcommand{\arraystretch}{2}
\begin{tabular}{r l l}
	\toprule
	\textbf{Measure} & 
	\textbf{Formula} &
	\textbf{Range}
	\\
	
	\midrule
	Support &
	$ \Pof{\PAB} $ &
	$ [0,1] $
	\\
	
	\makecell[r]{Confidence/\\Precision}&
	$ \Pof{\PB|\PA} $ &
	$ [0,1] $
	\\

	Recall&
	$ \Pof{\PA|\PB} $ &
	$ [0,1] $
	\\
Accuracy
	&
	$ \Pof{\PAB} + \Pof{\PnAnB} $ &
	$ [0,1] $
	\\
	\makecell[r]{Lift/\\Interest}
	&
	$ \dfrac{\Pof{\PAB}}{\Pof{\PA}\Pof{\PB}} $ &
	$ [0,+\infty) $
	\\
	Leverage
	&
	$ \Pof{\PB|\PA} - \Pof{\PA}\Pof{\PB} $ &
	$ [-1,1] $
	\\
Added Value
	&
	$ \Pof{\PB|\PA} - \Pof{\PB} $ &
	$ [-1, 1] $
	\\
Jaccard
	&
	$ \dfrac{\Pof{\PAB}}{\Pof{\PA}+\Pof{\PB}-\Pof{\PAB}} $ &
	$ [0,1] $
	\\
	Certainty factor
	&
	$ \dfrac{\Pof{\PB|\PA}-\Pof{\PB}}{1-\Pof{\PB}} $ &
	$ [-1,1] $
	\\
Klosgen
	&
	\makecell[l]{$ \sqrt{\Pof{\PAB}} $ \\ 
		$ \quad \times \max\left({\Pof{\PB|\PA} - \Pof{\PB}},{\Pof{\PA | \PB} - \Pof{\PA}}\right) $} &
$ [-1,1] $
	\\
	Conviction
	&
	$ \dfrac{\Pof{\PA}\Pof{\lnot \PB}}{\Pof{\PAnB}} $ &
	$ [0,+\infty) $
	\\
J-Measure
	&
$ \Pof{\PAB} \log\dfrac{\Pof{\PB|\PA}}{\Pof{\PB}} + \Pof{\PAnB} \log\dfrac{\Pof{\lnot \PB|\PA}}{\Pof{\lnot \PB}} $ & $ (-\infty,+\infty) $
	\\
	One-Way Support
	&
	$ \Pof{\PB|\PA}\log_2\dfrac{\Pof{\PAB}}{\Pof{\PA}\Pof{\PB}} $ &
	$ (-\infty,+\infty) $
	\\
	Two-Way Support
	&
	$ \Pof{\PAB}\log_2\dfrac{\Pof{\PAB}}{\Pof{\PA}\Pof{\PB}} $ &
	$ (-\infty,+\infty) $
	\\
Piatetsky-Shapiro
	&
	$ \Pof{\PAB}-\Pof{\PA}\Pof{\PB} $ &
	$ [-1,1] $
	\\
	Cosine
	&
	$ \dfrac{\Pof{\PAB}}{\sqrt{\Pof{\PA}\Pof{\PB}}} $ &
	$ [0,+\infty) $
	\\
	Loevinger
	&
	$ 1-\dfrac{\Pof{\PA}\Pof{\lnot \PB}}{\Pof{\PAnB}} $ &
	$ (-\infty,1] $
	\\
	Information Gain
	&
	$ \log\dfrac{\Pof{\PAB}}{\Pof{\PA}\Pof{\PB}} $ &
	$ (-\infty,+\infty) $
	\\
	Sebag-Schoenauer
	&
	$ \dfrac{\Pof{\PAB}}{\Pof{\PAnB}} $ &
	$ [0,+\infty) $
	\\
	Least Contradiction
	&
	$ \dfrac{\Pof{\PAB}-\Pof{\PAnB}}{\Pof{\PB}} $ &
	$ (-\infty,+\infty) $
	\\
	Odd Multiplier
	&
	$ \dfrac{\Pof{\PAB}\Pof{\lnot \PB}}{\Pof{\PB}\Pof{\PAnB}} $ &
	$ [0,+\infty) $
	\\
	\makecell[r]{Example and\\Counterexample\\Rate}
	&
	$ 1- \dfrac{\Pof{\PAnB}}{\Pof{\PAB}} $ &
	$ (-\infty,1] $
	\\
	Zhang
	&
	$ \dfrac{\Pof{\PAB}-\Pof{\PA}\Pof{\PB}}{\max\left(\Pof{\PAB}\Pof{\lnot \PB},\Pof{\PB}\Pof{\PAnB}\right)} $ &
	$ (-\infty,+\infty) $
	\\
	\bottomrule
\end{tabular} \end{adjustbox}
\end{table}
Our approach allows one to gauge the interestingness of a specification for additional measures beyond Confidence. Specifically,
\cref{tab:interestingness:measures} lists a comprehensive set of the measures that we used in our experiments. For every measure, we report the formula to compute it and its range.

The Sebag-Schoenauer measure of a specification $\DeclaModel$ given an event log $\EvtLog$, e.g., is computed as $$1-\dfrac{\Pof{\PA,\EvtLog}\Pof{\lnot \PB,\EvtLog}}{\Pof{\PAnB,\EvtLog}},$$ and its values range from $0$ (included) to $+\infty$.
Least Contradiction is defined as $$\dfrac{\Pof{\PAB,\EvtLog}-\Pof{\PAnB,\EvtLog}}{\Pof{\PB,\EvtLog}}, $$ on the range $(-\infty,+\infty)$.
Equipped with this set of measures, we conducted a comparative analysis to evaluate their ability to signal change points.

To allow for a comparison between measures defined on different ranges (e.g., $[0,+\infty)$ for the Sebag-Schoenauer measure, and $(-\infty,+\infty)$ for Least Contradiction), we normalize the values applying the method used in~\cite{Cecconi2022Measurement}, which projects all values on a $[0,1]$ interval (i.e., the range of Confidence, among others).

\begin{table*}[tb]
	\caption{Variability of specification measures in the context of drift detection.}
	\centering
	\subfloat[Sepsis~\cite{Mannhardt2016Sepsis}]{\label{tab:measure:variability:sepsis}\begin{adjustbox}{height=0.15\textheight}
			\begin{tabular}{l
		S[table-format = 1.12e+2,round-precision = 12,round-mode=places]
		S[table-format = 1.12e+2,round-precision = 12,round-mode=places]
		S[table-format = 1.12e+2,round-precision = 12,round-mode=places]}
\toprule
\textbf{Measure}                     & \textbf{Mean}          & \textbf{Std. Dev.}     & \textbf{CV}                             \\ \cmidrule(r){1-1}\cmidrule{2-4} 
Sebag-Schoenauer                     & 0.861505846319816     & 0.0020530501798698100  & 0.00238309488976777   \\
Compliance                           & 0.9249894805320810    & 0.000696413128719769   & 0.000752887620212904  \\
Jaccard                              & 0.9249894942804780    & 0.0006964131032955830  & 0.000752887581536591  \\
Accuracy                             & 0.9249894941500840    & 0.000696412944925081   & 0.000752887410429426  \\
Support                              & 0.9249894941500840    & 0.000696412944925081   & 0.000752887410429426  \\
Confidence                           & 0.9249894873573770    & 0.000696412869824218   & 0.000752887334767247  \\
Cosine                               & 0.9624220816079950    & 0.00017594325508933600 & 0.000182812986580039  \\
Example and Counterexample Rate      & 0.9698384055573610    & 0.00015956323713049200 & 0.000164525591290429  \\
Least Contradiction                  & 0.7273788053688400    & 8.9754288789846E-05    & 1.23394149138472E-04  \\
Certainty factor                     & 0.49999987553067000   & 3.62778454402209E-13   & 7.25557089423628E-13  \\
Zhang                                & 0.49999992959575500   & 1.14387060744897E-13   & 2.28774153703137E-13  \\
Leverage                             & 0.49999997161684400   & 1.83387616775375E-14   & 3.66775254371229E-14  \\
Added Value                          & 0.4999999858084200    & 4.584691433807E-15     & 9.16938312787009E-15  \\
Piatetsky-Shapiro                    & 0.4999999858084200    & 4.584691433807E-15     & 9.16938312787009E-15  \\
Klosgen                              & 0.4999999867513850    & 3.99210886256133E-15   & 7.98421793668232E-15  \\
Two-way Support                      & 0.49999999492097600   & 2.81299658592334E-15   & 5.62599322899579E-15  \\
One-way Support                      & 0.4999999949209810    & 2.81299623864483E-15   & 5.62599253443871E-15  \\
Lovinger                             & 0.6666666587678120    & 2.57148744891604E-15   & 3.85723121907562E-15  \\
Information Gain                     & 0.49999999574252400   & 1.80173298621423E-15   & 3.6034660031118E-15   \\
J Measure                            & 0.4999999960421020    & 1.75051397629489E-15   & 3.5010279803032E-15   \\
Lift                                 & 0.500000002619047     & 3.69047648344369E-16   & 7.38095292822525E-16  \\
Conviction                           & 0.5000000066666660    & 2.58333335466866E-16   & 5.16666664044844E-16  \\
Odd Multiplier                       & 0.5000000050000000    & 5.17689969051289E-32   & 1.03537992774878E-31  \\
Recall                               & 1.0                   & 0.0                    & 0.0                   \\
\bottomrule
\end{tabular} 		\end{adjustbox}
	}
	\hfill
	\subfloat[Help-Desk~\cite{Polato2017helpdesk}]{\label{tab:measure:variability:italian}\begin{adjustbox}{height=0.15\textheight}
			\begin{tabular}{@{}
		l
		S[table-format = 1.12e+2,round-precision = 12,round-mode=places]
        S[table-format = 1.12e+2,round-precision = 12,round-mode=places]
        S[table-format = 1.12e+2,round-precision = 12,round-mode=places]}
   		\toprule
		\textbf{Measure}                     & \textbf{Mean}          & \textbf{Std. Dev.}     & \textbf{CV}                             \\ \cmidrule(r){1-1}\cmidrule{2-4} 
Sebag-Schoenauer                     & 0.9008192372816360   & 0.006297508246123700   & 0.00699086785172064  \\
Accuracy                             & 0.9506220743728800   & 0.0021794735617332800  & 0.00229268141408463  \\
Support                              & 0.9506220743728800   & 0.0021794735617332800  & 0.00229268141408463  \\
Compliance                           & 0.9506220711662860   & 0.0021794734722218400  & 0.00229268132765729  \\
Confidence                           & 0.9506220761053870   & 0.0021794733337249500  & 0.00229268117005451  \\
Jaccard                              & 0.9506220811878140   & 0.002179473204845890   & 0.00229268102222348  \\
Cosine                               & 0.9752398032617170   & 0.0005515822391238080  & 0.000565586266351133 \\
Example and Counterexample Rate      & 0.9794139246013880   & 0.0005243043514953000  & 0.000535324583738879 \\
Least Contradiction                  & 0.7345604451566600   & 0.00029492119236132600 & 0.000401493429582133 \\
Certainty factor                     & 0.5000000479861520   & 3.02112660488172E-13   & 6.04225262987453E-13 \\
Zhang                                & 0.5000000268109960   & 8.43416317016721E-14   & 1.68683254358212E-13 \\
Leverage                             & 0.5000000090702720   & 4.13323212570877E-15   & 8.26646410145938E-15 \\
One-way Support                      & 0.5000000087120500   & 1.03728706039543E-15   & 2.07457408464327E-15 \\
Two-way Support                      & 0.5000000087120490   & 1.03728700045095E-15   & 2.07457396475432E-15 \\
Added Value                          & 0.500000004535136    & 1.03330821258011E-15   & 2.06661640641545E-15 \\
Piatetsky-Shapiro                    & 0.500000004535136    & 1.03330821258011E-15   & 2.06661640641545E-15 \\
Klosgen                              & 0.5000000042737360   & 9.20537665386252E-16   & 1.84107531503596E-15 \\
J Measure                            & 0.5000000073832810   & 6.86343306927391E-16   & 1.37268659358492E-15 \\
Information Gain                     & 0.5000000068027020   & 6.26668457801818E-16   & 1.25333689855148E-15 \\
Lovinger                             & 0.6666666696480230   & 8.2867194558003E-16    & 1.24300791281127E-15 \\
Lift                                 & 0.5000000078260860   & 1.11156231710444E-16   & 2.22312459941215E-16 \\
Conviction                           & 0.5000000030952380   & 8.84968477979402E-17   & 1.76993694500205E-16 \\
Odd Multiplier                       & 0.5000000050000000   & 1.99591313369172E-31   & 3.99182622746518E-31 \\
Recall                               & 1.0                  & 0.0                    & 0.0                  \\
\bottomrule
\end{tabular} 		\end{adjustbox}
	}
	\label{tab:measure:variability}
\end{table*}
\Cref{tab:measure:variability:sepsis,tab:measure:variability:italian} show the result of our analysis on both the Sepsis and Help-Desk logs, respectively. We report only measures with non-null values. For every measure, we indicate the mean value, the standard deviation, and the Coefficient of Variation (CV). CV, expressed as the ratio of the standard deviation to the mean, shows the extent of variability in relation to the average value along the sequence of collected values. We sort the measures in descending order by CV, standard deviation and mean.
Measures that exhibit relatively more ample oscillations (hence, most sensitive to changes) are thus at the top.
Both with the Sepsis and Help-Desk logs, the Sebag-Schoenauer measure ranks first, followed by a group of measures including Support and Confidence with very close values.
From the Least Contradiction on, all measures follow with a CV that is lower by orders of magnitude (from $10^{-4}$ down to $10^{-13}$ and below). The Recall measure closes the list with a steady mean of \num{1.0}, as both CV and standard deviation equate to \num{0}.
Notice that, besides minimal variations in the ranking, the aforementioned groups are composed by the same measures both in \cref{tab:measure:variability:sepsis,tab:measure:variability:italian}, thus regardless of the event log under examination.

\begin{figure*}[htb]
	\centering
	\includegraphics[width=0.66\linewidth]{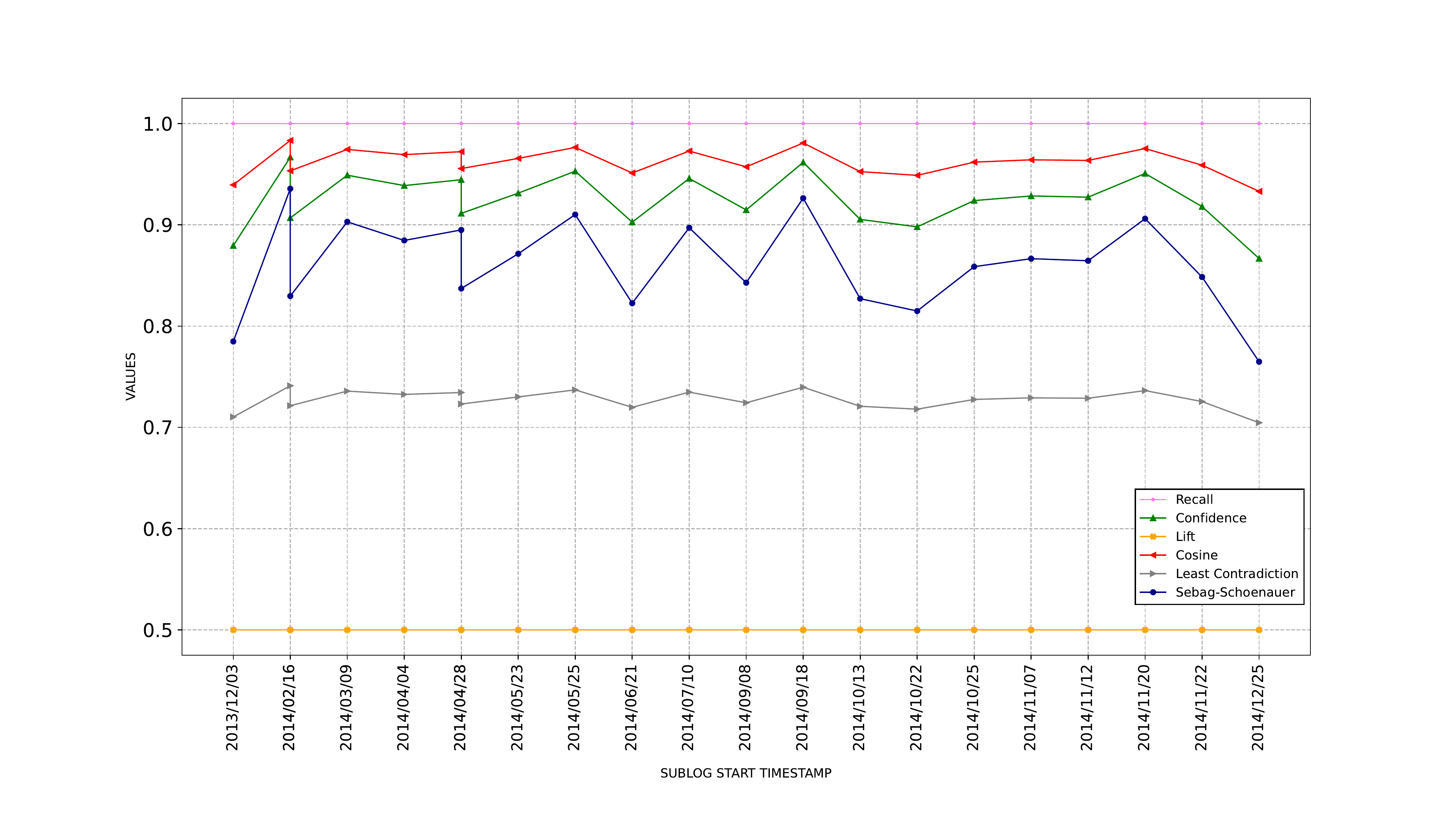}
	\caption[Specification measure oscillations on the Sepsis event log]{Specification measure oscillations on the Sepsis event log}
	\label{fig:sepsis_measures_trends}
\end{figure*}
\begin{figure*}[htb]
	\centering
	\includegraphics[width=0.66\linewidth]{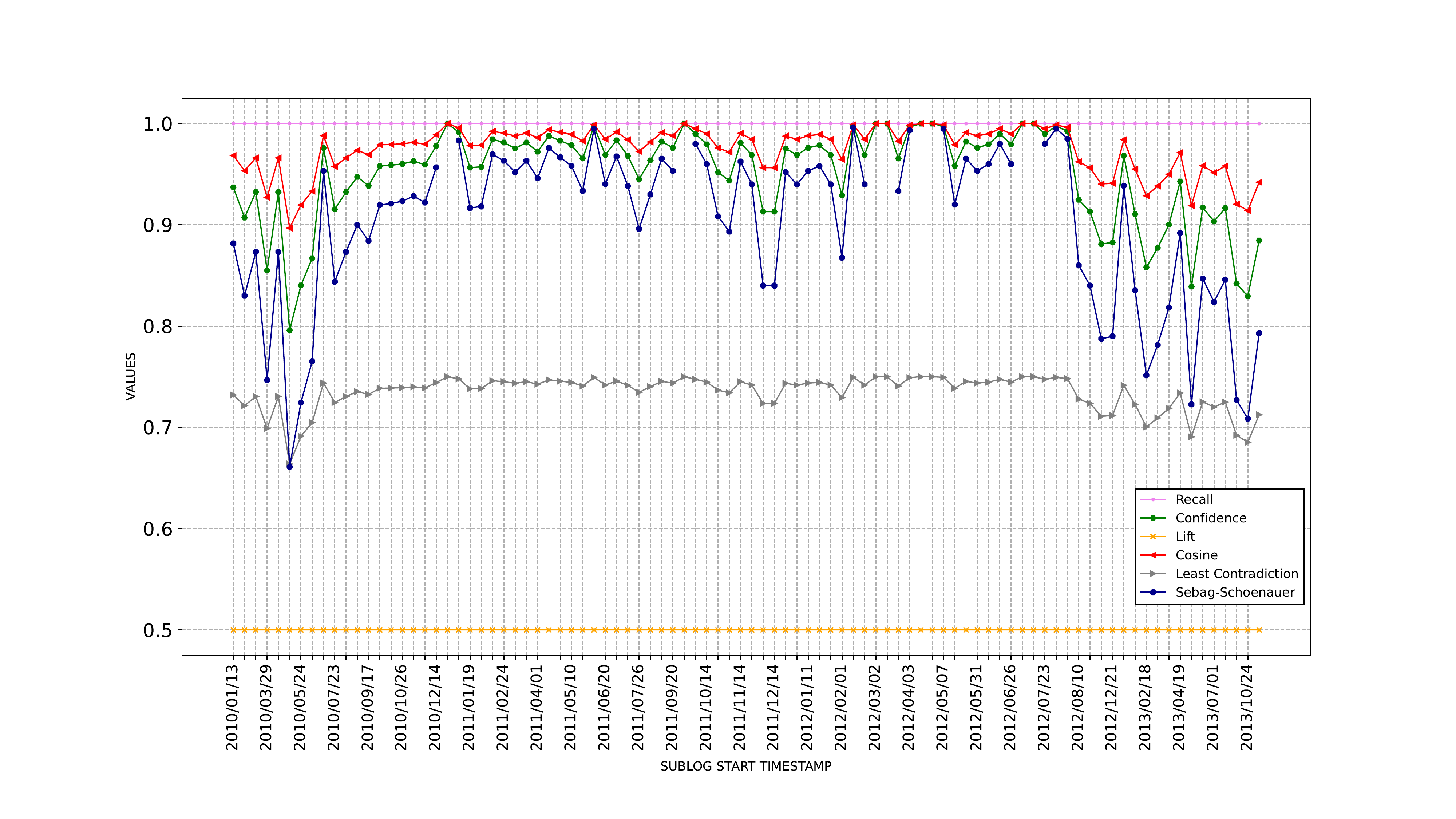}
	\caption[Specification measure oscillations on the help desk event log]{Specification measure oscillations on the Help-Desk event log}
	\label{fig:italian_measures_trends}
\end{figure*}
\Cref{fig:sepsis_measures_trends,fig:italian_measures_trends} plot the trends of those measures to visually compare their variations with the sub-logs extracted from the Sepsis and Help-Desk datasets, respectively.
In both cases, we observe that the amplest oscillations are exhibited by the Sebag-Schoenauer measure, as expected in light of the previous discussion.
Similar trends characterize the polylines associated with measures representing the group at the top of \cref{tab:measure:variability:sepsis,tab:measure:variability:italian}, such as Confidence and Cosine.
Least Contradiction lies in the middle of the diagrams and, though subject to variations, shows a lower sensitivity to changes.
Lift
is a representative of a large group of measures showing only imperceptible fluctuations.
Finally, as anticipated above, Recall remains steadily equal to \num{1.0}.

\bigskip
\todo{@Claudio: The new part on partial subspec analysis here.}
\begin{table}[tb]
	\caption[Aggregate confidence in the Help-Desk sub-logs]{Aggregate confidence in the Help-Desk sub-logs~\cite{Polato2017helpdesk}  for the  constraint clusters exhibiting the most erratic behavior~\cite{DBLP:conf/er/YeshchenkoCMP19a}}
	\label{tab:italian:clusters}
	\begin{tabular}{l 
			S[table-format = 1.4,round-precision = 4,round-mode=places] 
			S[table-format = 1.4,round-precision = 4,round-mode=places] 
			S[table-format = 1.4,round-precision = 4,round-mode=places] 
			S[table-format = 1.4,round-precision = 4,round-mode=places] 
			S[table-format = 1.4,round-precision = 4,round-mode=places]}
		\toprule
		\textbf{Cluster}    & \textbf{Mean}        & \textbf{Std.\ Dev}   & \textbf{CV}          & \textbf{Max}        & \textbf{Min}         \\
		\midrule
		Cluster 4           & 0.633592362 & 0.008314476 & 0.013122753 & 0.89115238 & 0.512873275 \\
		Cluster 9           & 0.316430342 & 0.211047113 & 0.666962314 & 1          & 0           \\
		Cluster 11          & 0.054347826 & 0.051958911 & 0.956043956 & 1          & 0           \\
		\midrule
		Joint specification & 0.634607513 & 0.00815469  & 0.012849974 & 0.89176961 & 0.512873275 \\
		\bottomrule
	\end{tabular}
\end{table}

\begin{table}[tb]
	\caption[Aggregate confidence in the Help-Desk sub-logs]{Aggregate confidence in the Sepsis sub-logs~\cite{Mannhardt2016Sepsis} for the  constraint clusters exhibiting the most erratic behavior~\cite{DBLP:journals/tvcg/YeshchenkoCMP22}}
	\label{tab:sepsis:clusters}
	\begin{tabular}{l 
			S[table-format = 1.4,round-precision = 4,round-mode=places] 
			S[table-format = 1.4,round-precision = 4,round-mode=places] 
			S[table-format = 1.4,round-precision = 4,round-mode=places] 
			S[table-format = 1.4,round-precision = 4,round-mode=places] 
			S[table-format = 1.4,round-precision = 4,round-mode=places]}
		\toprule
		\textbf{Cluster}    & \textbf{Mean} & \textbf{Std.\ Dev} & \textbf{CV} & \textbf{Max} & \textbf{Min} \\ \midrule
		Cluster 8           & 0.497715576   & 0.240440682        & 0.483088523 & 1            & 0            \\
		Cluster 12          & 0.302526847   & 0.177461662        & 0.586598062  & 1            & 0            \\ \midrule
		Joint specification & 0.55178841     & 0.148071382         & 0.26834812 & 1            & 0            \\ \bottomrule
	\end{tabular}
\end{table}

The trends exhibited in \cref{fig:italian_sublog_confidence,fig:sepsis_sublog_confidence,fig:italian_measures_trends,fig:sepsis_measures_trends} pertain to full specifications mined from the original event logs.
Our approach, however, can gauge the interestingness of any sets of constraints, including sub-specifications (as illustrated in the experiment on synthetic data in \cref{sec:evaluation-discovery-vs-specification-measurements}).
Here, we observe the variations that Confidence undergoes for some sub-specifications in particular.
To have a better understanding of the rules that have led to the drifts, we form these sub-specifications by joining the constraints in the clusters that exhibit the most erratic trends in the Help-Desk log~\cite{Polato2017helpdesk} and the Sepsis log~\cite{Polato2017helpdesk} according to~\cite{DBLP:conf/er/YeshchenkoCMP19a} and \cite{DBLP:journals/tvcg/YeshchenkoCMP22}, respectively.
The authors of~\cite{DBLP:conf/er/YeshchenkoCMP19a,DBLP:journals/tvcg/YeshchenkoCMP22} indicate that those clusters are the ones that mainly signal change points in the process.
Here, we leverage the capability of our approach to assess the behavior of those constraints taken together as sub-specifications, rather than individually as in~\cite{DBLP:conf/er/YeshchenkoCMP19a,DBLP:journals/tvcg/YeshchenkoCMP22}.
Thereafter, we also create specifications that join those sub-specifications.
Finally, we measure Confidence for every sub-specification and specification on the respective sub-logs, using the same tumbling window approach as in the experiments above.

\Cref{tab:italian:clusters,tab:sepsis:clusters} report on the minimum, maximum, and average values of Confidence together with the standard deviation and coefficient of variation for the clusters and the specifications derived therefrom on the Help-Desk and Sepsis event logs, respectively. 
We can observe that the erraticity of the clusters' behavior is confirmed by the CV, which is orders of magnitude higher than the values reported in \cref{tab:measure:variability:sepsis,tab:measure:variability:italian}.
\Cref{fig:cluster:italian,fig:cluster:sepsis} illustrate the trend of Confidence over the subsequent sub-logs from the Help-Desk and Sepsis datasets, respectively.
Both figures consider the individual clusters alone (\cref{fig:cluster:italian:9,fig:cluster:italian:11,fig:cluster:italian:4,fig:cluster:sepsis:8,fig:cluster:sepsis:12}) and the joined specifications (\cref{fig:cluster:italian:all,fig:cluster:sepsis:all}).
The plots in~\cref{fig:cluster:italian:9,fig:cluster:italian:11,fig:cluster:italian:4,fig:cluster:sepsis:8,fig:cluster:sepsis:12}, pertaining to the individual sub-specifications, closely resemble the trends illustrated in~\cite{DBLP:conf/er/YeshchenkoCMP19a,DBLP:journals/tvcg/YeshchenkoCMP22}.
However, we can also observe particular phenomena due to the definition of Confidence itself (see \cref{tab:interestingness:measures}):
\begin{iiilist}
	\item The Confidence of joined specifications tend to take the non-zero values from the sub-specifications, on one hand;
	\item On the other hand, the height of the peaks stemming from some sub-specifications is reduced whenever in other sub-specifications a corresponding lower Confidence is reported.
\end{iiilist}

The former effect is particularly visible comparing \cref{fig:cluster:sepsis:8,fig:cluster:sepsis:12} with \cref{fig:cluster:sepsis:all}: the plateaus lying at $0$ occur in \cref{fig:cluster:sepsis:all} only in correspondence of $0$ values for the Confidence of \emph{both} the clusters the specification is composed of.
Such a plateau, e.g., occurs between the marks ``2014/03/09'' and ``2014/12/25'' in \cref{fig:cluster:sepsis:12} (cluster 12), but not in \cref{fig:cluster:sepsis:8} (cluster 8). Then, the plateau does not occur in the Confidence plot for the whole specification in \cref{fig:cluster:sepsis:all}.
We see plateaus of Confidence equal to $0$ in \cref{fig:cluster:sepsis:all} right before the ``2014/03/09'' and ``2014/11/07'', instead, as the Confidence of neither of the sub-specifications goes beyond the minimum.

The conjunction of the two aforementioned effects is instead noticeable from \cref{fig:cluster:italian:4,fig:cluster:italian:all}. Confidence for cluster 4 (\cref{fig:cluster:italian:4}) neither drops to $0$ nor raises to $1$ with any sub-log, as opposed to clusters 9 and 11 (\cref{fig:cluster:italian:9,fig:cluster:italian:11}; see also the minimum and maximum values reported in \cref{tab:measure:variability:italian}). Then, the plot in \cref{fig:cluster:italian:all} tends to almost completely overlap with that of \cref{fig:cluster:italian:4}. We remark that this effect differs from what we would have obtained by merely averaging the Confidence values of the clusters' sub-specifications to assign the Confidence values of the joined specification.
 
\begin{figure*}[tbp]
	\centering
	\begin{subfigure}{.49\textwidth}
		\centering
		\includegraphics[width=1.0\linewidth]{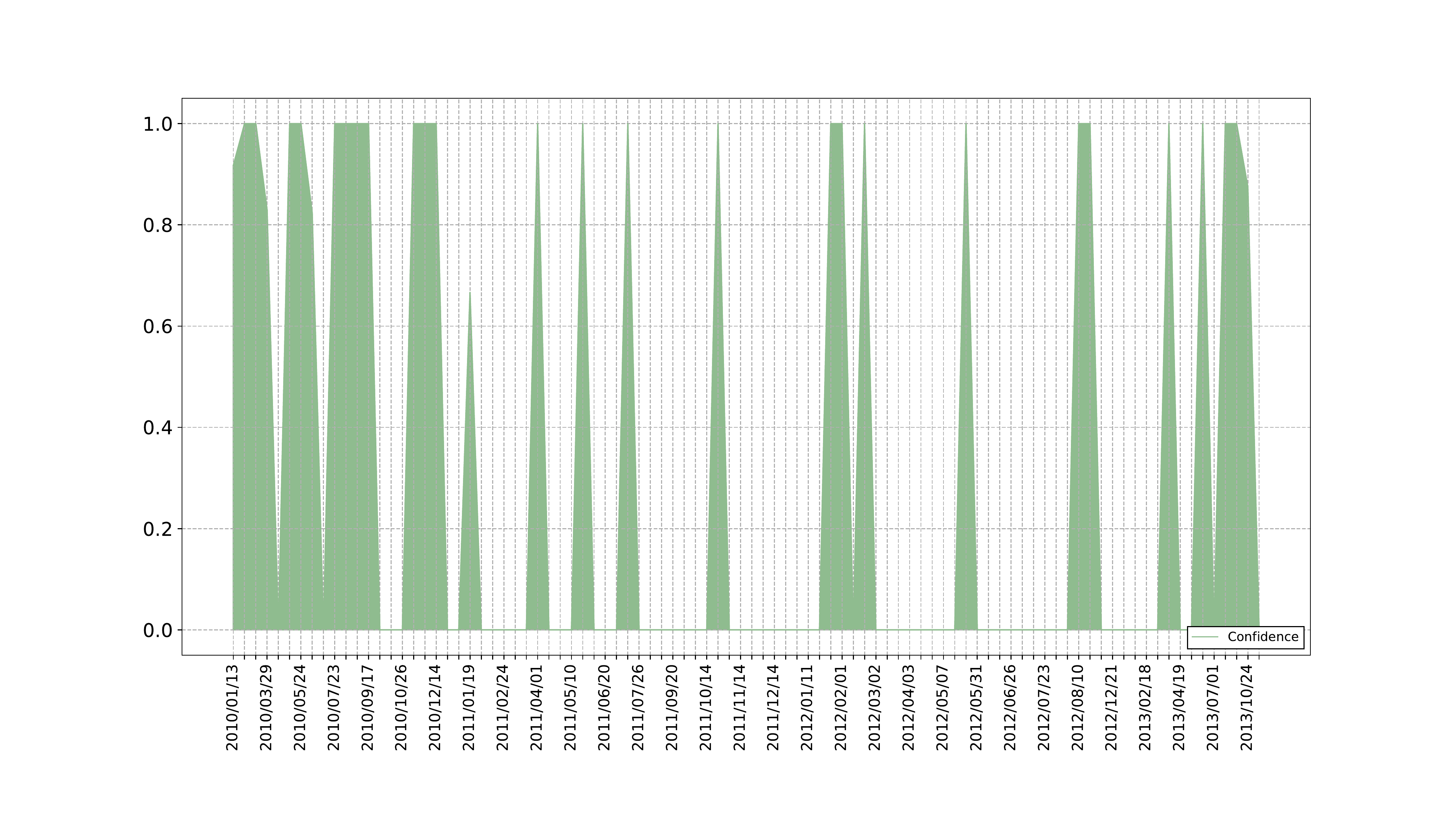}
		\caption{Cluster 9}
		\label{fig:cluster:italian:9}
	\end{subfigure}\hfill
	\begin{subfigure}{.49\textwidth}
		\centering
		\includegraphics[width=1.0\linewidth]{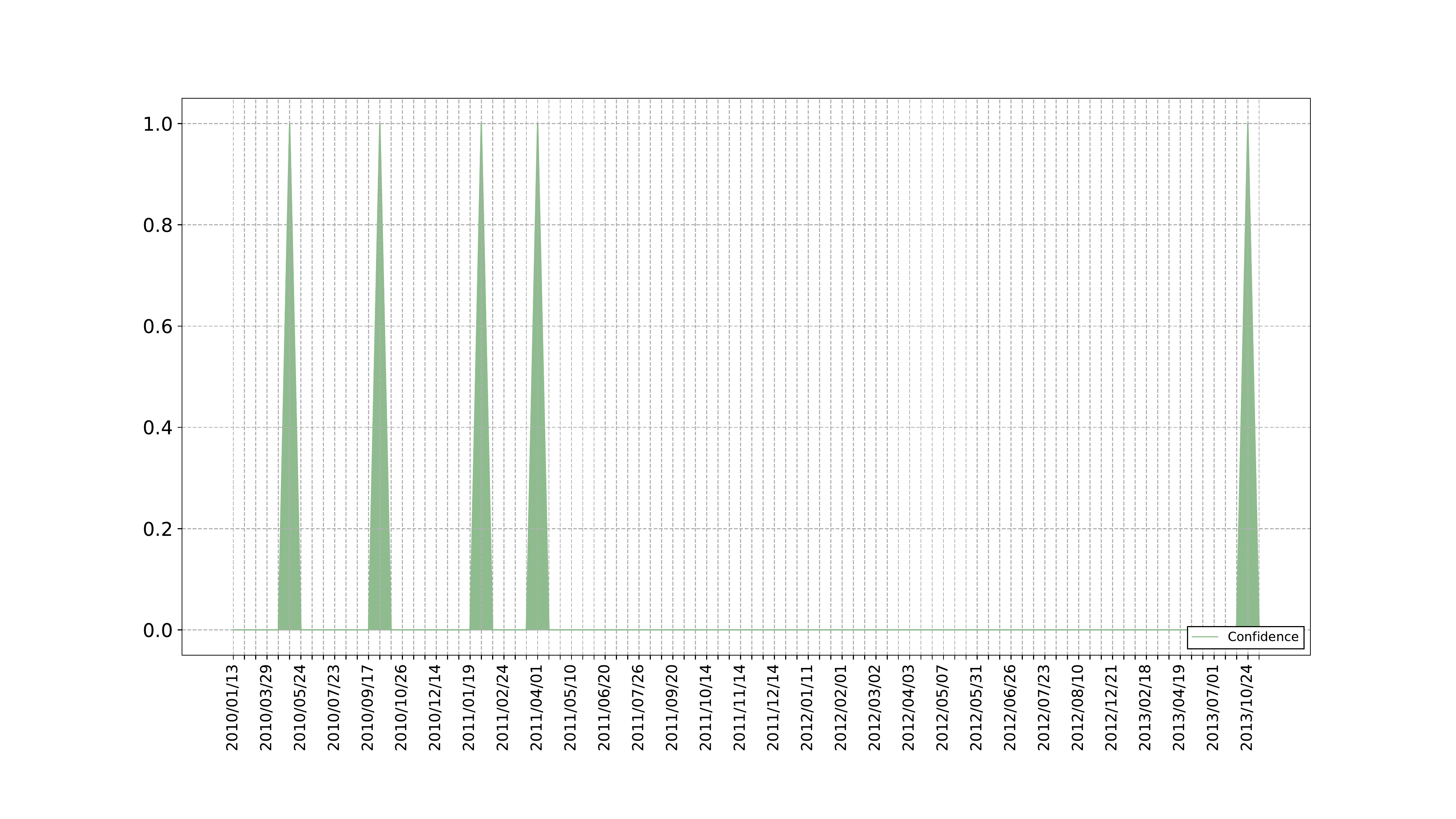}
		\caption{Cluster 11}
		\label{fig:cluster:italian:11}
	\end{subfigure}\\
	\begin{subfigure}{.49\textwidth}
		\centering
		\includegraphics[width=1.0\linewidth]{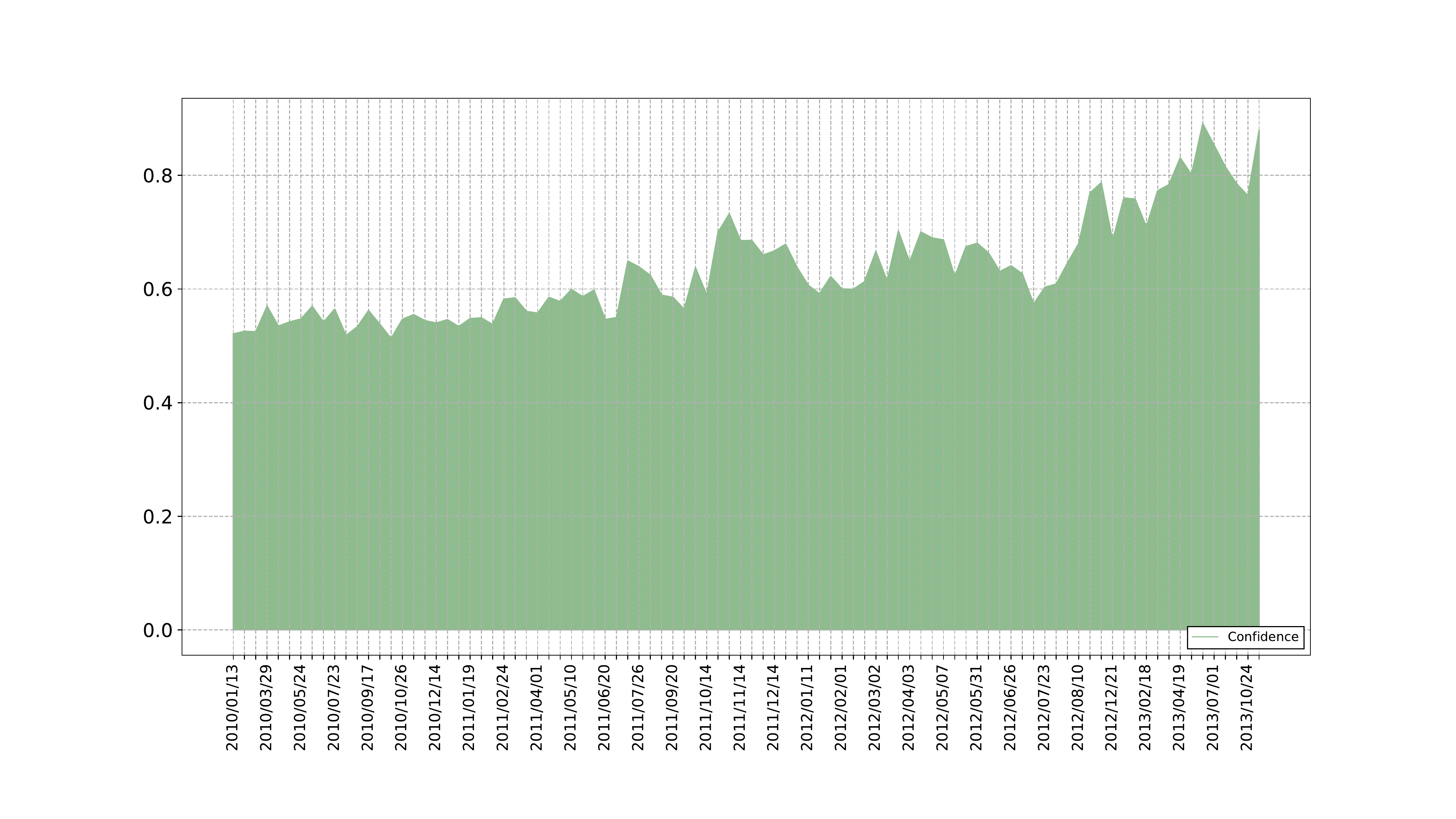}
		\caption{Cluster 4}
		\label{fig:cluster:italian:4}
	\end{subfigure}\hfill
	\begin{subfigure}{.49\textwidth}
		\centering
		\includegraphics[width=1.0\linewidth]{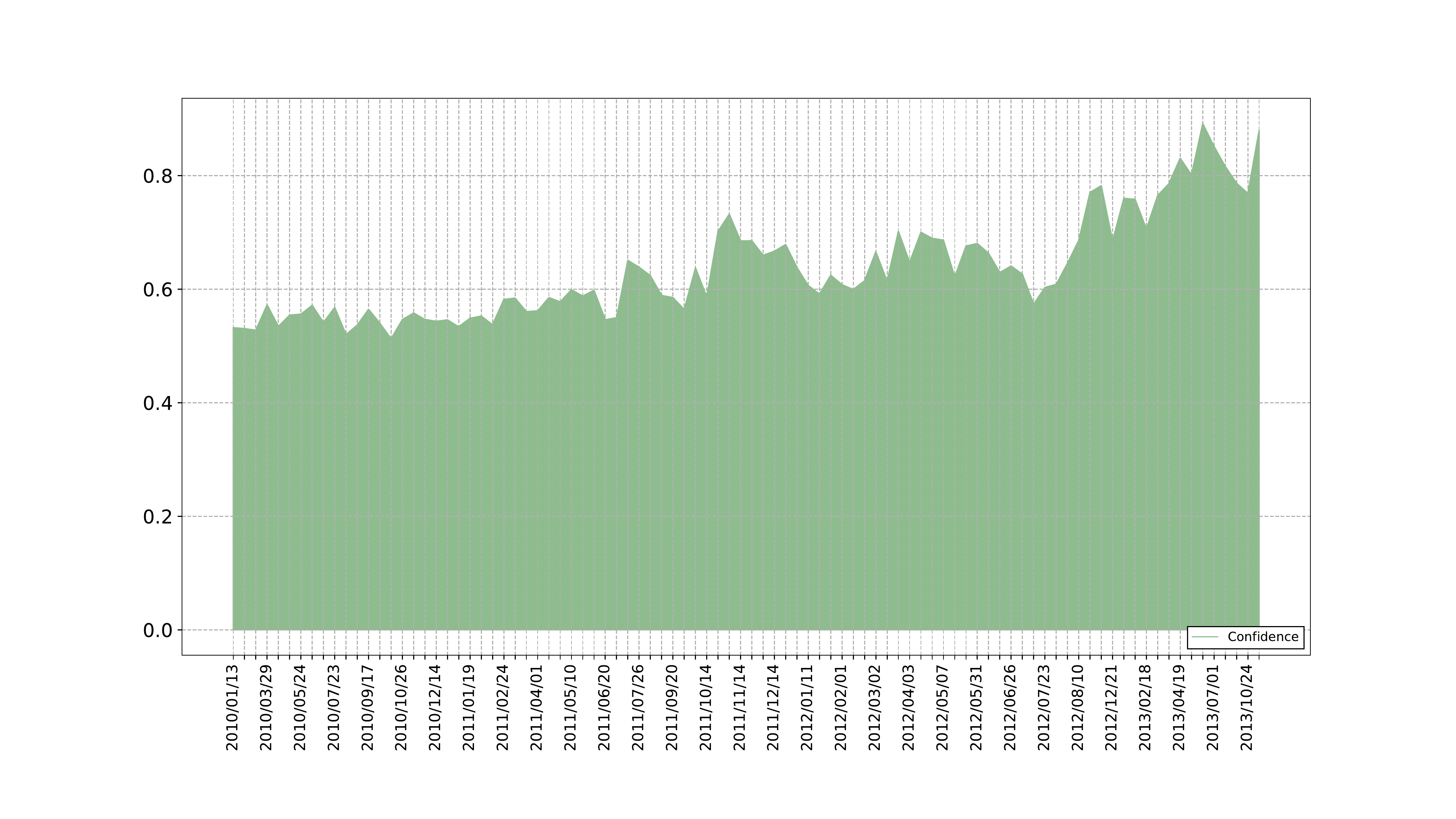}
		\caption{Specification consisting of the constraints in the clusters}
		\label{fig:cluster:italian:all}
	\end{subfigure}
	\caption[Confidence of the erratic clusters with the Help-Desk log]{Confidence of the constraint clusters with the most erratic behavior with the Help-Desk log, and of the specification stemming from their union}
	\label{fig:cluster:italian}
\end{figure*}

\begin{figure*}[tbp]
	\centering
	\begin{subfigure}{.49\textwidth}
		\centering
		\includegraphics[width=1.0\linewidth]{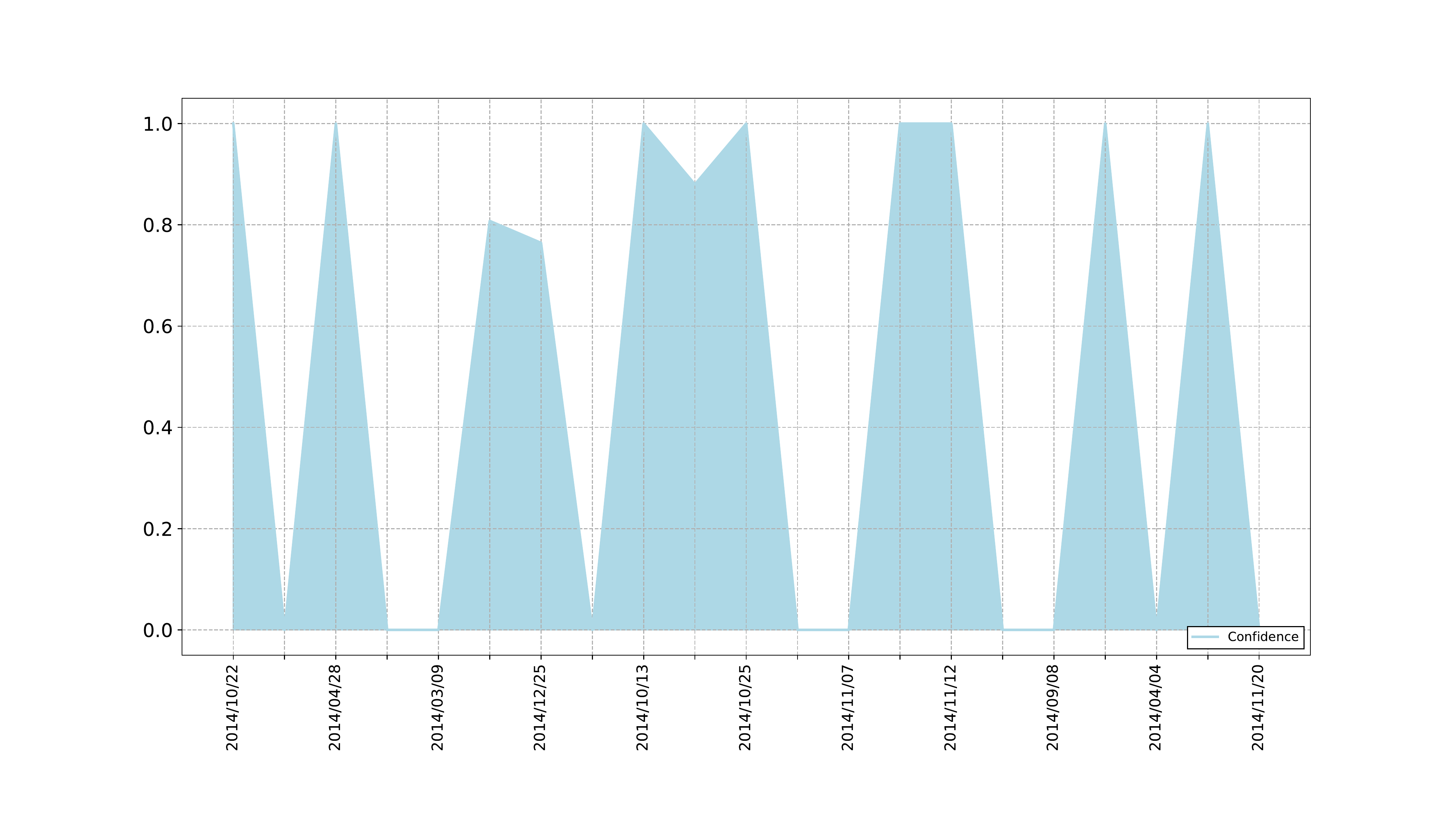}
		\caption{Cluster 8}
		\label{fig:cluster:sepsis:8}
	\end{subfigure}\hfill
	\begin{subfigure}{.49\textwidth}
		\centering
		\includegraphics[width=1.0\linewidth]{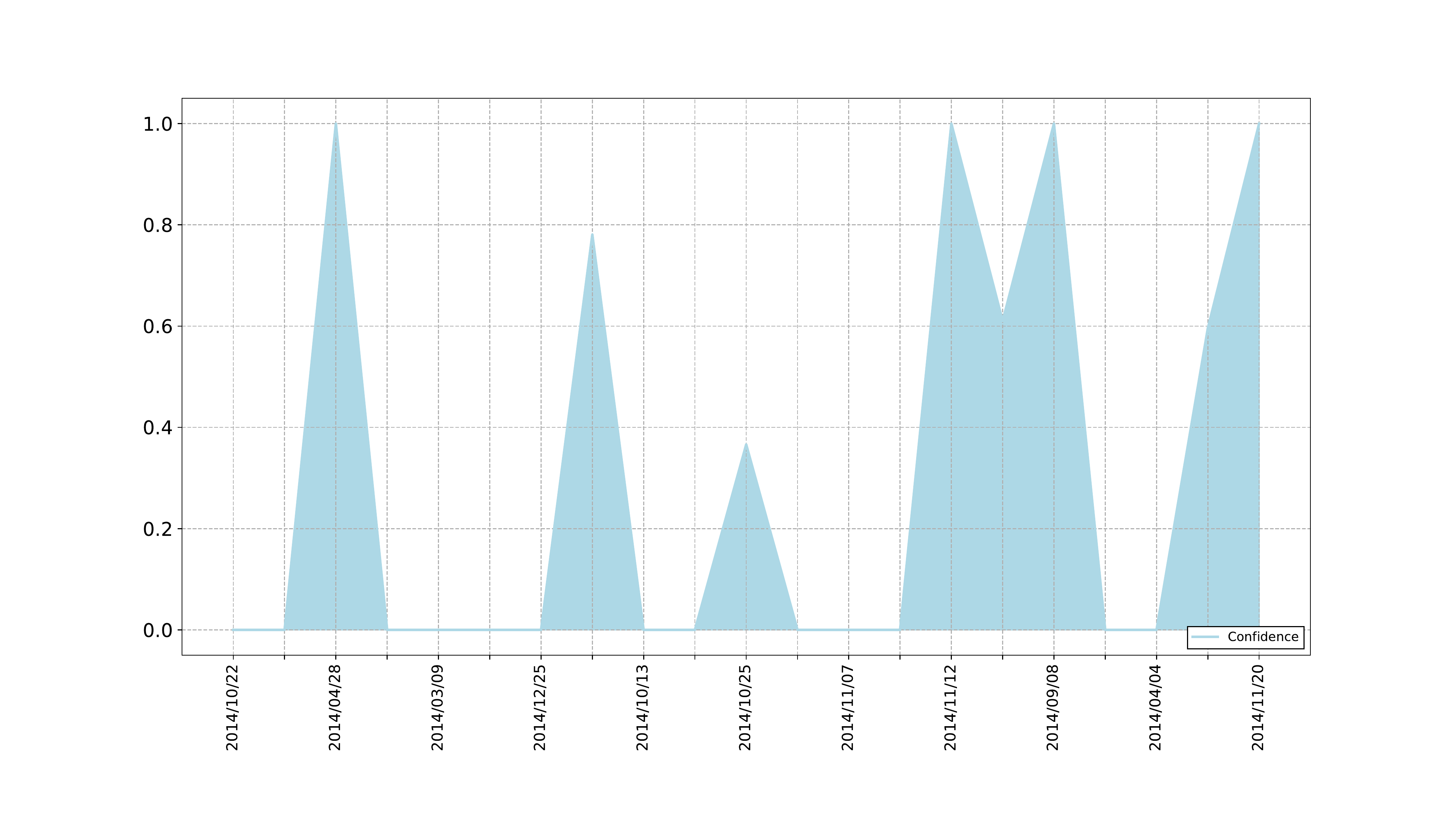}
		\caption{Cluster 12}
		\label{fig:cluster:sepsis:12}
	\end{subfigure}\\
	\begin{subfigure}{.49\textwidth}
		\centering
		\includegraphics[width=1.0\linewidth]{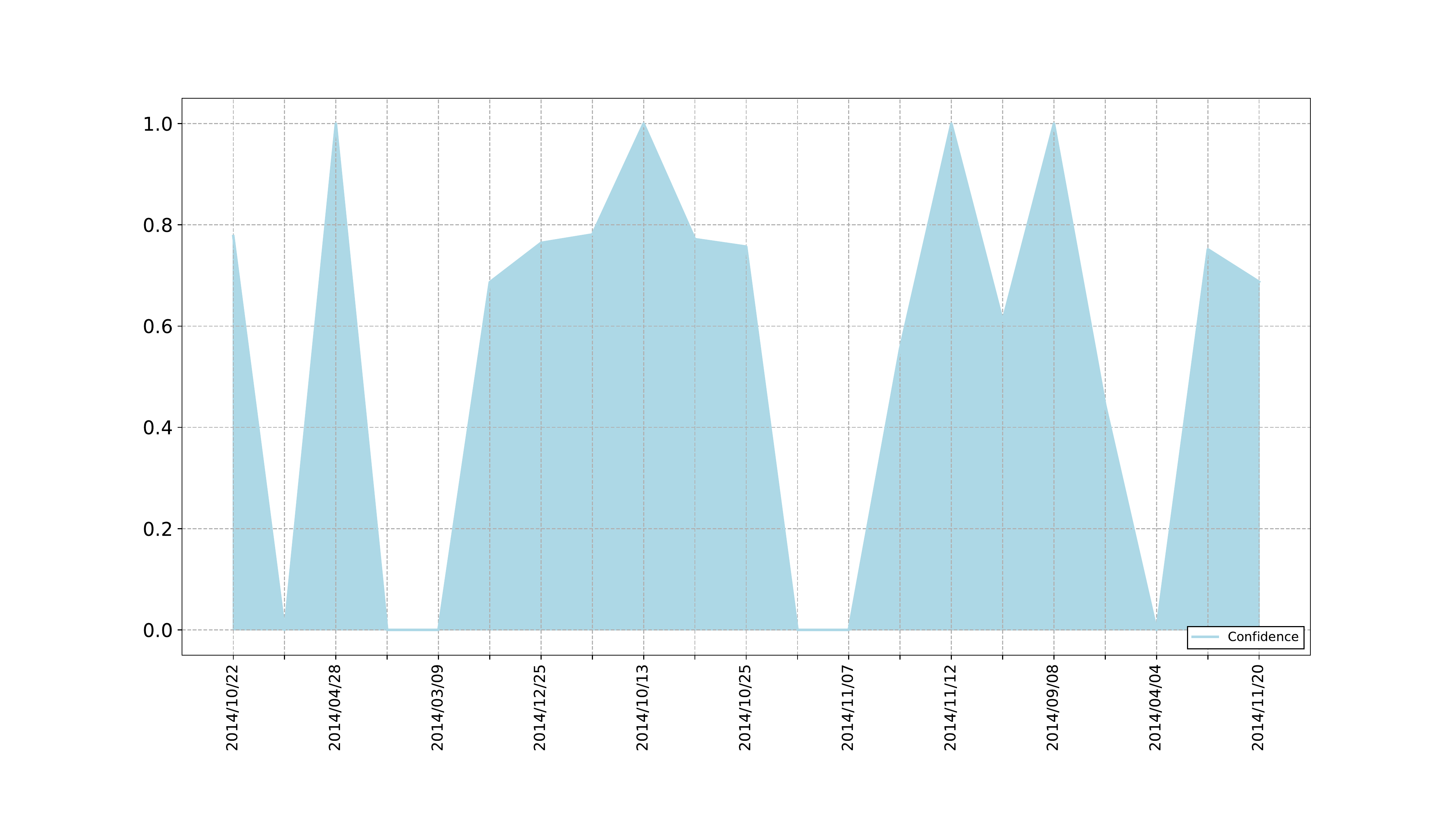}
		\caption{Specification consisting of the constraints in the clusters}
		\label{fig:cluster:sepsis:all}
	\end{subfigure}\caption[Confidence of the erratic clusters with the Sepsis log]{Confidence of the constraint clusters with the most erratic behavior with the Sepsis log, and of the specification stemming from their union}
	\label{fig:cluster:sepsis}
\end{figure*}

\subsection{Discussion}
\label{sec:exp:discussion}
Thus far, we have investigated various aspects to which the application of our measurement framework contributes with novel insights. We provide their summary below.

First of all, we observed in \cref{sec:evaluation-discovery-vs-specification-measurements} that the overall compliance of an event log to a whole specification tends to be lower than its compliance to every rule taken individually. This aspect is of considerable relevance in the context of process and specification mining, as current works in the literature still tend to resort to an analysis centered around individual rules, thus neglecting their effect on the expected behavior in combination~\cite{Yang2006Perracotta,Lemieux2015General,Le2015Beyond,DiCiccio2015OnTheDiscovery,Alman.etal/BPM2021:RuMDeclarativeProcessMiningDistilled,DBLP:journals/sttt/BackSHM22}.

Also, the opportunity to compute a large array of measures, including but not limited to those that derive from association rule mining, sheds light on interesting facts from different perspectives. 
The empirical piece of evidence presented in \cref{sec:evaluation-absolute-match-different-measures} confirms that a single measure is not sufficient to provide a full account of the degree of interestingness of a mined process specification.
The use case illustrated in \cref{sec:use-case-process-drift} shows a possible application of our approach to assessing the variability of the degree of compliance of process executions with the overall specification over time. Our tool is not meant to be a drift identification technique. Other approaches, such as the aforementioned VDD~\cite{DBLP:journals/tvcg/YeshchenkoCMP22}, are specifically designed to achieve this goal. However, our method can be used in practice to observe the effect that drifts and other change points in the process may have on an entire specification~\cite{DBLP:conf/bpmds/SchutzenmeierCD23}. The most suitable measure to consider for this kind of analysis appears to be the Least Contradiction.
Also, the oscillations of measures in the sequence highlight that process executions are subject to variations over time, while the mining of a whole event log cannot represent these changes in behavior (either temporary or not).
Furthermore, we have observed that the mere aggregation of measures stemming from sub-specifications does not properly account for the measurement of a conjoint specification.

In the next section, we provide an overview of the related literature and position our contribution against the existing background.

\section{Related work}
\label{sec:related-works}
Different contributions in the literature aim at quantitative extensions of \acrshort{ltl}/\gls{ltlf} enriching the languages with quantitative operators.
\cite{Almagor2016Formally,Lahijanian2015ThisTime,Piribauer2021Quantified} all proposed the addition of quantitative operators into the logic.
\acrshort{ltl}$[\mathcal{F}]$~\cite{Almagor2016Formally} introduces quality operators quantifying over distinct satisfactions of a formula. Quantified-\acrshort{ltl}~\cite{Piribauer2021Quantified} uses quantifiers over its propositional variables, also in probabilistic systems such as Markov chains. 

In~\cite{Lahijanian2015ThisTime}, the quantification of satisfaction, applied in the context of planning, 
is based on associating costs to specification violations based on user ranking of tasks priority. Differently from these methods, we do not extend the syntax and semantics of \gls{ltlf} with new operators, as we quantify the satisfaction of formulae based on standard \gls{ltlf}. 

As for the interplay of temporal logic and probabilities, statistical model checking techniques~\cite{Legay2019Statistical} retrieve the probability for a formula to be satisfied in a probabilistic environment as Markov chains. Their goal is to predict the likelihood of a formula for any possible execution (a probabilistic relaxation of traditional model checking), while we study only already executed traces.
The method proposed in \cite{Maggi2020Temporal} is close to our investigation, as it resorts to the association of a probability threshold to each rule. The threshold is used to perform relaxed conformance checking: each rule should hold in at least a portion of the log that is greater than that value.
However, only single rules are analyzed and the trace satisfaction is not quantified but considered as boolean, whereas we can assess the partial satisfaction of specifications, also on single traces. 

In process mining, the compliance of process models to the data is usually gauged with four scores: Fitness, Precision, Generalization, and Simplicity~\cite{buijs2014quality}.
In~\cite{DeLeoni2015AnAlignment} Fitness, Precision, and Generalization are devised for {\Declare} models through alignments, while in~\cite{Polyvyanyy2020Monotone} Fitness and Precision are computed for any regular language through entropy.
Our framework focuses on a different set of measures, inspired by association rule mining. The comparison and integration of the four measures above paves the path for future research endeavors.
Notably, the novel measure of informativeness is proposed in~\cite{Burattin2019Fifty} to understand the differences between compliant traces. We showed in \cref{sec:evaluation} how separate measures can spot differences in compliant specifications, thus a deeper analysis in this direction is an interesting research outlook.

\section{Conclusion}
\label{sec:conclusions}
In this paper, we presented a tractable approach to quantify the satisfaction degree of rule-based \gls{ltlf} specifications on bags of traces. Our approach is grounded in probabilistic models with which we have derived maximum-likelihood estimators for \gls{ltlf} formulae, \glspl{rf}, and process specifications. We apply our prototype to real-world data, showing its broad range of employment. We provide experimental evidence that the Confidence of a mined specification is often below the minimum levels set for the discovery of its rules. Also, we observe that a single measure does not give a full picture of the level of interestingness of a specification for an event log. Furthermore, we describe the effect of drifts and anomalies on the specification measurements, with insights into the measures that are more suitable for the signaling of behavioral changes over time.

The advantage of analyzing processes at the higher level of a specification is complementary to the details provided by its single rules. The specification measurement gives a holistic view of the status of the process, which could not be achieved by the sum of its single parts. This information can guide subsequent detailed analysis, e.g., highlighting the overall mismatch between the specification and the data, whenever it diverges significantly enough to call for an in-depth analysis.

Looking onward, 
our result for {\ltlf} can be easily extended to \acrlong{ldlf}~\cite{DeGiacomo2013Linear}, which has the expressive power of Monadic Second Order Logic, but with the same computational cost of \gls{ltlf}. Moreover, we aim to explore the enrichment of log labeling with additional contextual data (such as patient diagnoses) akin to \cite{DBLP:conf/icsoc/SchonigCMM16} and
construct estimators that take this information into account. Specifically, a possible extension would be to model the event of satisfying a formula conditional on context via logistic regression.
Also, a relevant direction for practical implementations is the assessment of measures based on the most common specification and process mining tasks. To this end, the definition of desirable propositions for the interestingness measures and the properties they should guarantee is crucial, similarly to what has been done in~\cite{Syring2019Evaluating} for conformance measures in process mining.\todo{added reference suggested by rev.1}
Another interesting outlook is the employment of 
specifications measures as data features for machine learning applications, 
e.g., trace clustering~\cite{Weerdt2019Trace}.
Similarly, as hinted in the evaluation, our measurement framework can be used to support process mining applications, such as drift detection (with statistically significant identification of change points and pinpointing of the sub-specifications exhibiting the most erratic behavior), or post-processing filtering for declarative process discovery techniques.
A highly stimulating research endeavor can be spurred by the application of our technique on the field, in the context of highly flexible, dynamic system and process execution scenarios such as healthcare with the checking and monitoring of rules defined in clinical pathways~\cite{DBLP:conf/pakdd/WeerdtCVB12}, with extensions aimed at weighting differently constraints according to their compulsory or best-practice nature~\cite{DBLP:journals/flap/AmanteaRSGB22}.

\section*{Acknowledgments}
L.~Barbaro and C.~Di~Ciccio were partly supported by the Italian Ministry of University and Research (MUR) under PRIN grant B87G22000450001 (PINPOINT).
L.~Barbaro received funding from the Latium Region under the PO FSE+ grant B83C22004050009 (``Predictive process monitoring for production planning'').
C.~Di~Ciccio was also supported by project SERICS (PE00000014) under the NRRP MUR program funded by the EU-NGEU.
The authors declare that they have no known competing financial interests or personal relationships that could have appeared to influence the work reported in this paper.

\printcredits


\begin{thebibliography}{74}
\expandafter\ifx\csname natexlab\endcsname\relax\def\natexlab#1{#1}\fi
\providecommand{\url}[1]{\texttt{#1}}
\providecommand{\href}[2]{#2}
\providecommand{\path}[1]{#1}
\providecommand{\DOIprefix}{doi:}
\providecommand{\ArXivprefix}{arXiv:}
\providecommand{\URLprefix}{URL: }
\providecommand{\Pubmedprefix}{pmid:}
\providecommand{\doi}[1]{\href{http://dx.doi.org/#1}{\path{#1}}}
\providecommand{\Pubmed}[1]{\href{pmid:#1}{\path{#1}}}
\providecommand{\bibinfo}[2]{#2}
\ifx\xfnm\relax \def\xfnm[#1]{\unskip,\space#1}\fi
\bibitem[{Pesic et~al.(2010)Pesic, Bosnacki, and van~der
  Aalst}]{DBLP:conf/spin/PesicBA10}
\bibinfo{author}{M.~Pesic}, \bibinfo{author}{D.~Bosnacki},
  \bibinfo{author}{W.~van~der Aalst},
\newblock \bibinfo{title}{Enacting declarative languages using {LTL:} avoiding
  errors and improving performance},
\newblock in: \bibinfo{booktitle}{{SPIN}}, \bibinfo{year}{2010}, pp.
  \bibinfo{pages}{146--161}.
\bibitem[{{De Giacomo} et~al.(2021){De Giacomo}, Felli, Montali, and
  Perelli}]{DBLP:conf/ijcai/GiacomoFMP21}
\bibinfo{author}{G.~{De Giacomo}}, \bibinfo{author}{P.~Felli},
  \bibinfo{author}{M.~Montali}, \bibinfo{author}{G.~Perelli},
\newblock \bibinfo{title}{{HyperLDLf}: a logic for checking properties of
  finite traces process logs},
\newblock in: \bibinfo{booktitle}{{IJCAI}}, \bibinfo{year}{2021}, pp.
  \bibinfo{pages}{1859--1865}.
\bibitem[{Bacchus and Kabanza(1996)}]{DBLP:conf/aaai/BacchusK96}
\bibinfo{author}{F.~Bacchus}, \bibinfo{author}{F.~Kabanza},
\newblock \bibinfo{title}{Planning for temporally extended goals},
\newblock in: \bibinfo{booktitle}{AAAI/IAAI, Vol. 2}, \bibinfo{year}{1996}, pp.
  \bibinfo{pages}{1215--1222}.
\bibitem[{Camacho et~al.(2017)Camacho, Triantafillou, Muise, Baier, and
  McIlraith}]{DBLP:conf/aaai/CamachoTMBM17}
\bibinfo{author}{A.~Camacho}, \bibinfo{author}{E.~Triantafillou},
  \bibinfo{author}{C.~Muise}, \bibinfo{author}{J.~Baier},
  \bibinfo{author}{S.~McIlraith},
\newblock \bibinfo{title}{Non-deterministic planning with temporally extended
  goals: {LTL} over finite and infinite traces},
\newblock in: \bibinfo{booktitle}{{AAAI}}, \bibinfo{year}{2017}, pp.
  \bibinfo{pages}{3716--3724}.
\bibitem[{Lemieux et~al.(2015)Lemieux, Park, and
  Beschastnikh}]{DBLP:conf/kbse/LemieuxPB15}
\bibinfo{author}{C.~Lemieux}, \bibinfo{author}{D.~Park},
  \bibinfo{author}{I.~Beschastnikh},
\newblock \bibinfo{title}{General {LTL} specification mining {(T)}},
\newblock in: \bibinfo{booktitle}{{ASE}}, \bibinfo{year}{2015}, pp.
  \bibinfo{pages}{81--92}.
\bibitem[{Cao et~al.(2018)Cao, Tian, Le, and Lo}]{DBLP:journals/ase/CaoTLL18}
\bibinfo{author}{Z.~Cao}, \bibinfo{author}{Y.~Tian}, \bibinfo{author}{T.~Le},
  \bibinfo{author}{D.~Lo},
\newblock \bibinfo{title}{Rule-based specification mining leveraging learning
  to rank},
\newblock \bibinfo{journal}{Autom. Softw. Eng.} \bibinfo{volume}{25}
  (\bibinfo{year}{2018}) \bibinfo{pages}{501--530}.
\bibitem[{{Di Ciccio} and Montali(2022)}]{DBLP:books/sp/22/CiccioM22}
\bibinfo{author}{C.~{Di Ciccio}}, \bibinfo{author}{M.~Montali},
\newblock \bibinfo{title}{Declarative process specifications: Reasoning,
  discovery, monitoring},
\newblock in: \bibinfo{editor}{W.~M.~P. van~der Aalst},
  \bibinfo{editor}{J.~Carmona} (Eds.), \bibinfo{booktitle}{Process Mining
  Handbook}, \bibinfo{publisher}{Springer}, \bibinfo{year}{2022}, pp.
  \bibinfo{pages}{108--152}.
\bibitem[{Maggi et~al.(2011)Maggi, Mooij, and van~der
  Aalst}]{DBLP:conf/cidm/MaggiMA11}
\bibinfo{author}{F.~M. Maggi}, \bibinfo{author}{A.~J. Mooij},
  \bibinfo{author}{W.~M.~P. van~der Aalst},
\newblock \bibinfo{title}{User-guided discovery of declarative process models},
\newblock in: \bibinfo{booktitle}{Proceedings of the {IEEE} Symposium on
  Computational Intelligence and Data Mining, {CIDM} 2011, part of the {IEEE}
  Symposium Series on Computational Intelligence 2011, April 11-15, 2011,
  Paris, France}, \bibinfo{publisher}{{IEEE}}, \bibinfo{year}{2011}, pp.
  \bibinfo{pages}{192--199}. \DOIprefix\doi{10.1109/CIDM.2011.5949297}.
\bibitem[{Di~Ciccio et~al.(2015)Di~Ciccio, Marrella, and
  Russo}]{DiCiccio.etal/JoDS2015:KiPs}
\bibinfo{author}{C.~Di~Ciccio}, \bibinfo{author}{A.~Marrella},
  \bibinfo{author}{A.~Russo},
\newblock \bibinfo{title}{Knowledge-intensive {P}rocesses: Characteristics,
  requirements and analysis of contemporary approaches},
\newblock \bibinfo{journal}{J. Data Semantics} \bibinfo{volume}{4}
  (\bibinfo{year}{2015}) \bibinfo{pages}{29--57}.
\bibitem[{van~der Aalst et~al.(2009)van~der Aalst, Pesic, and
  Schonenberg}]{Aalst.etal/CSRD09:DeclarativeWFsBalancing}
\bibinfo{author}{W.~M.~P. van~der Aalst}, \bibinfo{author}{M.~Pesic},
  \bibinfo{author}{H.~Schonenberg},
\newblock \bibinfo{title}{Declarative workflows: {B}alancing between
  flexibility and support},
\newblock \bibinfo{journal}{Computer Science - R{\&}D} \bibinfo{volume}{23}
  (\bibinfo{year}{2009}) \bibinfo{pages}{99--113}.
\bibitem[{Rovani et~al.(2015)Rovani, Maggi, de~Leoni, and van~der
  Aalst}]{DBLP:journals/eswa/RovaniMLA15}
\bibinfo{author}{M.~Rovani}, \bibinfo{author}{F.~M. Maggi},
  \bibinfo{author}{M.~de~Leoni}, \bibinfo{author}{W.~M.~P. van~der Aalst},
\newblock \bibinfo{title}{Declarative process mining in healthcare},
\newblock \bibinfo{journal}{Expert Syst. Appl.} \bibinfo{volume}{42}
  (\bibinfo{year}{2015}) \bibinfo{pages}{9236--9251}.
\bibitem[{Munoz{-}Gama et~al.(2022)Munoz{-}Gama, Martin,
  Fern{\'{a}}ndez{-}Llatas, Johnson, Sep{\'{u}}lveda, Helm, Galvez{-}Yanjari,
  Rojas, Martinez{-}Millana, Aloini, Amantea, Andrews, Arias, Beerepoot,
  Benevento, Burattin, Capurro, Carmona, Comuzzi, Dalmas, de~la Fuente,
  Francescomarino, Ciccio, Gatta, Ghidini, Gonzalez{-}Lopez,
  Ib{\'{a}}{\~{n}}ez{-}S{\'{a}}nchez, Klasky, Kurniati, Lu, Mannhardt, Mans,
  Marcos, de~Carvalho, Pegoraro, Poon, Pufahl, Reijers, Remy, Rinderle{-}Ma,
  Sacchi, Seoane, Song, Stefanini, Sulis, ter Hofstede, Toussaint, Traver,
  Valero{-}Ramon, van~de Weerd, van~der Aalst, Vanwersch, Weske, Wynn, and
  Zerbato}]{process_mining_for_healthcare}
\bibinfo{author}{J.~Munoz{-}Gama}, \bibinfo{author}{N.~Martin},
  \bibinfo{author}{C.~Fern{\'{a}}ndez{-}Llatas}, \bibinfo{author}{O.~A.
  Johnson}, \bibinfo{author}{M.~Sep{\'{u}}lveda}, \bibinfo{author}{E.~Helm},
  \bibinfo{author}{V.~Galvez{-}Yanjari}, \bibinfo{author}{E.~Rojas},
  \bibinfo{author}{A.~Martinez{-}Millana}, \bibinfo{author}{D.~Aloini},
  \bibinfo{author}{I.~A. Amantea}, \bibinfo{author}{R.~Andrews},
  \bibinfo{author}{M.~Arias}, \bibinfo{author}{I.~Beerepoot},
  \bibinfo{author}{E.~Benevento}, \bibinfo{author}{A.~Burattin},
  \bibinfo{author}{D.~Capurro}, \bibinfo{author}{J.~Carmona},
  \bibinfo{author}{M.~Comuzzi}, \bibinfo{author}{B.~Dalmas},
  \bibinfo{author}{R.~de~la Fuente}, \bibinfo{author}{C.~D. Francescomarino},
  \bibinfo{author}{C.~D. Ciccio}, \bibinfo{author}{R.~Gatta},
  \bibinfo{author}{C.~Ghidini}, \bibinfo{author}{F.~Gonzalez{-}Lopez},
  \bibinfo{author}{G.~Ib{\'{a}}{\~{n}}ez{-}S{\'{a}}nchez},
  \bibinfo{author}{H.~B. Klasky}, \bibinfo{author}{A.~P. Kurniati},
  \bibinfo{author}{X.~Lu}, \bibinfo{author}{F.~Mannhardt},
  \bibinfo{author}{R.~Mans}, \bibinfo{author}{M.~Marcos},
  \bibinfo{author}{R.~M. de~Carvalho}, \bibinfo{author}{M.~Pegoraro},
  \bibinfo{author}{S.~K. Poon}, \bibinfo{author}{L.~Pufahl},
  \bibinfo{author}{H.~A. Reijers}, \bibinfo{author}{S.~Remy},
  \bibinfo{author}{S.~Rinderle{-}Ma}, \bibinfo{author}{L.~Sacchi},
  \bibinfo{author}{F.~Seoane}, \bibinfo{author}{M.~Song},
  \bibinfo{author}{A.~Stefanini}, \bibinfo{author}{E.~Sulis},
  \bibinfo{author}{A.~H.~M. ter Hofstede}, \bibinfo{author}{P.~J. Toussaint},
  \bibinfo{author}{V.~Traver}, \bibinfo{author}{Z.~Valero{-}Ramon},
  \bibinfo{author}{I.~van~de Weerd}, \bibinfo{author}{W.~M.~P. van~der Aalst},
  \bibinfo{author}{R.~J.~B. Vanwersch}, \bibinfo{author}{M.~Weske},
  \bibinfo{author}{M.~T. Wynn}, \bibinfo{author}{F.~Zerbato},
\newblock \bibinfo{title}{Process mining for healthcare: Characteristics and
  challenges},
\newblock \bibinfo{journal}{J. Biomed. Informatics} \bibinfo{volume}{127}
  (\bibinfo{year}{2022}) \bibinfo{pages}{103994}.
\bibitem[{Guzzo et~al.(2022)Guzzo, Rullo, and
  Vocaturo}]{DBLP:journals/widm/GuzzoRV22}
\bibinfo{author}{A.~Guzzo}, \bibinfo{author}{A.~Rullo},
  \bibinfo{author}{E.~Vocaturo},
\newblock \bibinfo{title}{Process mining applications in the healthcare domain:
  {A} comprehensive review},
\newblock \bibinfo{journal}{WIREs Data Mining Knowl. Discov.}
  \bibinfo{volume}{12} (\bibinfo{year}{2022}).
\bibitem[{Mannhardt(2016)}]{Mannhardt2016Sepsis}
\bibinfo{author}{F.~Mannhardt}, \bibinfo{title}{Sepsis cases - event log},
  \bibinfo{year}{2016}.
  \DOIprefix\doi{https://doi.org/10.4121/uuid:915d2bfb-7e84-49ad-a286-dc35f063a460}.
\bibitem[{Huang et~al.(2015)Huang, Dong, Ji, Yin, and
  Duan}]{anomaly_detection_clinical_pathways}
\bibinfo{author}{Z.~Huang}, \bibinfo{author}{W.~Dong}, \bibinfo{author}{L.~Ji},
  \bibinfo{author}{L.~Yin}, \bibinfo{author}{H.~Duan},
\newblock \bibinfo{title}{On local anomaly detection and analysis for clinical
  pathways},
\newblock \bibinfo{journal}{Artif. Intell. Medicine} \bibinfo{volume}{65}
  (\bibinfo{year}{2015}) \bibinfo{pages}{167--177}.
\bibitem[{Cecconi et~al.(2021)Cecconi, {De Giacomo}, {Di Ciccio}, Maggi, and
  Mendling}]{Cecconi2021Measuring}
\bibinfo{author}{A.~Cecconi}, \bibinfo{author}{G.~{De Giacomo}},
  \bibinfo{author}{C.~{Di Ciccio}}, \bibinfo{author}{F.~Maggi},
  \bibinfo{author}{J.~Mendling},
\newblock \bibinfo{title}{Measuring the interestingness of temporal logic
  behavioral specifications in process mining},
\newblock \bibinfo{journal}{Information Systems}  (\bibinfo{year}{2021})
  \bibinfo{pages}{101920}.
\bibitem[{Cecconi et~al.(2018)Cecconi, {Di Ciccio}, {De Giacomo}, and
  Mendling}]{Cecconi2018Interestingness}
\bibinfo{author}{A.~Cecconi}, \bibinfo{author}{C.~{Di Ciccio}},
  \bibinfo{author}{G.~{De Giacomo}}, \bibinfo{author}{J.~Mendling},
\newblock \bibinfo{title}{Interestingness of traces in declarative process
  mining: {T}he {J}anus {LTL}p{\_}f approach},
\newblock in: \bibinfo{booktitle}{{BPM}}, \bibinfo{year}{2018}, pp.
  \bibinfo{pages}{121--138}. \DOIprefix\doi{10.1007/978-3-319-98648-7_8}.
\bibitem[{Geng and Hamilton(2006)}]{Geng2006Interestingness}
\bibinfo{author}{L.~Geng}, \bibinfo{author}{H.~Hamilton},
\newblock \bibinfo{title}{Interestingness measures for data mining: {A}
  survey},
\newblock \bibinfo{journal}{{ACM} Comput. Surv.} \bibinfo{volume}{38}
  (\bibinfo{year}{2006}) \bibinfo{pages}{9}.
\bibitem[{Cecconi et~al.(2022)Cecconi, {Di Ciccio}, and
  Senderovich}]{Cecconi2022Measurement}
\bibinfo{author}{A.~Cecconi}, \bibinfo{author}{C.~{Di Ciccio}},
  \bibinfo{author}{A.~Senderovich},
\newblock \bibinfo{title}{Measurement of rule-based ltlf declarative process
  specifications},
\newblock in: \bibinfo{booktitle}{ICPM}, \bibinfo{publisher}{{IEEE}},
  \bibinfo{year}{2022}, pp. \bibinfo{pages}{96--103}.
  \DOIprefix\doi{10.1109/ICPM57379.2022.9980690}.
\bibitem[{{De Giacomo} and Vardi(2013)}]{DeGiacomo2013Linear}
\bibinfo{author}{G.~{De Giacomo}}, \bibinfo{author}{M.~Vardi},
\newblock \bibinfo{title}{Linear temporal logic and linear dynamic logic on
  finite traces},
\newblock in: \bibinfo{booktitle}{IJCAI}, \bibinfo{year}{2013}, pp.
  \bibinfo{pages}{854--860}.
\bibitem[{Pnueli(1977)}]{Pnueli/FOCS1977:LTL}
\bibinfo{author}{A.~Pnueli},
\newblock \bibinfo{title}{The temporal logic of programs},
\newblock in: \bibinfo{booktitle}{FOCS}, \bibinfo{year}{1977}, pp.
  \bibinfo{pages}{46--57}. \DOIprefix\doi{10.1109/SFCS.1977.32}.
\bibitem[{Lichtenstein et~al.(1985)Lichtenstein, Pnueli, and
  Zuck}]{DBLP:conf/lop/LichtensteinPZ85}
\bibinfo{author}{O.~Lichtenstein}, \bibinfo{author}{A.~Pnueli},
  \bibinfo{author}{L.~Zuck},
\newblock \bibinfo{title}{The glory of the past},
\newblock in: \bibinfo{booktitle}{Logic of Programs}, \bibinfo{year}{1985}, pp.
  \bibinfo{pages}{196--218}.
\bibitem[{Hodkinson and Reynolds(2005)}]{DBLP:conf/birthday/HodkinsonR05}
\bibinfo{author}{I.~Hodkinson}, \bibinfo{author}{M.~Reynolds},
\newblock \bibinfo{title}{Separation - past, present, and future},
\newblock in: \bibinfo{booktitle}{We Will Show Them! {(2)}},
  \bibinfo{year}{2005}, pp. \bibinfo{pages}{117--142}.
\bibitem[{{De Giacomo} et~al.(2014){De Giacomo}, {De Masellis}, and
  Montali}]{DBLP:conf/aaai/GiacomoMM14}
\bibinfo{author}{G.~{De Giacomo}}, \bibinfo{author}{R.~{De Masellis}},
  \bibinfo{author}{M.~Montali},
\newblock \bibinfo{title}{Reasoning on {LTL} on finite traces: Insensitivity to
  infiniteness},
\newblock in: \bibinfo{booktitle}{AAAI}, \bibinfo{year}{2014}, pp.
  \bibinfo{pages}{1027--1033}.
\bibitem[{Fionda and Greco(2016)}]{DBLP:conf/aaai/FiondaG16}
\bibinfo{author}{V.~Fionda}, \bibinfo{author}{G.~Greco},
\newblock \bibinfo{title}{The complexity of {LTL} on finite traces: Hard and
  easy fragments},
\newblock in: \bibinfo{booktitle}{{AAAI}}, \bibinfo{year}{2016}, pp.
  \bibinfo{pages}{971--977}.
\bibitem[{Kupferman and Vardi(2003)}]{DBLP:journals/sttt/KupfermanV03}
\bibinfo{author}{O.~Kupferman}, \bibinfo{author}{M.~Vardi},
\newblock \bibinfo{title}{Vacuity detection in temporal model checking},
\newblock \bibinfo{journal}{Int. J. Softw. Tools Technol. Transf.}
  \bibinfo{volume}{4} (\bibinfo{year}{2003}) \bibinfo{pages}{224--233}.
\bibitem[{Maggi et~al.(2011)Maggi, Mooij, and van~der
  Aalst}]{Maggi.etal/CIDM2011:UserGuidedDiscovery}
\bibinfo{author}{F.~M. Maggi}, \bibinfo{author}{A.~J. Mooij},
  \bibinfo{author}{W.~M.~P. van~der Aalst},
\newblock \bibinfo{title}{User-guided discovery of declarative process models},
\newblock in: \bibinfo{booktitle}{CIDM}, \bibinfo{year}{2011}, pp.
  \bibinfo{pages}{192--199}. \DOIprefix\doi{10.1109/CIDM.2011.5949297}.
\bibitem[{{De Giacomo} et~al.(2016){De Giacomo}, Maggi, Marrella, and
  Sardi{\~{n}}a}]{DBLP:conf/aips/GiacomoMMS16}
\bibinfo{author}{G.~{De Giacomo}}, \bibinfo{author}{F.~Maggi},
  \bibinfo{author}{A.~Marrella}, \bibinfo{author}{S.~Sardi{\~{n}}a},
\newblock \bibinfo{title}{Computing trace alignment against declarative process
  models through planning},
\newblock in: \bibinfo{booktitle}{{ICAPS}}, \bibinfo{year}{2016}, pp.
  \bibinfo{pages}{367--375}.
\bibitem[{Adamo(2001)}]{Adamo/2001:Dataminingassociationrules}
\bibinfo{author}{J.~Adamo}, \bibinfo{title}{Data mining for association rules
  and sequential patterns - sequential and parallel algorithms},
  \bibinfo{publisher}{Springer New York}, \bibinfo{year}{2001}.
  \DOIprefix\doi{10.1007/978-1-4613-0085-4}.
\bibitem[{Agrawal and Srikant(1994)}]{Agrawal1994Fast}
\bibinfo{author}{R.~Agrawal}, \bibinfo{author}{R.~Srikant},
\newblock \bibinfo{title}{Fast algorithms for mining association rules in large
  databases},
\newblock in: \bibinfo{editor}{J.~B. Bocca}, \bibinfo{editor}{M.~Jarke},
  \bibinfo{editor}{C.~Zaniolo} (Eds.), \bibinfo{booktitle}{VLDB'94, Proceedings
  of 20th International Conference on Very Large Data Bases, September 12-15,
  1994, Santiago de Chile, Chile}, \bibinfo{publisher}{Morgan Kaufmann},
  \bibinfo{year}{1994}, pp. \bibinfo{pages}{487--499}. \URLprefix
  \url{http://www.vldb.org/conf/1994/P487.PDF}.
\bibitem[{Lenca et~al.(2008)Lenca, Meyer, Vaillant, and
  Lallich}]{Lenca2008OnSelecting}
\bibinfo{author}{P.~Lenca}, \bibinfo{author}{P.~Meyer},
  \bibinfo{author}{B.~Vaillant}, \bibinfo{author}{S.~Lallich},
\newblock \bibinfo{title}{On selecting interestingness measures for association
  rules: User oriented description and multiple criteria decision aid},
\newblock \bibinfo{journal}{Eur. J. Oper. Res.} \bibinfo{volume}{184}
  (\bibinfo{year}{2008}) \bibinfo{pages}{610--626}.
\bibitem[{Tan et~al.(2004)Tan, Kumar, and Srivastava}]{Tan2004Selecting}
\bibinfo{author}{P.-N. Tan}, \bibinfo{author}{V.~Kumar},
  \bibinfo{author}{J.~Srivastava},
\newblock \bibinfo{title}{Selecting the right objective measure for association
  analysis},
\newblock \bibinfo{journal}{Information Systems} \bibinfo{volume}{29}
  (\bibinfo{year}{2004}) \bibinfo{pages}{293 -- 313}. \bibinfo{note}{Knowledge
  Discovery and Data Mining (KDD 2002)}.
\bibitem[{Bickel and Doksum(2015)}]{bickel2015mathematical}
\bibinfo{author}{P.~Bickel}, \bibinfo{author}{K.~Doksum},
  \bibinfo{title}{Mathematical statistics: basic ideas and selected topics,
  volumes I-II package}, \bibinfo{publisher}{CRC Press}, \bibinfo{year}{2015}.
\bibitem[{Dai et~al.(2013)Dai, Ding, and Wahba}]{dai2013multivariate}
\bibinfo{author}{B.~Dai}, \bibinfo{author}{S.~Ding},
  \bibinfo{author}{G.~Wahba},
\newblock \bibinfo{title}{Multivariate bernoulli distribution},
\newblock \bibinfo{journal}{Bernoulli} \bibinfo{volume}{19}
  (\bibinfo{year}{2013}) \bibinfo{pages}{1465--1483}.
\bibitem[{Ip and Xue(2015)}]{ip2015multivariate}
\bibinfo{author}{S.~Ip}, \bibinfo{author}{J.~Xue}, \bibinfo{title}{A
  multivariate regression view of multi-label classification},
  \bibinfo{type}{Technical Report}, University College London,
  \bibinfo{year}{2015}.
\bibitem[{{Di Ciccio} et~al.(2015){Di Ciccio}, Bernardi, Cimitile, and
  Maggi}]{DBLP:conf/caise/CiccioBCM15}
\bibinfo{author}{C.~{Di Ciccio}}, \bibinfo{author}{M.~L. Bernardi},
  \bibinfo{author}{M.~Cimitile}, \bibinfo{author}{F.~M. Maggi},
\newblock \bibinfo{title}{Generating event logs through the simulation of
  {Declare} models},
\newblock in: \bibinfo{booktitle}{EOMAS@CAiSE}, \bibinfo{year}{2015}, pp.
  \bibinfo{pages}{20--36}. \DOIprefix\doi{10.1007/978-3-319-24626-0_2}.
\bibitem[{Rao(1957)}]{rao1957maximum}
\bibinfo{author}{C.~R. Rao},
\newblock \bibinfo{title}{Maximum likelihood estimation for the multinomial
  distribution},
\newblock \bibinfo{journal}{Sankhy{\=a}: The Indian Journal of Statistics
  (1933-1960)} \bibinfo{volume}{18} (\bibinfo{year}{1957})
  \bibinfo{pages}{139--148}.
\bibitem[{Casella and Berger(2021)}]{casella2021statistical}
\bibinfo{author}{G.~Casella}, \bibinfo{author}{R.~L. Berger},
  \bibinfo{title}{Statistical inference}, \bibinfo{publisher}{Cengage
  Learning}, \bibinfo{year}{2021}.
\bibitem[{Li et~al.(2014)Li, Zhang, Pu, Vardi, and
  He}]{DBLP:conf/ecai/Li0PVH14}
\bibinfo{author}{J.~Li}, \bibinfo{author}{L.~Zhang}, \bibinfo{author}{G.~Pu},
  \bibinfo{author}{M.~Y. Vardi}, \bibinfo{author}{J.~He},
\newblock \bibinfo{title}{Ltlf satisfiability checking},
\newblock in: \bibinfo{booktitle}{ECAI}, volume \bibinfo{volume}{263} of
  \textit{\bibinfo{series}{Frontiers in Artificial Intelligence and
  Applications}}, \bibinfo{publisher}{{IOS} Press}, \bibinfo{year}{2014}, pp.
  \bibinfo{pages}{513--518}. \DOIprefix\doi{10.3233/978-1-61499-419-0-513}.
\bibitem[{Dwyer et~al.(1999)Dwyer, Avrunin, and Corbett}]{Dwyer1999Patterns}
\bibinfo{author}{M.~Dwyer}, \bibinfo{author}{G.~Avrunin},
  \bibinfo{author}{J.~Corbett},
\newblock \bibinfo{title}{Patterns in property specifications for finite-state
  verification},
\newblock in: \bibinfo{booktitle}{{ICSE}}, \bibinfo{year}{1999}, pp.
  \bibinfo{pages}{411--420}.
\bibitem[{van Dongen(2012)}]{Dongen/BPIC2012:log}
\bibinfo{author}{B.~van Dongen}, \bibinfo{title}{{BPI} {C}hallenge 2012},
  \bibinfo{howpublished}{4TU.ResearchData}, \bibinfo{year}{2012}.
  \DOIprefix\doi{10.4121/uuid:3926db30-f712-4394-aebc-75976070e91f}.
\bibitem[{Steeman(2014)}]{Steeman/BPIC2013:log}
\bibinfo{author}{W.~Steeman}, \bibinfo{title}{{BPI} {C}hallenge 2013},
  \bibinfo{year}{2014}.
  \DOIprefix\doi{10.4121/uuid:a7ce5c55-03a7-4583-b855-98b86e1a2b07}.
\bibitem[{van Dongen(2014)}]{Dongen/BPIC2014:log}
\bibinfo{author}{B.~van Dongen}, \bibinfo{title}{{BPI} {C}hallenge 2014},
  \bibinfo{howpublished}{4TU.ResearchData}, \bibinfo{year}{2014}.
  \DOIprefix\doi{10.4121/uuid:c3e5d162-0cfd-4bb0-bd82-af5268819c35}.
\bibitem[{van Dongen(2015)}]{Dongen2015BPIC15}
\bibinfo{author}{B.~van Dongen}, \bibinfo{title}{{BPI} {C}hallenge 2015},
  \bibinfo{howpublished}{4TU.ResearchData}, \bibinfo{year}{2015}.
  \DOIprefix\doi{10.4121/uuid:31a308ef-c844-48da-948c-305d167a0ec1}.
\bibitem[{de~Leoni and
  Mannhardt(2015)}]{LeoniMannhardt/RoadTrafficFine2015:log}
\bibinfo{author}{M.~de~Leoni}, \bibinfo{author}{F.~Mannhardt},
  \bibinfo{title}{Road traffic fine management process}, \bibinfo{year}{2015}.
  \DOIprefix\doi{10.4121/uuid:270fd440-1057-4fb9-89a9-b699b47990f5}.
\bibitem[{Polato(2017)}]{Polato2017helpdesk}
\bibinfo{author}{M.~Polato}, \bibinfo{title}{Dataset belonging to the help desk
  log of an italian company}, \bibinfo{year}{2017}.
  \DOIprefix\doi{https://doi.org/10.4121/uuid:0c60edf1-6f83-4e75-9367-4c63b3e9d5bb}.
\bibitem[{{Di Ciccio} and Mecella(2015)}]{DiCiccio2015OnTheDiscovery}
\bibinfo{author}{C.~{Di Ciccio}}, \bibinfo{author}{M.~Mecella},
\newblock \bibinfo{title}{On the discovery of declarative control flows for
  artful processes},
\newblock \bibinfo{journal}{{ACM} Trans. Manag. Inf. Syst.} \bibinfo{volume}{5}
  (\bibinfo{year}{2015}) \bibinfo{pages}{24:1--24:37}.
\bibitem[{Yang et~al.(2006)Yang, Evans, Bhardwaj, Bhat, and
  Das}]{Yang2006Perracotta}
\bibinfo{author}{J.~Yang}, \bibinfo{author}{D.~Evans},
  \bibinfo{author}{D.~Bhardwaj}, \bibinfo{author}{T.~Bhat},
  \bibinfo{author}{M.~Das},
\newblock \bibinfo{title}{Perracotta: mining temporal {API} rules from
  imperfect traces},
\newblock in: \bibinfo{booktitle}{{ICSE}}, \bibinfo{year}{2006}, pp.
  \bibinfo{pages}{282--291}.
\bibitem[{Morin(2013)}]{morin2013methode}
\bibinfo{author}{E.~Morin}, \bibinfo{title}{La m{\'e}thode: la nature de la
  nature}, \bibinfo{publisher}{M{\'e}dia Diffusion}, \bibinfo{year}{2013}.
\bibitem[{H{\"{a}}m{\"{a}}l{\"{a}}inen and
  Webb(2019)}]{Haemaelaeinen2019ATutorial}
\bibinfo{author}{W.~H{\"{a}}m{\"{a}}l{\"{a}}inen}, \bibinfo{author}{G.~Webb},
\newblock \bibinfo{title}{A tutorial on statistically sound pattern discovery},
\newblock \bibinfo{journal}{Data Min. Knowl. Discov.} \bibinfo{volume}{33}
  (\bibinfo{year}{2019}) \bibinfo{pages}{325--377}.
\bibitem[{Maaradji et~al.(2017)Maaradji, Dumas, Rosa, and
  Ostovar}]{DBLP:journals/tkde/MaaradjiDRO17}
\bibinfo{author}{A.~Maaradji}, \bibinfo{author}{M.~Dumas},
  \bibinfo{author}{M.~L. Rosa}, \bibinfo{author}{A.~Ostovar},
\newblock \bibinfo{title}{Detecting sudden and gradual drifts in business
  processes from execution traces},
\newblock \bibinfo{journal}{{IEEE} Trans. Knowl. Data Eng.}
  \bibinfo{volume}{29} (\bibinfo{year}{2017}) \bibinfo{pages}{2140--2154}.
\bibitem[{Yeshchenko et~al.(2020)Yeshchenko, Mendling, {Di Ciccio}, and
  Polyvyanyy}]{DBLP:conf/icpm/YeshchenkoMCP20}
\bibinfo{author}{A.~Yeshchenko}, \bibinfo{author}{J.~Mendling},
  \bibinfo{author}{C.~{Di Ciccio}}, \bibinfo{author}{A.~Polyvyanyy},
\newblock \bibinfo{title}{{VDD:} {A} visual drift detection system for process
  mining},
\newblock in: \bibinfo{booktitle}{{ICPM} Doctoral Consortium / Tools},
  \bibinfo{year}{2020}, pp. \bibinfo{pages}{31--34}. \URLprefix
  \url{https://ceur-ws.org/Vol-2703/paperTD4.pdf}.
\bibitem[{Yeshchenko et~al.(2019)Yeshchenko, {Di Ciccio}, Mendling, and
  Polyvyanyy}]{DBLP:conf/er/YeshchenkoCMP19a}
\bibinfo{author}{A.~Yeshchenko}, \bibinfo{author}{C.~{Di Ciccio}},
  \bibinfo{author}{J.~Mendling}, \bibinfo{author}{A.~Polyvyanyy},
\newblock \bibinfo{title}{Comprehensive process drift detection with visual
  analytics},
\newblock in: \bibinfo{booktitle}{ER}, \bibinfo{year}{2019}, pp.
  \bibinfo{pages}{119--135}. \DOIprefix\doi{10.1007/978-3-030-33223-5_11}.
\bibitem[{Yeshchenko et~al.(2022)Yeshchenko, {Di Ciccio}, Mendling, and
  Polyvyanyy}]{DBLP:journals/tvcg/YeshchenkoCMP22}
\bibinfo{author}{A.~Yeshchenko}, \bibinfo{author}{C.~{Di Ciccio}},
  \bibinfo{author}{J.~Mendling}, \bibinfo{author}{A.~Polyvyanyy},
\newblock \bibinfo{title}{Visual drift detection for event sequence data of
  business processes},
\newblock \bibinfo{journal}{{IEEE} Trans. Vis. Comput. Graph.}
  \bibinfo{volume}{28} (\bibinfo{year}{2022}) \bibinfo{pages}{3050--3068}.
\bibitem[{Lemieux et~al.(2015)Lemieux, Park, and
  Beschastnikh}]{Lemieux2015General}
\bibinfo{author}{C.~Lemieux}, \bibinfo{author}{D.~Park},
  \bibinfo{author}{I.~Beschastnikh},
\newblock \bibinfo{title}{General {LTL} specification mining {(T)}},
\newblock in: \bibinfo{booktitle}{{ASE}}, \bibinfo{year}{2015}, pp.
  \bibinfo{pages}{81--92}.
\bibitem[{Le and Lo(2015)}]{Le2015Beyond}
\bibinfo{author}{T.~B. Le}, \bibinfo{author}{D.~Lo},
\newblock \bibinfo{title}{Beyond support and confidence: Exploring
  interestingness measures for rule-based specification mining},
\newblock in: \bibinfo{booktitle}{{SANER}}, \bibinfo{year}{2015}, pp.
  \bibinfo{pages}{331--340}. \DOIprefix\doi{10.1109/SANER.2015.7081843}.
\bibitem[{Alman et~al.(2021)Alman, Di~Ciccio, Maggi, Montali, and van~der
  Aa}]{Alman.etal/BPM2021:RuMDeclarativeProcessMiningDistilled}
\bibinfo{author}{A.~Alman}, \bibinfo{author}{C.~Di~Ciccio},
  \bibinfo{author}{F.~M. Maggi}, \bibinfo{author}{M.~Montali},
  \bibinfo{author}{H.~van~der Aa},
\newblock \bibinfo{title}{Rum: Declarative process mining, distilled},
\newblock in: \bibinfo{booktitle}{BPM}, \bibinfo{year}{2021}, pp.
  \bibinfo{pages}{23--29}. \DOIprefix\doi{10.1007/978-3-030-85469-0_3}.
\bibitem[{Back et~al.(2022)Back, Slaats, Hildebrandt, and
  Marquard}]{DBLP:journals/sttt/BackSHM22}
\bibinfo{author}{C.~O. Back}, \bibinfo{author}{T.~Slaats},
  \bibinfo{author}{T.~T. Hildebrandt}, \bibinfo{author}{M.~Marquard},
\newblock \bibinfo{title}{{DisCoveR}: accurate and efficient discovery of
  declarative process models},
\newblock \bibinfo{journal}{Int. J. Softw. Tools Technol. Transf.}
  \bibinfo{volume}{24} (\bibinfo{year}{2022}) \bibinfo{pages}{563--587}.
\bibitem[{Sch{\"{u}}tzenmeier et~al.(2023)Sch{\"{u}}tzenmeier, Corea, Delfmann,
  and Jablonski}]{DBLP:conf/bpmds/SchutzenmeierCD23}
\bibinfo{author}{N.~Sch{\"{u}}tzenmeier}, \bibinfo{author}{C.~Corea},
  \bibinfo{author}{P.~Delfmann}, \bibinfo{author}{S.~Jablonski},
\newblock \bibinfo{title}{Efficient computation of behavioral changes in
  declarative process models},
\newblock in: \bibinfo{booktitle}{BPMDS/EMMSAD@CAiSE}, volume
  \bibinfo{volume}{479} of \textit{\bibinfo{series}{Lecture Notes in Business
  Information Processing}}, \bibinfo{publisher}{Springer},
  \bibinfo{year}{2023}, pp. \bibinfo{pages}{136--151}.
  \DOIprefix\doi{10.1007/978-3-031-34241-7_10}.
\bibitem[{Almagor et~al.(2016)Almagor, Boker, and
  Kupferman}]{Almagor2016Formally}
\bibinfo{author}{S.~Almagor}, \bibinfo{author}{U.~Boker},
  \bibinfo{author}{O.~Kupferman},
\newblock \bibinfo{title}{Formally reasoning about quality},
\newblock \bibinfo{journal}{J. {ACM}} \bibinfo{volume}{63}
  (\bibinfo{year}{2016}) \bibinfo{pages}{24:1--24:56}.
\bibitem[{Lahijanian et~al.(2015)Lahijanian, Almagor, Fried, Kavraki, and
  Vardi}]{Lahijanian2015ThisTime}
\bibinfo{author}{M.~Lahijanian}, \bibinfo{author}{S.~Almagor},
  \bibinfo{author}{D.~Fried}, \bibinfo{author}{L.~Kavraki},
  \bibinfo{author}{M.~Vardi},
\newblock \bibinfo{title}{This time the robot settles for a cost: {A}
  quantitative approach to temporal logic planning with partial satisfaction},
\newblock in: \bibinfo{booktitle}{AAAI}, \bibinfo{year}{2015}, pp.
  \bibinfo{pages}{3664--3671}.
\bibitem[{Piribauer et~al.(2021)Piribauer, Baier, Bertrand, and
  Sankur}]{Piribauer2021Quantified}
\bibinfo{author}{J.~Piribauer}, \bibinfo{author}{C.~Baier},
  \bibinfo{author}{N.~Bertrand}, \bibinfo{author}{O.~Sankur},
\newblock \bibinfo{title}{Quantified linear temporal logic over probabilistic
  systems with an application to vacuity checking},
\newblock in: \bibinfo{booktitle}{CONCUR}, \bibinfo{year}{2021}, pp.
  \bibinfo{pages}{7:1--7:18}. \DOIprefix\doi{10.4230/LIPIcs.CONCUR.2021.7}.
\bibitem[{Legay et~al.(2019)Legay, Lukina, Traonouez, Yang, Smolka, and
  Grosu}]{Legay2019Statistical}
\bibinfo{author}{A.~Legay}, \bibinfo{author}{A.~Lukina},
  \bibinfo{author}{L.~Traonouez}, \bibinfo{author}{J.~Yang},
  \bibinfo{author}{S.~Smolka}, \bibinfo{author}{R.~Grosu},
\newblock \bibinfo{title}{Statistical model checking},
\newblock in: \bibinfo{booktitle}{Computing and Software Science - State of the
  Art and Perspectives}, \bibinfo{year}{2019}, pp. \bibinfo{pages}{478--504}.
  \DOIprefix\doi{10.1007/978-3-319-91908-9_23}.
\bibitem[{Maggi et~al.(2020)Maggi, Montali, and
  Pe{\~{n}}aloza}]{Maggi2020Temporal}
\bibinfo{author}{F.~Maggi}, \bibinfo{author}{M.~Montali},
  \bibinfo{author}{R.~Pe{\~{n}}aloza},
\newblock \bibinfo{title}{Temporal logics over finite traces with uncertainty},
\newblock in: \bibinfo{booktitle}{AAAI}, \bibinfo{year}{2020}, pp.
  \bibinfo{pages}{10218--10225}.
\bibitem[{Buijs et~al.(2014)Buijs, van Dongen, and van~der
  Aalst}]{buijs2014quality}
\bibinfo{author}{J.~C. Buijs}, \bibinfo{author}{B.~F. van Dongen},
  \bibinfo{author}{W.~M. van~der Aalst},
\newblock \bibinfo{title}{Quality dimensions in process discovery: The
  importance of fitness, precision, generalization and simplicity},
\newblock \bibinfo{journal}{Int. J. Cooperative Inf. Syst.}
  \bibinfo{volume}{23} (\bibinfo{year}{2014}) \bibinfo{pages}{1440001}.
\bibitem[{{De Leoni} et~al.(2015){De Leoni}, {Maggi}, and van~der
  Aalst}]{DeLeoni2015AnAlignment}
\bibinfo{author}{M.~{De Leoni}}, \bibinfo{author}{F.~{Maggi}},
  \bibinfo{author}{W.~van~der Aalst},
\newblock \bibinfo{title}{An alignment-based framework to check the conformance
  of declarative process models and to preprocess event-log data},
\newblock \bibinfo{journal}{Inf. Syst.} \bibinfo{volume}{47}
  (\bibinfo{year}{2015}) \bibinfo{pages}{258--277}.
\bibitem[{Polyvyanyy et~al.(2020)Polyvyanyy, Solti, Weidlich, {Di Ciccio}, and
  Mendling}]{Polyvyanyy2020Monotone}
\bibinfo{author}{A.~Polyvyanyy}, \bibinfo{author}{A.~Solti},
  \bibinfo{author}{M.~Weidlich}, \bibinfo{author}{C.~{Di Ciccio}},
  \bibinfo{author}{J.~Mendling},
\newblock \bibinfo{title}{Monotone precision and recall measures for comparing
  executions and specifications of dynamic systems},
\newblock \bibinfo{journal}{{ACM} Trans. Softw. Eng. Methodol.}
  \bibinfo{volume}{29} (\bibinfo{year}{2020}) \bibinfo{pages}{17:1--17:41}.
\bibitem[{Burattin et~al.(2019)Burattin, Guizzardi, Maggi, and
  Montali}]{Burattin2019Fifty}
\bibinfo{author}{A.~Burattin}, \bibinfo{author}{G.~Guizzardi},
  \bibinfo{author}{F.~M. Maggi}, \bibinfo{author}{M.~Montali},
\newblock \bibinfo{title}{Fifty shades of green: How informative is a compliant
  process trace?},
\newblock in: \bibinfo{booktitle}{CAiSE}, \bibinfo{year}{2019}, pp.
  \bibinfo{pages}{611--626}.
\bibitem[{Sch{\"{o}}nig et~al.(2016)Sch{\"{o}}nig, {Di Ciccio}, Maggi, and
  Mendling}]{DBLP:conf/icsoc/SchonigCMM16}
\bibinfo{author}{S.~Sch{\"{o}}nig}, \bibinfo{author}{C.~{Di Ciccio}},
  \bibinfo{author}{F.~M. Maggi}, \bibinfo{author}{J.~Mendling},
\newblock \bibinfo{title}{Discovery of multi-perspective declarative process
  models},
\newblock in: \bibinfo{booktitle}{{ICSOC}}, \bibinfo{year}{2016}, pp.
  \bibinfo{pages}{87--103}. \DOIprefix\doi{10.1007/978-3-319-46295-0_6}.
\bibitem[{Syring et~al.(2019)Syring, Tax, and van~der
  Aalst}]{Syring2019Evaluating}
\bibinfo{author}{A.~F. Syring}, \bibinfo{author}{N.~Tax},
  \bibinfo{author}{W.~M.~P. van~der Aalst},
\newblock \bibinfo{title}{Evaluating conformance measures in process mining
  using conformance propositions},
\newblock \bibinfo{journal}{Trans. Petri Nets Other Model. Concurr.}
  \bibinfo{volume}{14} (\bibinfo{year}{2019}) \bibinfo{pages}{192--221}.
\bibitem[{{De Weerdt}(2019)}]{Weerdt2019Trace}
\bibinfo{author}{J.~{De Weerdt}},
\newblock \bibinfo{title}{Trace clustering},
\newblock in: \bibinfo{booktitle}{Encyclopedia of Big Data Technologies},
  \bibinfo{year}{2019}. \DOIprefix\doi{10.1007/978-3-319-63962-8_91-1}.
\bibitem[{Weerdt et~al.(2012)Weerdt, Caron, Vanthienen, and
  Baesens}]{DBLP:conf/pakdd/WeerdtCVB12}
\bibinfo{author}{J.~D. Weerdt}, \bibinfo{author}{F.~Caron},
  \bibinfo{author}{J.~Vanthienen}, \bibinfo{author}{B.~Baesens},
\newblock \bibinfo{title}{Getting a grasp on clinical pathway data: An approach
  based on process mining},
\newblock in: \bibinfo{booktitle}{Emerging Trends in Knowledge Discovery and
  Data Mining - {PAKDD} 2012 International Workshops: DMHM, GeoDoc, 3Clust, and
  DSDM, Kuala Lumpur, Malaysia, May 29 - June 1, 2012, Revised Selected
  Papers}, volume \bibinfo{volume}{7769} of \textit{\bibinfo{series}{Lecture
  Notes in Computer Science}}, \bibinfo{publisher}{Springer},
  \bibinfo{year}{2012}, pp. \bibinfo{pages}{22--35}.
  \DOIprefix\doi{10.1007/978-3-642-36778-6_3}.
\bibitem[{Amantea et~al.(2022)Amantea, Robaldo, Sulis, Governatori, and
  Boella}]{DBLP:journals/flap/AmanteaRSGB22}
\bibinfo{author}{I.~A. Amantea}, \bibinfo{author}{L.~Robaldo},
  \bibinfo{author}{E.~Sulis}, \bibinfo{author}{G.~Governatori},
  \bibinfo{author}{G.~Boella},
\newblock \bibinfo{title}{Business process modelling in healthcare and
  compliance management: {A} logical framework},
\newblock \bibinfo{journal}{{FLAP}} \bibinfo{volume}{9} (\bibinfo{year}{2022})
  \bibinfo{pages}{1131--1154}.
\bibitem[{Feller(1957)}]{feller1957introduction}
\bibinfo{author}{W.~Feller}, \bibinfo{title}{An introduction to probability
  theory and its applications}, \bibinfo{publisher}{Wiley},
  \bibinfo{year}{1957}.

\end{thebibliography}

\appendix

\section{Basic Probability Notation and Results}
\label{appendix:probability}
The theory of probability is a fundamental mathematical concept that underpins a wide range of fields, including statistics, economics, and physics~\cite{feller1957introduction}. The key concepts in probability theory include the sample space $\Omega$, events $A, B, C, \dots$, probability functions $P(\cdot)$, and conditional probabilities $P(A \mid B)$.
The sample space $\Omega$ represents the set of all possible outcomes of an experiment. For example, when flipping a coin, the sample space is $\{\textrm{H}, \textrm{T}\}$, where $\textrm{H}$ represents heads and $\textrm{T}$ represents tails.

Events $A, B, C, \dots$ in probability theory are subsets of the sample space ($A \cup B \cup C \ldots \subseteq \Omega $) representing specific outcomes of interest.
For example, $A$ could represent the event of getting heads when flipping a coin. Probability functions $P(\cdot)$ assign probabilities to events: $P(A)$, e.g., represents the probability of event $A$ occurring, and it is a number between \num{0} and \num{1}, inclusive. $P(A \cap B)$ represents the probability of both $A$ and $B$ occurring, also known as the intersection of $A$ and $B$. $P(A \cup B)$ represents the probability of either $A$ or $B$ occurring, also known as the union of $A$ and $B$.
Notice that the notion of event in probability slightly differs from that of event in the context of process and specification mining (see \cref{def:eventlog}). $P(A)$ represents the probability that a record in the trace satisfies $A$. In this setting, then, $A$ is a statement that can hold true, or not, in the elements that a trace consists of.

\subsection{Conditional Probability}
Conditional probability is the probability of an event $A$ given that another event $B$ has occurred. We write it as $P(A \mid B)$ and define it using Kolmogorov's analysis:
\begin{equation}
P(A \mid B) = \frac{P(A \cap B)}{P(B)}.
\end{equation} 
This formula allows us to update our beliefs about the occurrence of an event based on new information. The following properties of conditional probability hold. Firstly, $P(A \mid B) \geq 0$ for all events $A$ and $B$. This means that the conditional probability of $A$ given $B$ is always non-negative.

Secondly, $P(\Omega \mid B) = 1$ for any event $B$. This means that the probability of the entire sample space given that $B$ has occurred is equal to $1$. Thirdly, if $A$ and $B$ are mutually exclusive events (i.e., $A \cap B = \emptyset$), then $P(A \mid B) = 0$. This is because if $A$ and $B$ cannot occur simultaneously, then the occurrence of $B$ rules out the possibility of $A$ occurring.

The law of total probability (LTP) can be used to compute conditional probabilities. If $A_1, A_2, \dots, A_n$ are mutually exclusive events that partition the sample space, then 
\begin{equation}
	P(B) = \sum_{i=1}^{n} P(B \mid A_i) P(A_i).
\end{equation} 
This formula can be rearranged to compute conditional probabilities as follows (for any $1 \leq j \leq n$):
\begin{equation}
	P(A_j \mid B) = \frac{P(B \mid A_j) P(A_j)}{\sum_{i=1}^{n} P(B \mid A_i) P(A_i)}.
\end{equation} 

\subsection{Discrete Random Variables}
A discrete random variable is a random variable that takes on a countable number of values, such as the number of heads obtained in a series of coin flips, or the number of defects in a batch of products. We denote a discrete random variable as $X$ and its possible values as $x_1, x_2, \dots, x_n$. The probability distribution of $X$ specifies the probabilities $P(X = x_k)$ for each possible value $x_k$ with $1 \leq k \leq n$. For random variables, the conditional probability is defined as
\begin{equation}
	P(X=x|Y=y) = \frac{P(X=x \cap Y=y)}{P(Y=y)}
\end{equation} 
where $X$ and $Y$ are discrete random variables, and $x$ and $y$ are possible values that $X$ and $Y$ can take, respectively. $P(X=x \cap Y=y)$ is the probability that $X=x$ and $Y=y$ occur simultaneously, and $P(Y=y)$ is the probability that $Y=y$ occurs.

The law of total probability for discrete random variables can be written as follows. Let $X$ be a discrete random variable and let $Y_1, Y_2, \dots, Y_m$ be a partition of the sample space $\Omega$, i.e., $\Omega = Y_1 \cup Y_2 \cup \dots \cup Y_m$ and $Y_i \cap Y_j = \emptyset$ for $1\leq i\leq m$, $1\leq j \leq m$, and $i \neq j$. Then, for any event $A$, the law of total probability states:
\begin{equation} 
 P(A) = \sum_{i=1}^{m} P(Y_i) P(A|Y_i)
\end{equation} 
where the sum is taken over all possible values of $i$.
 
\section{Maximum Likelihood Estimation}
\label{appendix:stats}
Maximum Likelihood Estimation (MLE) is a popular statistical method for estimating the parameters of a statistical model. MLE is widely used in many fields, including econometrics, biostatistics, and engineering, due to its desirable properties (see~\cite[Chapter 7]{casella2021statistical}). One of the key strengths of MLE is its asymptotic efficiency, which means that as the sample size increases, the MLE estimates converge to the true parameter values at the fastest possible rate among all consistent estimators. This means that MLE produces the most precise estimates possible, given the available data.

Furthermore, MLE is a consistent estimator, meaning that as the sample size increases, the MLE estimates converge to the true parameter values. As a result, the accuracy of the MLE estimates increases with more data becoming available. Under certain conditions, MLE is also an unbiased estimator, meaning that the expected value of the estimates is equal to the true parameter value. This requires that the model is correctly specified and the sample size is sufficiently large.

Finally, MLE has solid theoretical foundations based on sound statistical theory and has been extensively studied in the literature, providing a large body of knowledge for understanding its properties and effective use. Additionally, the computation of MLE is widely supported by several existing software packages encoded in R and Python, among others. 
 
\section{Linear Temporal Logic on Finite Traces}
\label{appendix:ltlf}
\acrfull{ltlf}~\cite{DeGiacomo2013Linear} expresses propositions over linear discrete-time structures of finite length -- namely, traces.
{\ltlf} has the same syntax of \gls{ltl}~\cite{Pnueli/FOCS1977:LTL}, but is interpreted on finite traces.
In this paper, in particular, we consider the \gls{ltl} dialect including past modalities~\cite{DBLP:conf/lop/LichtensteinPZ85}.

\Gls{ltlf} formulae are built from an alphabet $\LogAlph \supseteq \{ \lettera \}$ of propositional symbols, auxiliary symbols `$($' and `$)$', propositional constants $\RNFtrue$ and $\RNFfalse$, the logical connectives $\lnot$ (\emph{negation}, unary) and $\land$ (\emph{conjunction}, binary),
the unary temporal operators {\RNFnext} (\emph{next}) and {\RNFprev} (\emph{yesterday}), and the binary temporal operators \RNFuntil (\emph{until}) and \RNFsince (\emph{since}).
Typically, their syntax is enriched with additional binary logical connectives, namely
$ \lor $ (\emph{disjunction}) and $ \to $ (\emph{implication}),
and unary temporal operators 
$\RNFfut$ (\emph{eventually}), $\RNFpast$  (\emph{once}), $\RNFglobFut$ (\emph{always}), and $\RNFglobPast$ (\emph{historically}).
Finally, we introduce two additional temporal constants,
$\RNFstart$ and $\RNFend$, intuitively denoting the initial and final event of a trace.

Although they do not add expressive power, they contribute to more succinct formulations.
The following grammar provides the syntax rules to build a well-formed {\ltlf} formula:
\begin{flalign*}
	\LTLfpFormula \Coloneqq \: &
	\RNFtrue \mid \RNFfalse \mid \RNFstart \mid \RNFend \mid \lettera \mid \\ &
	(\lnot\LTLfpFormula) \mid (\LTLfpFormula_1 \land \LTLfpFormula_2)  \mid (\LTLfpFormula_1 \lor \LTLfpFormula_2)  \mid (\LTLfpFormula_1 \to \LTLfpFormula_2) \mid \\ &
	(\RNFnext \LTLfpFormula) \mid (\LTLfpFormula_1 \RNFuntil \LTLfpFormula_2) \mid (\RNFfut \LTLfpFormula) \mid (\RNFglobFut \LTLfpFormula) \mid \\ &
	(\RNFprev \LTLfpFormula) \mid (\LTLfpFormula_1 \RNFsince \LTLfpFormula_2) \mid (\RNFpast \LTLfpFormula) \mid (\RNFglobPast \LTLfpFormula) \\
\end{flalign*}
We may omit parentheses when the operator precedence intuitively follows from the expression.
Given $\{\taska, \taskb\} \subseteq \LogAlph$, e.g., the following is a well-formed \gls{ltlf} formula:
${(\RNFnext\lnot\taska)\RNFuntil\taskb}$. 

Semantics of \gls{ltlf} is given in terms of finite \glspl{evttrace}, i.e., finite words over the alphabet $2^\LogAlph$.
We name the index of the element in the trace as \emph{instant}.
Intuitively, 
$\RNFtrue$ and $\RNFfalse$ denote truth and falsity,
$\lnot\LTLfpFormula$ means that $\LTLfpFormula$ does not hold true,
$\LTLfpFormula_1 \land \LTLfpFormula_2$ signifies that both $\LTLfpFormula_1$ and $\LTLfpFormula_2$ hold true,
$\LTLfpFormula_1 \lor \LTLfpFormula_2$ indicates that $\LTLfpFormula_1$ or $\LTLfpFormula_2$ (or both) hold true, and
$\LTLfpFormula_1 \to \LTLfpFormula_2$ states that if $\LTLfpFormula_1$ holds true then $\LTLfpFormula_2$ must be verified (whereas if $\LTLfpFormula_1$ does \emph{not} hold true, no condition is exerted on $\LTLfpFormula_2$).
Formulae consisting of propositional symbols, constants and logical connectives are verified in a specific point in time (an \textit{instant}).
Temporal operators and constants require an evaluation of the formula on a trace of subsequent instants.
$\RNFnext \LTLfpFormula$ and $\RNFprev \LTLfpFormula$ indicate that $\LTLfpFormula$ holds true at the next and previous instant, respectively;
$\LTLfpFormula_1 \RNFuntil \LTLfpFormula_2$ states that $\LTLfpFormula_2$ will eventually hold and, until then, $\LTLfpFormula_1$ holds too;
dually, $\LTLfpFormula_1 \RNFsince \LTLfpFormula_2$ signifies that $\LTLfpFormula_2$ holds at some point and, from that instant on, $\LTLfpFormula_1$ holds too.
$\RNFfut \LTLfpFormula$ and $\RNFpast \LTLfpFormula$ mean that $\LTLfpFormula$ holds true eventually in the future, or at some instant in the past.
Finally, $\RNFglobFut \LTLfpFormula$ and $\RNFglobPast \LTLfpFormula$ express the truthness of $\LTLfpFormula$ in every instant from the current one on, and in every instant from the current one back in the trace, respectively.
We formalize the above as follows.

\medskip
Given a finite \gls{evttrace} {\EvtTrace} of length $n \in \mathbb{N}$, we write $ (\EvtTrace, \instant) \models \LTLfpFormula $ to denote that an \gls{ltlf} formula {\LTLfpFormula} is satisfied at a given instant $\instant \in \mathbb{N}$, with $ 1\leq\instant\leq\EvtTraceLength $, by induction of the following:

\begin{itemize}
	\item[$ (\EvtTrace, \instant) $] $  \models \RNFtrue $; 
$ (\EvtTrace, \instant) \nvDash \RNFfalse $; \item[$ (\EvtTrace, \instant) $] $   \models \lettera $ iff $ \lettera $ is $ \RNFtrue $ in $ \EvtTrace$ at instant $\instant$; 
	\item[$ (\EvtTrace, \instant) $] $  \models \lnot\LTLfpFormula $ iff $ (\EvtTrace, \instant) \nvDash \LTLfpFormula $; \item[$ (\EvtTrace, \instant) $] $   \models \LTLfpFormula_1\land\LTLfpFormula_2 $ iff $ (\EvtTrace, \instant) \models \LTLfpFormula_1 $ and $ (\EvtTrace, \instant) \models \LTLfpFormula_2 $; \item[$ (\EvtTrace, \instant) $] $   \models \LTLfpFormula_1\lor\LTLfpFormula_2 $ iff $ (\EvtTrace, \instant) \models \LTLfpFormula_1 $ or $ (\EvtTrace, \instant) \models \LTLfpFormula_2 $; \item[$ (\EvtTrace, \instant) $] $   \models \LTLfpFormula_1\to\LTLfpFormula_2 $ iff $ (\EvtTrace, \instant) \nvDash \LTLfpFormula_1 $ or $ (\EvtTrace, \instant) \models \LTLfpFormula_2 $; \item[$ (\EvtTrace, \instant) $] $  \models \RNFnext\LTLfpFormula $ iff $ i<\EvtTraceLength $ and $ (\EvtTrace, \instant+1) \models \LTLfpFormula $; \item[$ (\EvtTrace, \instant) $] $  \models \RNFprev\LTLfpFormula $ iff $ \instant>1 $ and $ (\EvtTrace, \instant-1) \models \LTLfpFormula $; \item[$ (\EvtTrace, \instant) $] $  \models \LTLfpFormula_1\RNFuntil\LTLfpFormula_2 $ iff there exists a $j \in \mathbb{N}$ with $ \instant\leq j\leq\EvtTraceLength $ s.t.\ $ (\EvtTrace, j) \models\LTLfpFormula_2 $ and $ (\EvtTrace, k) \models \LTLfpFormula_1 $ for every $k \in \mathbb{N}$ s.t.\ $ {\instant\leq k<j} $; \item[$ (\EvtTrace, \instant) $] $  \models \LTLfpFormula_1\RNFsince\LTLfpFormula_2 $ iff there exists a $j \in \mathbb{N}$ with $ 1\leq j\leq \instant $ s.t.\ $ (\EvtTrace, j) \models\LTLfpFormula_2 $ and $ (\EvtTrace, k) \models \LTLfpFormula_1 $ for every $k \in \mathbb{N}$ s.t.\ $ {j<k\leq \instant} $. \item[$ (\EvtTrace, \instant) $] $  \models \RNFfut\LTLfpFormula $ iff there exists a $j \in \mathbb{N}$ with $ \instant\leq j\leq\EvtTraceLength $ s.t.\ $ (\EvtTrace, j) \models\LTLfpFormula $; \item[$ (\EvtTrace, \instant) $] $  \models \RNFpast\LTLfpFormula $ iff there exists a $j \in \mathbb{N}$ with $ 1\leq j\leq \instant $ s.t.\ $ (\EvtTrace, j) \models\LTLfpFormula $; \item[$ (\EvtTrace, \instant) $] $  \models \RNFglobFut\LTLfpFormula $ iff $ (\EvtTrace, k) \models\LTLfpFormula $ for every $k \in \mathbb{N}$ s.t.\ $ {\instant \leq k \leq n} $; \item[$ (\EvtTrace, \instant) $] $  \models \RNFglobPast\LTLfpFormula $ iff $ (\EvtTrace, k) \models\LTLfpFormula $ for every $k \in \mathbb{N}$ s.t.\ $ {1 \leq k \leq \instant} $; \item[$ (\EvtTrace, \instant) $] $  \models \RNFstart $ iff $ (\EvtTrace, 1) \models\LTLfpFormula $; \item[$ (\EvtTrace, \instant) $] $  \models \RNFend $ iff $ (\EvtTrace, n) \models\LTLfpFormula $. \end{itemize}

\noindent
For example, $ (\EvtTrace, \instant) \models \taska\wedge\RNFfut\taskb$ \, (i.e., $\taska\wedge\RNFfut\taskb$ is satisfied in a trace $\EvtTrace$ of length $n$ at instant $\instant$) when the propositional atom $\taska$ holds true in $\EvtTrace$ at $\instant \leq n$ and $\taskb$ holds true at a later instant $j$ with $\instant \leq j \leq n$ in the same trace $\EvtTrace$.

\smallskip
\noindent
From the above operators, we observe the following logical equivalences:
\begin{itemize}
	\item $ \LTLfpFormula_1 \lor \LTLfpFormula_2 \equiv \lnot(\lnot \LTLfpFormula_1 \land \lnot \LTLfpFormula_2) $;
	\item $ \LTLfpFormula_1 \to \LTLfpFormula_2 \equiv \lnot \LTLfpFormula_1 \lor \LTLfpFormula_2 $;
	\item $\RNFend \equiv \lnot(\RNFnext\RNFtrue) $;
	\item $\RNFstart \equiv \lnot(\RNFprev\RNFtrue) $;
	\item $ \RNFfut\LTLfpFormula \equiv \RNFtrue\RNFuntil\LTLfpFormula $;
\item $ \RNFpast\LTLfpFormula \equiv \RNFtrue\RNFsince\LTLfpFormula $;
	\item $ \RNFglobFut\LTLfpFormula \equiv \lnot\RNFfut\lnot\LTLfpFormula $; \item $ \RNFglobPast\LTLfpFormula \equiv \lnot\RNFpast\lnot\LTLfpFormula $. \end{itemize}

\end{document}